\ifdefined\isarxiv
\documentclass[11pt]{article}
\else

\documentclass{article}
\usepackage[accepted]{tmlr}
\def\month{05}
\def\year{2026}
\def\openreview{\url{https://openreview.net/forum?id=2vNebz5r4b}}
\fi

\usepackage{amsthm}
\usepackage{amsmath}
\usepackage{amssymb}
\usepackage{graphicx}
\usepackage{mathtools}
\usepackage{enumerate}
\usepackage{enumitem}
\usepackage{footnote}
\usepackage{float}
\usepackage{xspace}
\usepackage{nicefrac}
\usepackage{wrapfig,epsfig}
\usepackage{framed}
\usepackage{url}
\usepackage{units}
\usepackage[colorlinks=true, linkcolor=blue, citecolor=blue]{hyperref}
\usepackage{algorithm}
\usepackage{subfig}
\usepackage{algpseudocode}
\usepackage{graphicx}
\usepackage{grffile}
\usepackage{xcolor}
\usepackage{epstopdf}
\usepackage{blkarray}
\usepackage{soul}
\PassOptionsToPackage{round}{natbib}

\usepackage{tikz}
\usetikzlibrary{arrows,chains,matrix,positioning,scopes}

\usepackage{bbm}
\usepackage{dsfont}

\newtheorem{theorem}{Theorem}[section]
\newtheorem{lemma}[theorem]{Lemma}
\newtheorem{definition}[theorem]{Definition}

\newtheorem{corollary}[theorem]{Corollary}

\newtheorem{assumption}[theorem]{Assumption}

\newtheorem{remark}[theorem]{Remark}

\newtheorem{problem}[theorem]{Problem}

\DeclareMathOperator{\KL}{KL}

\newcommand{\E}{\mathbb{E}}

\newcommand{\R}{\mathbb{R}}

\renewcommand{\L}{\mathcal{L}}

\DeclareFontFamily{OMX}{MnSymbolE}{}
\DeclareFontShape{OMX}{MnSymbolE}{m}{n}{
    <-6>  MnSymbolE5
   <6-7>  MnSymbolE6
   <7-8>  MnSymbolE7
   <8-9>  MnSymbolE8
   <9-10> MnSymbolE9
  <10-12> MnSymbolE10
  <12->   MnSymbolE12}{}
\DeclareSymbolFont{mnlargesymbols}{OMX}{MnSymbolE}{m}{n}
\SetSymbolFont{mnlargesymbols}{bold}{OMX}{MnSymbolE}{b}{n}
\DeclareMathDelimiter{\llangle}{\mathopen}{mnlargesymbols}{'164}{mnlargesymbols}{'164}
\DeclareMathDelimiter{\rrangle}{\mathclose}{mnlargesymbols}{'171}{mnlargesymbols}{'171}

\usepackage{prettyref}

\newcommand{\savehyperref}[2]{\texorpdfstring{\hyperref[#1]{#2}}{#2}}
\newrefformat{eq}{\savehyperref{#1}{\textup{(\ref*{#1})}}}
\newrefformat{eqn}{\savehyperref{#1}{Equation~\eqref{#1}}}
\newrefformat{lem}{\savehyperref{#1}{Lemma~\ref*{#1}}}
\newrefformat{def}{\savehyperref{#1}{Definition~\ref*{#1}}}
\newrefformat{line}{\savehyperref{#1}{line~\ref*{#1}}}
\newrefformat{thm}{\savehyperref{#1}{Theorem~\ref*{#1}}}
\newrefformat{corr}{\savehyperref{#1}{Corollary~\ref*{#1}}}
\newrefformat{cor}{\savehyperref{#1}{Corollary~\ref*{#1}}}
\newrefformat{sec}{\savehyperref{#1}{Section~\ref*{#1}}}
\newrefformat{app}{\savehyperref{#1}{Appendix~\ref*{#1}}}
\newrefformat{ap}{\savehyperref{#1}{Appendix~\ref*{#1}}}
\newrefformat{assum}{\savehyperref{#1}{Assumption~\ref*{#1}}}
\newrefformat{as}{\savehyperref{#1}{Assumption~\ref*{#1}}}
\newrefformat{ex}{\savehyperref{#1}{Example~\ref*{#1}}}
\newrefformat{fig}{\savehyperref{#1}{Figure~\ref*{#1}}}
\newrefformat{alg}{\savehyperref{#1}{Algorithm~\ref*{#1}}}
\newrefformat{rem}{\savehyperref{#1}{Remark~\ref*{#1}}}
\newrefformat{conj}{\savehyperref{#1}{Conjecture~\ref*{#1}}}
\newrefformat{prop}{\savehyperref{#1}{Proposition~\ref*{#1}}}
\newrefformat{proto}{\savehyperref{#1}{Protocol~\ref*{#1}}}
\newrefformat{prob}{\savehyperref{#1}{Problem~\ref*{#1}}}
\newrefformat{claim}{\savehyperref{#1}{Claim~\ref*{#1}}}
\newrefformat{que}{\savehyperref{#1}{Question~\ref*{#1}}}
\usepackage{booktabs}

\usepackage{fontspec}
\usepackage{tabularx}
\usepackage{tabulary}
\usepackage{threeparttable}
\allowdisplaybreaks
\ifdefined\isarxiv

\usepackage{tikz}
\usepackage{hyperref}  
\hypersetup{colorlinks=true,citecolor=blue,linkcolor=blue} 
\usetikzlibrary{arrows}
\usepackage[margin=1in]{geometry}

\else

\usepackage{microtype}
\usepackage{hyperref}
\definecolor{mydarkblue}{rgb}{0,0.08,0.45}
\hypersetup{colorlinks=true, citecolor=mydarkblue,linkcolor=mydarkblue}

\fi

\usepackage{xspace}
\DeclareRobustCommand{\proposed}{FL-Sailer\xspace}
\xspaceaddexceptions{’'}

\makeatother

\usepackage{lineno}

\usepackage{chngcntr}

\usepackage[skins,breakable]{tcolorbox}
\newtcolorbox{mybox}[1][]{enhanced jigsaw,breakable,pad at break=1mm,
  oversize,left=2mm, right=2mm,interior hidden,colframe=red,nobeforeafter=,#1}
\newtcolorbox{breakbox}[1][]{enhanced jigsaw,breakable,pad at break=1mm,
  oversize,left=1mm, right=1mm,interior hidden, borderline={0.5mm}{0mm}{black!10!white,dashed},nobeforeafter=,#1}
\usepackage{bbm}
\usepackage{dsfont}

\usepackage{multirow}
\usepackage{rotating}
\usepackage{array}
\usepackage{subfig}
\usepackage[normalem]{ulem}

\usepackage{array}
\newcolumntype{R}[1]{>{\raggedleft\arraybackslash}p{#1}}
\newcolumntype{L}[1]{>{\raggedright\arraybackslash}p{#1}}
\newcolumntype{C}[1]{>{\centering\arraybackslash}p{#1}}

\author{%
\name Guangyi Zhang$^{1}$, Yi Dai$^{1,\dagger}$, Yiyun He$^{2,\dagger}$, Junhao Liu$^{1}$\\
~\\
\normalfont
$^{1}$Department of Computer Science, University of California, Irvine\\
$^{2}$Department of Mathematics, University of California, San Diego\\
{\footnotesize $^\dagger$Equal contribution in alphabetical order.}
}

\begin{document}
\title{FL-Sailer: Efficient and Privacy-Preserving Federated Learning for Scalable Single-Cell Epigenetic Data Analysis via Adaptive Sampling}

\maketitle
\begin{abstract}
Single-cell ATAC-seq (scATAC-seq) enables high-resolution mapping of chromatin accessibility, yet privacy regulations and data size constraints hinder multi-institutional sharing. Federated learning (FL) offers a privacy-preserving alternative, but faces three fundamental barriers in scATAC-seq analysis: ultra-high dimensionality, extreme sparsity, and severe cross-institutional heterogeneity.
We propose \proposed, the first FL framework designed for scATAC-seq data. \proposed integrates two key innovations: (i) adaptive leverage score sampling, which selects biologically interpretable features while reducing dimensionality by 80\%, and (ii) an invariant VAE architecture, which disentangles biological signals from technical confounders via mutual information minimization. We provide a convergence guarantee, showing that \proposed converges to an approximate solution of the original high-dimensional problem with bounded error. Extensive experiments on synthetic and real epigenomic datasets demonstrate that \proposed not only enables previously infeasible multi-institutional collaborations but also surpasses centralized methods by leveraging adaptive sampling as an implicit regularizer to suppress technical noise. Our work establishes that federated learning, when tailored to domain-specific challenges, can become a superior paradigm for collaborative epigenomic research.

\end{abstract}

\section{Introduction}
Recent advances in single-cell epigenomic sequencing have revolutionized genomics by enabling the simultaneous molecular profiling of millions of cells \citep{fpl+21,ASHUACH2022100182,chen_decode_2021,xsh+22}. These technologies provide unprecedented resolution for characterizing cellular heterogeneity in complex tissues, advancing the foundational goals of precision medicine. Concurrently, transparent data-sharing initiatives support the integration of population-scale single-cell atlases, facilitating the dissection of disease mechanisms across diverse patient cohorts and accelerating the development of personalized diagnostic and therapeutic strategies. However, two critical challenges still impede our progress. First, genomic datasets often contain sensitive clinical information, compelling most data to remain siloed across institutions under stringent privacy regulations \citep{bonomi_privacy_2020,clayton2019law,shabani2018rules,naveed2015privacy}, which severely restricts access. Second, the immense size of single-cell data—often reaching tens of terabytes—imposes prohibitive bandwidth and storage demands, rendering cross-site transfer of high-dimensional epigenetic profiles such as scATAC-seq \citep{cda+15,hwang_single-cell_2025,girgenti} infeasible. These barriers underscore an urgent need for computational frameworks that ensure privacy protection and efficient data integration in single-cell studies \citep{bonomi_privacy_2020,pfitzner_federated_2021}.

\begin{figure}[ht]
\vspace{-2em}
\centering
\includegraphics[width=1\linewidth]{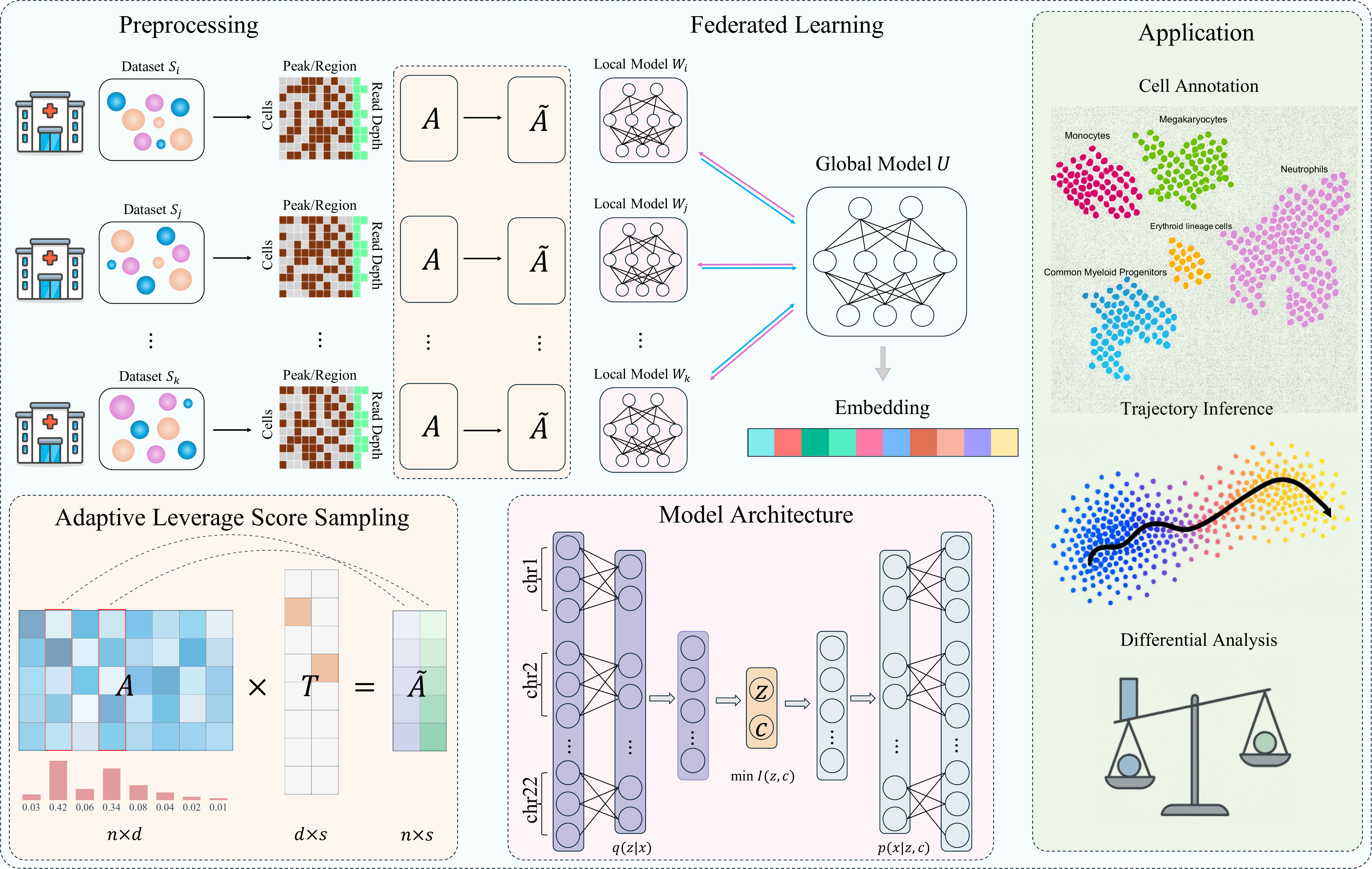}
\caption{\textbf{FL-Sailer: Federated Learning for Single-Cell Epigenomics.} 
FL-Sailer makes federated learning feasible for million-feature genomic data by jointly addressing dimensionality (adaptive leverage score sampling: $d\rightarrow s$, 80\% reduction) and heterogeneity (invariant VAE: $I(z,c)$ minimization), transforming a computationally impossible problem into a practical solution with superior performance.\\
\textbf{Pipeline:} (1) Clients subsample local $n \times d$ matrices to $n \times s$ via leverage scores; (2) Federated training of chromosome-block VAE disentangles biological signals $z$ from technical confounders $c$; (3) Unified latent space enables multi-institutional downstream discovery.}
\label{fig:architecture}
\vspace{-1em}
\end{figure}

Federated learning (FL) methods have been proposed to address these challenges by training models collaboratively without moving the raw data from their original locations. Instead of centralizing datasets, the approach distributes the learning process by sending a global model to each local site for training on its private data. Only the model updates, rather than the data itself, are transmitted back to be aggregated into an improved global model. This method not only preserves data privacy and compliance but also efficiently leverages large, distributed datasets by parallelizing the computational workload across multiple institutions. However, while FL has proven successful in non-biological domains~\citep{mmr+17,fedopt,fedbn,lstz22}, standard FL fails on collaborative single cell epigenetic data due to \textbf{three compounding barriers}: ultra-high dimensionality ($10^5$–$10^7$ features) requiring prohibitive gigabyte-scale communication, extreme sparsity ($\sim$95\%) that obscures biological signals, and severe cross-institutional heterogeneity from technical and biological confounders. Therefore, how to adapt FL algorithms to the single-cell field is still a unresolved question.

To address this, we present \proposed, a federated learning framework engineered to overcome the barriers to multi-institutional single-cell epigenomic analysis. \proposed avoids centralizing sensitive raw data and aligns with common privacy constraints in multi-institutional genomic studies. It simultaneously addresses the critical challenges of communication efficiency and model performance, preventing the degradation that typically plagues distributed learning on heterogeneous genomic datasets. This approach enables robust, scalable, and privacy-conscious collaboration, facilitating studies that were previously infeasible. The major contributions of this work are:

\begin{itemize}[leftmargin=*,topsep=0pt,itemsep=2pt]
\item \textbf{Adaptive leverage score sampling that preserves biological interpretability.} 
Our sampling module directly selects genomic regions, yielding features with direct biological interpretability while reducing communication cost. For extreme-dimensional datasets (up to 423K features), this achieves 80\% dimension reduction and orders-of-magnitude communication savings. More importantly, it actually improves clustering performance over standard methods by focusing computation on high-signal regulatory regions.

\item \textbf{Invariant representation learning for robust handling of batch effects.} 
\proposed's VAE architecture explicitly disentangles biological signals from technical confounders through mutual information minimization. This directly addresses severe batch effects common in multi-institutional data, where sequencing depth varies 50-fold and technical variations correlate with institutions. Our approach ensures meaningful biological discovery across datasets with different experimental conditions.

\item \textbf{First convergence theory for federated learning with aggressive feature sampling.} 
We establish that \proposed maintains linear convergence under extreme dimension reduction, with approximation error $O(\epsilon \cdot L(U^*))$ vanishing as $\epsilon = O(\sqrt{r/(\rho d)}) \to 0$ for high-dimensional data (Theorem~\ref{thm:main_convergence}). This extends non-convex FL theory to the previously unanalyzed regime of aggressive sampling, proving that substantial dimension reduction is both necessary for privacy-preserving genomic analysis and theoretically guaranteed.

\item  \textbf{Systematic benchmarking on diverse simulated and real datasets.} We performed extensive benchmarking to evaluate our framework. \proposed avoids centralizing raw profiles and supports privacy-conscious multi-institutional analysis, while robustly outperforming conventional centralized models. These results establish FL, when properly engineered for domain-specific constraints, as a superior paradigm for collaborative epigenomic studies.

\end{itemize}

We release \proposed as a scalable tool for processing millions of cells, enabling privacy-preserving multi-institutional epigenomic studies.
The remainder of this paper is organized as follows. Section~\ref{sec:2} reviews related work, and Section~\ref{sec:prelim} formulates the federated single-cell analysis problem. Sections~\ref{sec:4} and~\ref{sec:5} present our theoretical framework: Section~\ref{sec:4} analyzes adaptive sampling and invariant learning, while Section~\ref{sec:5} establishes end-to-end convergence guarantees for \proposed. Section~\ref{sec:6} evaluates \proposed on simulated and real-world datasets, where it enables otherwise infeasible analyses and often outperforms centralized methods. We conclude with its implications for collaborative epigenomic research.

\section{Related Work}\label{sec:2}

\subsection{The Centralized Paradigm: Established Tools for Single-Cell Epigenomics}
Single-cell epigenomic sequencing, particularly scATAC-seq, has revolutionized the study of gene regulation by profiling chromatin accessibility at unprecedented resolution \citep{bwl+15,cda+15}. To analyze these high-dimensional datasets, computational methods have evolved from linear statistical baselines (e.g., LSI, chromVAR) to deep generative models like SCALE \citep{xxt+19} and PeakVI, which capture nonlinear latent structures from sparse counts. Among these, SAILER \citep{cfw+21} represents a significant milestone, employing a chromosome-block VAE with invariant representation learning to explicitly disentangle biological signals from technical confounders. Despite its effectiveness, SAILER fundamentally relies on pooling data into a single repository. As dataset scales grow to millions of cells and privacy regulations (e.g., GDPR, HIPAA) intensify, the centralized training paradigm becomes increasingly untenable, necessitating a shift toward decentralized frameworks.

\subsection{The Federated Frontier: Decentralized Learning for Biomedical Data Analysis}
FL has emerged as the standard for collaborative model training without data sharing \citep{mmr+17}. Its utility is well-documented in biomedical domains ranging from genome-wide association studies (GWAS) \citep{Cho2025SFGWAS} to rare cancer detection \citep{pati2022federated} and multi-omics integration \citep{zcw+24}. 
Recent frameworks such as scPrivacy \citep{cdz+23} and scFed \citep{wsg+24} demonstrate the feasibility of federated learning for scRNA-seq analysis. We view them as important related works, but they were developed for a different operating regime from the one studied here. In particular, scRNA-seq inputs are typically on the order of \(10^4\) genes, whereas the scATAC-seq datasets considered in this work contain 108K--423K genomic regions with 93\%--99\% sparsity. In our setting, this order-of-magnitude increase materially changes the communication and optimization regime: model size and per-round communication scale with the number of retained peaks, making naive federated training prohibitively expensive without sampling. 

\subsection{Methodological Gaps: Dimensionality, Sparsity, and Theory}
The direct adaptation of FL to scATAC-seq is hindered by three fundamental gaps in existing literature:

\textbf{The Communication-Computation Gap.} 
Standard FL algorithms like FedAvg require $O(d)$ communication per round. With scATAC-seq features, this cost is prohibitive. While dimensionality reduction techniques such as leverage score sampling \citep{drineas2012fast} and CUR decompositions \citep{boutsidis2014optimal} are well-established, they are inherently designed for \textit{centralized} matrices. There is currently no efficient, privacy-preserving mechanism to perform importance sampling on distributed, high-dimensional genomic data without first centralizing it.

\textbf{The Heterogeneity-Sparsity Gap.} 
scATAC-seq data are characterized by extreme sparsity ($\sim 95\%$ zeros) and severe cross-institutional batch effects \citep{stuart2019comprehensive}. In this regime, local stochastic gradients exhibit high variance and misalignment. Standard aggregation strategies (e.g., weighted averaging in FedAvg) fail to correct for systematic technical confounders across clients, leading to model divergence or the learning of spurious batch-specific features \citep{scaffold}.

\textbf{The Theoretical Gap.} 
Although recent works have extended FL convergence theory to non-convex settings under relaxed assumptions \citep{lstz22}, these analyses presume that full-gradient transmission is feasible. They do not account for the aggressive feature selection required for genomic scalability. To our knowledge, no prior work provides convergence guarantees for FL that simultaneously addresses ultra-high dimensionality, sparsity, and non-IID heterogeneity—a theoretical void this work aims to fill.

\section{Overview and Problem Setup}\label{sec:prelim}

We introduce \proposed, a federated learning framework that facilitates privacy-preserving, collaborative analysis of ultra-high-dimensional single-cell epigenomic data. As illustrated in Figure~\ref{fig:architecture}, the \proposed pipeline is specifically designed to overcome the fundamental challenge of computational feasibility in FL for genomic data comprising millions of features, while simultaneously ensuring the preservation of critical biological signals.

\subsection{Problem Formulation and Notation}

We consider $N$ institutions, each holding private scATAC-seq data represented as binary matrices $A_i \in \{0,1\}^{n_i\times d}$, where $d \in [10^5, 10^7]$ denotes the ultra-high dimensionality of genomic regions. The conceptual stacked matrix $A = [A_1; \dots ; A_N]$ encapsulates all distributed data, with rows representing cells and columns representing genomic features. Each cell on client $i$ is characterized by a pair $(x_{ij}, c_{ij})$, where $x_{i,j} := \\ A_i[j,:]\in \{0,1\}^d$ represents the chromatin accessibility profile, and $c_{i,j} \in \mathcal{C}$ denotes technical confounders (e.g., sequencing depth, batch identifier, or institution-specific effects).

Let $[n] = \{1, 2, \ldots, n\}$. For vectors, $\|\cdot\|_2$ denotes the $\ell_2$ norm and $\|\cdot\|_0$ the number of non-zero entries. For matrices, $\|\cdot\|$ denotes the spectral norm and $(\cdot)^\dagger$ the Moore-Penrose pseudoinverse. Throughout this paper, $W$ denotes model parameters in general mathematical statements, while $U$ specifically represents global model parameters in federated algorithms. Tilde notation (e.g., $\tilde{U}$) indicates quantities in the dimensionally-reduced space after feature sampling, with $T$ denoting the sampling transformation matrix.

\subsection{Centralized task and objective}\label{sec:3centralized}
We first formalize the learning task in the centralized setting, where all scATAC-seq data are accessible in a single matrix $A \in \{0,1\}^{n \times d}$. Our goal is to learn an encoder $q_\theta(z|x)$ that maps a cell's high-dimensional accessibility profile $x$ to a low-dimensional, biologically meaningful latent representation $z$, while being invariant to technical confounders $c$ (e.g., sequencing batch or depth). A conditional decoder $p_\phi(x|z, c)$ is simultaneously learned to reconstruct the input.
This is achieved by minimizing the following objective, adapted from the SAILER framework \citep{cfw+21}:
\begin{align*}
\label{eq:centralized-objective}
\mathcal{L}(\theta, \phi) = \mathcal{L}_{\text{prior}} + \lambda \mathcal{L}_{\text{marginal}} + (1+\lambda) \mathcal{L}_{\text{recon}}
\end{align*}
where the marginal term upper-bounds $I(z; c)$ and the reconstruction term uses a likelihood suitable for sparse/binary counts. 
At inference, we use the posterior mean of $q_{\theta}(z|x)$ as the cell embedding; for imputation we decode with $c$ fixed to reference values. 

\subsection{Federated task and assumptions}
While the centralized objective in Section~\ref{sec:3centralized} provides a foundational formulation, it is predicated on direct access to the entire dataset $A$, which is infeasible under the data privacy requirements of multi-institutional collaborations.

\begin{problem}[Federated scATAC-seq Analysis] 
Our goal is to learn a shared model $U$ that minimizes a global objective function, which is the average of the local objectives from all participating clients:
\begin{align}
\min_{U} \L(U) = \sum_{i=1}^N \frac{n_i}{n} \L_i(U),
\end{align}
where $\L_i(U)$ is the local objective function for client $i$, calculated over its private dataset $D_i$. The learning process is subject to three critical constraints, privacy, communication efficiency and robustness.

\end{problem}
Our privacy scope is limited to the standard cross-silo setting in which raw profiles remain local; we do not claim formal differential privacy or attack-specific guarantees in this work.

\section{Enabling Federated Learning for High-Dimensional Single-Cell Data: Theoretical Foundations}\label{sec:4}

\noindent\textbf{Overview.} This section develops the theoretical ingredients that make federated single-cell epigenomic analysis both feasible and robust. In Section~\ref{sec:4lev} we show that adaptive leverage-score sampling yields a near-isometric subspace embedding for ultra-high-dimensional scATAC-seq matrices, reducing per-round communication from $O(d)$ to $O(s)$ with $s=\Theta(r\log(r/\delta)/\varepsilon^2)$ while preserving optimization geometry and biological structure. In Section~\ref{sec:4upper}, we adopt an adapted invariant representation for VAE \citep{cfw+21} and restate the variational bound and resulting objective under our federated, communication-constrained setting. These components set up the end-to-end convergence analysis in Section~\ref{sec:5}.

\subsection{Adaptive Feature Selection for Communication Efficiency}\label{sec:4lev}
To make FL computationally feasible for scATAC-seq with millions of features, we introduce an efficient sampling strategy based on leverage scores that preserves essential biological structure while dramatically reducing communication costs.

\begin{lemma}[Feature Leverage Score Sampling]\label{lem:feature_lev_score}
Let $A \in \R^{n \times d}$ have $\mathrm{rank}(A) = r \leq \min\{n,d\}$. For each $j \in [d]$, define the column leverage score as
\begin{align*}
\ell_j^{\mathrm{col}} \;:= a_{:,j}^\top \bigl(A A^\top\bigr)^\dagger\, a_{:,j} \geq 0,
\end{align*}
where $a_{:,j}\in \R^n$ is the $j$-th column of $A$. Note that $\sum_{j=1}^d \ell_j^{\mathrm{col}} = r$. Define the sampling probabilities $p_j=\ell_j^{\mathrm{col}}/r$.

Let $s$ be the number of columns to sample. We select a subset of indices $\mathcal{S} = \{j_1, \dots, j_s\} \subset [d]$ via weighted sampling without replacement, where the sampling probabilities are proportional to $\{p_j\}_{j=1}^d$. Let $T \in \mathbb{R}^{d \times s}$ be the sampling and rescaling matrix. For each $m \in \{1, \dots, s\}$, the $m$-th column of $T$ corresponds to the index $j_m \in \mathcal{S}$. Specifically, $T$ is defined by the non-zero entries $T_{j_m, m} = 1/\sqrt{s \cdot p_{j_m}}$, with all other entries set to zero. The sketched matrix is then constructed as $\tilde{A} = AT \in \mathbb{R}^{n \times s}$.

For any $\epsilon \in (0,1)$ and $\delta\in (0,1/2)$, if we choose $s = \Theta (r\,\log(r/\delta)/\epsilon^2)$,
then with probability $1-\delta$, the following holds:
\begin{align*}
(1-\epsilon)\,A A^\top 
\;\preceq\; 
\tilde{A} \tilde{A}^\top 
\;\preceq\; 
(1+\epsilon)\,A A^\top.
\end{align*}
In other words, with probability $1-\delta$, for all $y \in \R^n$,
\begin{align*}
(1-\epsilon)\,\|A^\top y\|_2^2 
~\leq~
\|(A\,T)^\top\,y\|_2^2 
~\leq~
(1+\epsilon)\,\|A^\top y\|_2^2.
\end{align*}

\end{lemma}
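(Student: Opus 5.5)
The plan is to reduce the statement to a matrix Chernoff bound for a sum of independent rank-one positive semidefinite matrices. The main obstacle is the \emph{without replacement} phrasing, which breaks independence; I address it in the last paragraph.

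\textbf{Reduction to a row-space isometry.} Take a thin SVD $A = U_A \Sigma V^\top$ with $U_A \in \R^{n\times r}$, $\Sigma \in \R^{r\times r}$ invertible, and $V \in \R^{d\times r}$ with orthonormal columns. Then $(AA^\top)^\dagger = U_A\Sigma^{-2}U_A^\top$, so $\ell_j^{\mathrm{col}} = \|v_j\|_2^2$, where $v_j^\top$ is the $j$-th row of $V$. In particular $\sum_j \ell_j^{\mathrm{col}} = \|V\|_F^2 = r$, which checks the stated identity.

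Since $\tilde A\tilde A^\top = U_A\Sigma\,(V^\top T T^\top V)\,\Sigma U_A^\top$ and $AA^\top = U_A\Sigma^2U_A^\top$, it suffices to prove
\[
\|V^\top T T^\top V - I_r\| \le \epsilon .
\]
Conjugating by $U_A\Sigma$ then yields both Loewner inequalities. The vector form follows by evaluating the quadratic forms at $y$. For $y$ orthogonal to the column space of $U_A$ both sides vanish, so the inequality holds there trivially.

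\textbf{Matrix Chernoff step.} First analyze i.i.d.\ sampling with replacement: draw $j_1,\dots,j_s$ independently with probabilities $p_j=\|v_j\|^2/r$. Write
\[
V^\top T T^\top V = \sum_{m=1}^s X_m, \qquad X_m = \frac{v_{j_m} v_{j_m}^\top}{s\,p_{j_m}} .
\]
Each $X_m$ is positive semidefinite and satisfies $\E[X_m] = I_r/s$. The key point of leverage-score sampling is the bound
\[
\|X_m\| = \frac{\|v_{j_m}\|^2}{s\,\|v_{j_m}\|^2/r} = \frac{r}{s}
\]
uniformly. Apply the matrix Chernoff inequality (Tropp) with $\mu_{\min}=\mu_{\max}=1$ and $R=r/s$:
\[
\Pr\bigl[\lambda_{\max}\ge 1+\epsilon \ \text{or}\ \lambda_{\min}\le 1-\epsilon\bigr] \le 2r\exp\!\left(-\frac{c\,\epsilon^2 s}{r}\right).
\]
Matrix Bernstein would work equally well, with variance proxy $\|\sum_m \E X_m^2\|\le r/s$. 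Choosing $s = C\,r\log(r/\delta)/\epsilon^2$ makes the right-hand side at most $\delta$. Zero-leverage columns are never sampled, so $T$ is well defined.

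\textbf{Passing to without replacement (main obstacle).} The statement specifies sampling without replacement, so the summands are not independent. I would first invoke Tropp's remark, following Gross--Nesme and Hoeffding's classical argument, that matrix Chernoff and Bernstein bounds for sampling without replacement are dominated by those for the with-replacement analogue. This is via the convex-ordering comparison, since the trace-mgf $\E\,\mathrm{tr}\exp(\theta\sum X_m)$ is a convex function of the sample.

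The catch is that this comparison is cleanest for uniform sampling. For weighted sampling the natural fallback is to interpret the procedure as with-replacement draws whose duplicates are merged, with the rescaling kept as $1/\sqrt{s p_j}$ times the multiplicity. Equivalently, $\tilde A$ has at most $s$ distinct columns and $\tilde A\tilde A^\top$ is unchanged. The lemma is then exactly the with-replacement result above. I would state this interpretation explicitly, since with the literal rescaling $1/\sqrt{s p_{j_m}}$ the without-replacement estimator is not unbiased in general.
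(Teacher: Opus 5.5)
Your with-replacement argument (reduction to $\|V^\top TT^\top V-I_r\|\le\epsilon$, rank-one summands of norm exactly $r/s$, then matrix Chernoff or Bernstein) is the standard leverage-score proof. It is also the only argument the paper relies on: the paper does not prove the lemma. It cites the classical with-replacement results of Drineas et al.\ and Boutsidis et al., and asserts via Gross--Nesme's Theorem~1 that the matrix tail bounds survive sampling without replacement. You depart from the paper exactly at that transfer step, and there you are right and the paper is not. Gross--Nesme, like Hoeffding's comparison, covers drawing \emph{uniformly} without replacement from a fixed finite collection. It says nothing about successive weighted sampling with $1/(sp_j)$ weights, and in fact the lemma as literally stated is false.

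Take $n=r=2$, $d>s$, and $A=[\,e_1,\ (d-1)^{-1/2}e_2,\ \dots,\ (d-1)^{-1/2}e_2\,]$. Then $AA^\top=I_2$, $\ell^{\mathrm{col}}_1=1$, $p_1=1/2$, and $p_j=1/(2(d-1))$ for $j\ge2$. So every sampled column contributes exactly $\tfrac{2}{s}e_1e_1^\top$ or $\tfrac{2}{s}e_2e_2^\top$. Because column $1$ can be drawn at most once, the $e_1e_1^\top$ coefficient of $\tilde A\tilde A^\top$ is at most $2/s$, and the $e_2e_2^\top$ coefficient is at least $2(s-1)/s$. Both Loewner bounds therefore fail deterministically once $s>2/(1-\epsilon)$, which holds for $s=\Theta(r\log(r/\delta)/\epsilon^2)$ when $\epsilon$ is small. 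In general, any column with $sp_j>1$ is undercounted, and the identity $\mathrm{tr}(V^\top TT^\top V)=r$ forces the missing mass into other directions. Even when all $sp_j\le 1$, the inclusion probabilities of successive sampling differ from $sp_j$ unless the $p_j$ are uniform.

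So your reinterpretation is not a fallback but a necessary repair of the statement, and with it your proof is complete. One correction: if index $j$ is drawn $k_j$ times, the merged column must carry the factor $\sqrt{k_j/(sp_j)}$, not $k_j/\sqrt{sp_j}$. The multiplicity enters $\tilde A\tilde A^\top$ linearly, so it enters the column scale as a square root.

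If you want a statement with genuinely distinct columns, which is what the paper's biological motivation asks for, use independent Bernoulli sampling instead. Keep column $j$ with probability $\pi_j=\min\{1,\,C\,\ell^{\mathrm{col}}_j\log(r/\delta)/\epsilon^2\}$ and rescale by $1/\sqrt{\pi_j}$. Columns with $\pi_j=1$ then contribute deterministically. Each random summand has norm at most $\epsilon^2/(C\log(r/\delta))$, and the matrix variance is bounded by the same quantity. Matrix Bernstein then gives $\|V^\top TT^\top V-I_r\|\le\epsilon$ with probability $1-\delta$, using $O(r\log(r/\delta)/\epsilon^2)$ columns in expectation.
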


\begin{remark}
While classical leverage score sampling typically employs sampling with replacement to guarantee spectral approximation \citep{drineas2012fast,boutsidis2014optimal}, we adopt a sampling without replacement strategy. In the context of scATAC-seq, features correspond to specific physical genomic coordinates. Selecting the same genomic region multiple times provides redundant biological information and hinders the interpretability of the selected marker set. By sampling without replacement, we ensure a diverse coverage of distinct regulatory elements. Theoretically, this modification is justified by \cite[Theorem 1]{gross2010note}, who established that the Matrix Hoeffding inequalities underpinning our convergence bounds remain valid under the sampling without replacement setting.
\end{remark}
Using this foundational lemma, we now establish our main result on communication efficiency:
\begin{theorem}[Communication Efficiency via Feature Leverage Score Sampling] \label{thm:feature_leverage_sampling}
Let $A \in \R^{n \times d}$ represent scATAC-seq data with $\mathrm{rank}(A) = r \leq \min\{n,d\}$. 
Following the leverage score sampling framework (Lemma~\ref{lem:feature_lev_score}), we construct
a sketched matrix $\tilde{A}=AT \in \R^{n \times s}$, s.t. the
communication cost is reduced from $O(d)$ to $O(s)$ in the federated setting, achieving a compression ratio of $O(d/s) = O(d\epsilon^2/(r\log(r/\delta)))$.
\end{theorem}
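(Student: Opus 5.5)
The argument is essentially a counting argument layered on top of Lemma~\ref{lem:feature_lev_score}, so I would proceed in three steps.

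\textbf{Step 1: the sketch preserves the relevant geometry.} Invoke Lemma~\ref{lem:feature_lev_score} with the chosen $\epsilon,\delta$ and $s=\Theta(r\log(r/\delta)/\epsilon^2)$. This gives, with probability $1-\delta$, the spectral sandwich
\begin{align*}
(1-\epsilon)AA^\top \preceq \tilde A\tilde A^\top \preceq (1+\epsilon)AA^\top .
\end{align*}
So $\tilde A=AT$ is a valid surrogate for $A$: its row-space Gram structure matches that of $A$ up to a factor $1\pm\epsilon$. This is the only probabilistic ingredient, and it justifies replacing $A$ by $\tilde A$ in every client computation.

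\textbf{Step 2: per-round communication is $O(s)$.} Here I would make the federated protocol explicit. Once the index set $\mathcal S$ and the rescalings $1/\sqrt{s p_{j_m}}$ are fixed, each client $i$ forms $\tilde A_i=A_iT\in\R^{n_i\times s}$ locally, since $T$ acts only on columns. Because $A=[A_1;\dots;A_N]$ is stacked by rows, this gives $\tilde A=[\tilde A_1;\dots;\tilde A_N]$ with no data exchange. The model's input-facing layer then has an input dimension of $s$ instead of $d$, so its parameter block shrinks from a hidden width times $d$ to a hidden width times $s$. All other parameters are independent of $d$. Each transmitted update $\tilde U$ therefore has size $O(s)$ rather than $O(d)$, treating architecture widths as constants. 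The one-time cost of broadcasting $\mathcal S$ and the weights is $O(s)$ and is amortized over rounds.

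\textbf{Step 3: the compression ratio.} This is pure arithmetic:
\begin{align*}
\frac{O(d)}{O(s)} = O\!\left(\frac{d\,\epsilon^2}{r\log(r/\delta)}\right).
\end{align*}

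\textbf{Main obstacle.} The delicate point is the setup of Step 2, namely computing the leverage scores $\ell_j^{\mathrm{col}}=a_{:,j}^\top(AA^\top)^\dagger a_{:,j}$ without centralizing $A$. These scores depend on the pooled Gram matrix $AA^\top$, which couples all clients. I would handle this in one of two ways. The first is to treat the scores as computed once in a preprocessing phase, for instance from aggregated low-rank summaries, and exclude them from the per-round count. The second is to note that approximate leverage scores, within a constant factor, only change $s$ by a constant factor, so the $\Theta(\cdot)$ bound still holds. Either way, the theorem's claim concerns the asymptotic per-round cost, and that follows once this preprocessing cost is bounded or amortized.
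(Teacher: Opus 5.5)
Your proposal is correct and follows the same route as the paper, which gives no separate proof and treats the theorem as an immediate consequence of Lemma~\ref{lem:feature_lev_score}: per-round traffic is proportional to model size, model size is linear in the retained feature dimension, and $d/s$ with $s=\Theta(r\log(r/\delta)/\epsilon^2)$ gives the ratio, while the one-time $O(d)$ leverage-score exchange in Phase~1 of Algorithm~\ref{alg:fl-sailer} stays outside the per-round count, as you treat it. One correction to Step~2: the parameters other than the encoder's input layer are not all independent of $d$, because the decoder's output layer reconstructs $x$ and also shrinks from width $d$ to width $s$; this only strengthens your $O(s)$ conclusion (compare the near-linear dependence of the parameter counts on $\rho$ in Table~\ref{table:comm-eff}).
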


Crucially, the leverage score sampling preserves both the optimization geometry and gradient information in the sampled subspace. Specifically, the solution in the sampled space provides an approximation to the original high-dimensional problem, with bounded reconstruction error and preserved gradient norms (see Appendix~\ref{app:sampling_properties} for detailed characterization).

\paragraph{Biological structure is preserved.}
Our leverage score sampling maintains key biological signals while reducing dimensionality. 
Formally, the sampling preserves (i) cell-type separability, (ii) clustering structure, and 
(iii) provides theoretical guarantees for selecting biologically informative genomic regions.
We defer the formal statement and proof to Appendix~\ref{proof:signal_preservation}.
This ensures that restricting optimization and communication to the sampled subspace does not destroy
biologically meaningful geometry.

\subsection{Invariant Representation Learning with Communication Constraints}\label{sec:4upper}

Having addressed communication efficiency, we now tackle client heterogeneity. We adapt an invariant VAE architecture \citep{cfw+21} to learn representations robust to technical confounders $c$ (e.g., sequencing depth, batch effects) while preserving biological signals. 

Our objective minimizes the mutual information $I(z; c)$ between latent representations $z$ and confounders $c$, ensuring $z$ encodes only technically invariant information. By minimizing a tractable variational upper bound (Appendix~\ref{app:vae_derivations}), we obtain the federated optimization objective:

\begin{corollary}[Optimization Objective, \citep{cfw+21}]\label{cor:opt_objective}
The complete loss function for invariant representation learning, with parameters $W=(\theta, \phi)$, can be expressed as a sum of three key components: 
\begin{align}
\mathcal{L}(W) = \mathcal{L}_{\text{prior}}(W) + \lambda \mathcal{L}_{\text{marginal}}(W) + (1+\lambda)\mathcal{L}_{\text{recon}}(W)
\end{align}
where the components are defined as:
\begin{itemize}[leftmargin=*, topsep=0pt,itemsep=2pt]
    \item {Prior KL Divergence:} $\mathcal{L}_{\text{prior}}(W) = \mathbb{E}_{x \sim q(x)}[\text{KL}[q_\theta(z|x)\|p(z)]]$ enforces that posterior distributions remain close to the prior $p(z)$, preventing latent collapse.
    \item {Marginal KL (Invariance) Term:} $\mathcal{L}_{\text{marginal}}(W) = \mathbb{E}_{x \sim q(x)}[\text{KL}[q_\theta(z|x)\|q_\theta(z)]]$  
    minimizes the mutual information $I(z; x)$, which serves as a variational upper bound for $I(z; c)$ to suppress technical confounders via information bottleneck regularization.
    \item {Reconstruction Loss:} $\mathcal{L}_{\text{recon}}(W) = -\mathbb{E}_{x,c}[\mathbb{E}_{z \sim q_\theta(z|x)}[\log p_\phi(x|z,c)]]$ ensures reconstruction fidelity through conditional decoding, with coefficient $(1 + \lambda)$ balancing invariance and reconstruction quality.
\end{itemize}
\end{corollary}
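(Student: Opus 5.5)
We obtain the objective in Corollary~\ref{cor:opt_objective} by decomposing the variational upper bound on the reconstruction-regularized information bottleneck used in SAILER, following \citep{cfw+21}. The plan has three steps. First, bound $I(z;c)$. Second, add the reconstruction term and rewrite everything as expectations over $x \sim q(x)$ of KL divergences. Third, collect coefficients. Throughout, $q(x)$ is the empirical data distribution, $q_\theta(z|x)$ the encoder, and $q_\theta(z)=\mathbb{E}_{x\sim q(x)}[q_\theta(z|x)]$ the aggregate posterior.

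\textbf{Step 1 (invariance bound).} Since $z$ is generated from $x$ alone, the chain $c \to x \to z$ is Markov. The data processing inequality then gives $I(z;c)\le I(z;x)$. We use the standard identity
\begin{align*}
I(z;x)=\mathbb{E}_{x\sim q(x)}\bigl[\mathrm{KL}[q_\theta(z|x)\,\|\,q_\theta(z)]\bigr]=\mathcal{L}_{\text{marginal}}(W),
\end{align*}
which follows by writing the joint $q(x)q_\theta(z|x)$ and dividing by the product of marginals. This justifies calling $\mathcal{L}_{\text{marginal}}$ an upper bound on $I(z;c)$.

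Following \citep{cfw+21}, we sharpen this by subtracting the information $z$ carries about $x$ given $c$. We write $I(z;c)=I(z;x)-I(z;x\mid c)$, valid under the Markov chain. We then lower bound $I(z;x\mid c)\ge H(x|c)+\mathbb{E}_{x,c}\mathbb{E}_{z\sim q_\theta(z|x)}[\log p_\phi(x|z,c)]$, which is the usual Barber--Agakov variational bound with decoder $p_\phi$. The entropy $H(x|c)$ does not depend on $W$, so it can be dropped. This yields $I(z;c)\le \mathcal{L}_{\text{marginal}}+\mathcal{L}_{\text{recon}}+\text{const}$.

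\textbf{Step 2 (assemble the penalized objective).} The base VAE objective is the negative ELBO, $\mathcal{L}_{\text{recon}}+\mathcal{L}_{\text{prior}}$, where $\mathcal{L}_{\text{prior}}=\mathbb{E}_x\mathrm{KL}[q_\theta(z|x)\|p(z)]$ comes from the standard ELBO decomposition. We add $\lambda$ times the Step 1 bound as a penalty and obtain
\begin{align*}
\mathcal{L}_{\text{prior}}+\mathcal{L}_{\text{recon}}+\lambda\bigl(\mathcal{L}_{\text{marginal}}+\mathcal{L}_{\text{recon}}\bigr)
=\mathcal{L}_{\text{prior}}+\lambda\mathcal{L}_{\text{marginal}}+(1+\lambda)\mathcal{L}_{\text{recon}},
\end{align*}
up to an additive constant independent of $W$. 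This is exactly the claimed expression, and the $(1+\lambda)$ coefficient arises because the reconstruction term appears once in the ELBO and once in the invariance bound.

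\textbf{Main obstacle.} The delicate step is the conditional-MI lower bound in Step 1. We must check that $q(x|z,c)$ is well defined under the joint $q(x,c)q_\theta(z|x)$, and that $\mathrm{KL}[q(x|z,c)\|p_\phi(x|z,c)]\ge0$ yields the inequality with the correct sign. We also need the decomposition $I(z;c)=I(z;x)-I(z;x|c)$, which relies on $I(z;c\mid x)=0$. That holds precisely because the encoder conditions only on $x$. Once these are verified, the remaining identities are routine manipulations of KL divergences, which we would defer to Appendix~\ref{app:vae_derivations}.
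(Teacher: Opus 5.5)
Your proposal is correct and follows the paper's route in Appendix~\ref{app:vae_derivations}. The shared steps are the decomposition $I(z;c)=I(z;x)-I(z;x\mid c)$ from $I(z;c\mid x)=0$, the identification $I(z;x)=\mathcal{L}_{\text{marginal}}$, and the bound $H(x\mid z,c)\le -\mathbb{E}[\log p_\phi(x\mid z,c)]$ from non-negativity of $\mathrm{KL}[q(x\mid z,c)\,\|\,p_\phi(x\mid z,c)]$. Your Step~2 (negative ELBO plus $\lambda$ times this bound, which produces the $(1+\lambda)$ coefficient) spells out an assembly the paper leaves implicit in its citation of SAILER.
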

This objective facilitates federated training by allowing clients to learn a unified biological representation $z$, resilient to local technical variations, thus enabling effective model aggregation. Full derivations and regularity assumptions are deferred to Appendices~\ref{app:vae_derivations} and~\ref{sec:appendix_reg}.

\begin{algorithm}[ht]
\caption{FL-Sailer: Federated Learning with Column Leverage Score Sampling}
\label{alg:fl-sailer}
\centering
\begin{algorithmic}[1]
\Require Client datasets $\{\mathcal{D}_i\}_{i=1}^N$ with matrices $A_i \in \mathbb{R}^{n_i \times d}$; rounds $R$; sampling rate $\rho$; sketch size $s_k$
\Ensure Global model $U^{R}$
\end{algorithmic}
\begin{minipage}[t]{0.54\linewidth}
\begin{algorithmic}[1]
\Statex \textbf{Phase 1: Federated Dynamic Feature Selection}
\ForAll{$i \in [N]$ \textbf{in parallel}}
    \State {Generate $\Omega_i \sim \mathcal{N}(0,1)^{s_k \times n_i}$}
    \State {$B_i \gets \Omega_i A_i \in \mathbb{R}^{s_k \times d}$} 
    \State {Compute Thin QR: $B_i^\top = Q_i R_i$} \Comment{$Q_i \in \mathbb{R}^{d \times s_k}$}
    \State $\hat{\ell}_i^{\mathrm{col}}[j] \gets \|(Q_i)_{j,:}\|_2^2,\ \forall j \in [d]$ 
    \State Send $(\hat{\ell}_i^{\mathrm{col}}, n_i)$ to server
\EndFor

\Statex \textbf{Server: Aggregate and sample}
\State $n \gets \sum_{i=1}^N n_i$
\State $\hat{\ell}^{\mathrm{global}}[j] \gets \sum_{i=1}^N \frac{n_i}{n}\,\hat{\ell}_i^{\mathrm{col}}[j],\ \forall j$
\State $p_j \gets \hat{\ell}^{\mathrm{global}}[j]\big/ \sum_{j'} \hat{\ell}^{\mathrm{global}}[j']$
\State $s \gets \lfloor \rho d \rfloor$
\State {Sample $\mathcal{S} \subset [d]$ of size $s$ using $p$ without replacement}
\State Broadcast $\mathcal{S}$ to all clients
\end{algorithmic}
\end{minipage}\hfill
\begin{minipage}[t]{0.45\linewidth}
\begin{algorithmic}[1]
\Statex \textbf{Phase 2: Subspace Federated Training}
\For{round $t = 1, \ldots, R$}
    \State Server broadcasts $U^{t-1}$
    \For{client $i = 1, \ldots, N$ \textbf{in parallel}}
        \State {Construct $\tilde{A}_i$ by selecting columns $\mathcal{S}$}
        \State $W_{i,0}^{t} \gets U^{t-1}$
        \For{local step $k = 1, \ldots, K$}
            \State $W_{i,k}^{t} \gets W_{i,k-1}^{t} - \eta_l \nabla \tilde{\mathcal{L}}_i(W_{i,k-1}^{t})$
        \EndFor
        \State Send $\Delta W_i^{t} \gets W_{i,K}^{t} - U^{t-1}$ to server
    \EndFor
    \State $U^{t} \gets U^{t-1} + \sum_{i=1}^N \frac{n_i}{n}\, \Delta W_i^{t}$
\EndFor
\State \Return $U^{(R)}$
\end{algorithmic}
\end{minipage}
\end{algorithm}
\subsection{Algorithm Overview}
To address these fundamental barriers, we introduce \proposed, a federated learning framework that leverages the inherent low-rank structure of single-cell genomic data to overcome the challenges of ultra-high dimensionality, extreme sparsity, and institutional heterogeneity.
\paragraph{Phase 1: Federated Dynamic Feature Selection}
Each client computes local column leverage scores on ultra high dimensional scATAC-seq data using randomized QR decomposition, providing a communication-efficient alternative to exact SVD while maintaining theoretical guarantees. The server aggregates these scores to construct a global feature sampling distribution that prioritizes biologically informative genomic regions, reducing per-round communication from $O(d)$ to $O(s)$ with $s = \Theta(r \log(r/\delta) / \epsilon^2)$.
\paragraph{Phase 2: Subspace Federated Training}
Clients sample features according to the global distribution and collaboratively train an invariant VAE within the reduced subspace. Our approach combines a chromosome-block architecture with explicit mutual information minimization to disentangle biological signals from technical confounders, enabling robust model aggregation across heterogeneous institutions while preserving privacy.

\textbf{Federated Implementation.} In Algorithm 1, clients compute approximate leverage scores locally via randomized projection and transmit only these $O(d)$ scalars. The server aggregates them weighted by sample size to prioritize features that are informative across the federation. This approach exploits the shared genomic coordinate system inherent to scATAC-seq, where feature informativeness reflects conserved regulatory biology rather than institution-specific artifacts.

\section{Convergence in the Sampled Subspace}\label{sec:5}
\noindent\textbf{Overview.} This section establishes the end-to-end convergence guarantee for \proposed. We build on the convergence result from \cite{lstz22}, which extends convergence to nonconvex objectives under relaxed assumptions. We show that our invariant VAE objective satisfies these assumptions in the sampled subspace. Then we incorporate the subspace embedding guarantees (Section~\ref{sec:4}) to prove that the sampled problem closely approximates the original problem, yielding explicit convergence rates with bounded optimization and approximation errors.
\subsection{Key Assumptions and Preliminaries}
Our convergence analysis builds on the foundational work of \citet{lstz22}, who extended FedAvg convergence theory to non-convex settings via relaxed smoothness conditions. Rather than requiring global smoothness, they established that FedAvg converges at a linear rate up to a variance‑dependent error floor under three relaxed local regularity conditions: $(a,b)$-semi-smoothness allows gradient errors scaled by function value, $(\alpha,\beta)$-semi-Lipschitz bounds gradient differences, and $(\tau_1,\tau_2)$-non-critical-point ensures progress near optima. The formal assumptions and the full FedAvg convergence result are provided in Appendix~\ref{app:lstz22}.

To apply this theory to \proposed, we establish the smoothness-related conditions for the sampled objective and adopt the standard local non-critical-point condition used in the underlying FedAvg theory. In Appendix~\ref{proof:VAE}, we prove that the VAE loss inherits the necessary semi-smoothness properties from the original loss, enabling the use of FedAvg in our setting. Additionally, the leverage score sampling (Section 4.1) ensures that the optimization geometry is preserved in the sampled subspace, as formalized in Appendix~\ref{proof:approx_quality}. These foundations allow us to establish end-to-end convergence for \proposed in Section~\ref{sec:5conv}.

\subsection{Convergence of \proposed}\label{sec:5conv}

The convergence of \proposed is established through a two-part argument: (1) convergence to the sampled problem's solution via FL, and (2) bounding the approximation error introduced by feature sampling relative to the original problem. 
\begin{theorem}[\proposed Convergence, Main Result]\label{thm:main_convergence}
Consider \proposed with $N$ clients, leverage score sampling rate $\rho = s/d$, and VAE loss satisfying Theorem~\ref{thm:vae-semi-smooth}. Let $\tilde{U}^{t}$ denote the global model at round $t$ in the sampled space. Under the conditions of Theorem~\ref{thm:fedavg-convergence}, after $R$ communication rounds:
\begin{align*}
\E[\tilde\L(\tilde{U}^{R})] - \L(U^*) 
\leq \underbrace{(1-\lambda_1)^R \Delta_0 +2\lambda_2}_{\text{FedAvg optimization error}} + \underbrace{C \cdot \epsilon \cdot \L(U^*) }_{\text{Sampling approximation error}}
\end{align*}
where:
\begin{itemize}[topsep=0pt,itemsep=2pt]
    \item $\lambda_1 = \frac{K\eta_l\eta_g}{4}(1-4bK\eta_l\eta_g - 2a)\tau_1^2$ (convergence rate)
    \item $\lambda_2 = (1+a+bK\eta_l\eta_g)\frac{K\eta_l\eta_g}{10}\sigma^2$ (variance floor)
    \item $\Delta_0 = \tilde{\L}(\tilde{U}^{0}) - \tilde{\L}(\tilde{U}^*)$ (initial gap in sampled space)
\end{itemize}
\end{theorem}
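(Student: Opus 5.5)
The plan is the two-part decomposition announced before the statement. I insert the sampled optimum $\tilde{\L}(\tilde{U}^*)$ and write
\begin{align*}
\E[\tilde\L(\tilde{U}^{R})] - \L(U^*) = \bigl(\E[\tilde\L(\tilde{U}^{R})] - \tilde{\L}(\tilde{U}^*)\bigr) + \bigl(\tilde{\L}(\tilde{U}^*) - \L(U^*)\bigr),
\end{align*}
and bound the two brackets separately. The first is a pure optimization error for FedAvg on the sampled objective $\tilde{\L}=\sum_i \frac{n_i}{n}\tilde{\L}_i$. The second is a deterministic statement, holding on the $1-\delta$ event of Lemma~\ref{lem:feature_lev_score}, that compares two optimal values.

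\textbf{Optimization term.} Theorem~\ref{thm:vae-semi-smooth} gives that the sampled VAE loss $\tilde{\L}$ is $(a,b)$-semi-smooth and $(\alpha,\beta)$-semi-Lipschitz. Together with the assumed $(\tau_1,\tau_2)$-non-critical-point condition and the variance bound $\sigma^2$ from Theorem~\ref{thm:fedavg-convergence}, this places Algorithm~\ref{alg:fl-sailer}, Phase 2, exactly in the setting of \citet{lstz22}. Here the global step size is $\eta_g=1$ in the algorithm as written, but I keep $\eta_g$ general. Theorem~\ref{thm:fedavg-convergence} gives a one-round contraction
\begin{align*}
\E[\tilde\L(\tilde U^{t})]-\tilde\L(\tilde U^*) \le (1-\lambda_1)\bigl(\E[\tilde\L(\tilde U^{t-1})]-\tilde\L(\tilde U^*)\bigr)+\lambda_2 .
\end{align*}
I unroll this over $R$ rounds and sum the geometric series $\lambda_2\sum_{t<R}(1-\lambda_1)^t\le \lambda_2/\lambda_1$. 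This yields $(1-\lambda_1)^R\Delta_0$ plus a floor. To land on the stated $2\lambda_2$, I will either use the form in which Theorem~\ref{thm:fedavg-convergence} already states the floor, or impose the step-size condition under which $\lambda_1\ge 1/2$ in the relevant normalization. I will record that condition explicitly, since it is purely bookkeeping.

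\textbf{Approximation term.} I use the subspace-embedding property $(1-\epsilon)AA^\top\preceq \tilde A\tilde A^\top\preceq(1+\epsilon)AA^\top$. The loss depends on the data only through the first-layer pre-activations $Wx$ of the encoder and through the reconstruction on the selected coordinates. For any full-space parameter $U$, I form its restriction $\tilde U = UT$ (columns of the first layer indexed by $\mathcal S$, rescaled). I then want
\begin{align*}
\tilde\L(\tilde U)\le (1+C\epsilon)\L(U).
\end{align*}
Evaluating this at $U=U^*$ and using $\tilde\L(\tilde U^*)\le\tilde\L(U^*T)$ gives $\tilde\L(\tilde U^*)-\L(U^*)\le C\epsilon\,\L(U^*)$. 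The inequality itself will come from the approximation-quality results of Appendix~\ref{proof:approx_quality}: reconstruction error and gradient norms change by at most a $(1\pm\epsilon)$ factor. Combined with the semi-Lipschitz constants, which control how a relative perturbation of the inputs propagates through the loss, this yields a multiplicative rather than additive error. The constant $C$ absorbs $\alpha,\beta$ and the $(1+\lambda)$ weighting.

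I expect this comparison step to be the main obstacle. The VAE loss is nonlinear, and the prior/marginal KL terms are not obviously controlled by a quadratic-form comparison. I will first try to bound each of the three components of Corollary~\ref{cor:opt_objective} separately. The reconstruction term goes through the embedding directly. For the two KL terms I use that they depend on $x$ only via the encoder output, which is Lipschitz in $Wx$. The norm preservation $\|(AT)^\top y\|\approx\|A^\top y\|$ applied to rows of $W$ then bounds the perturbation relative to the loss value, which the semi-smoothness of Theorem~\ref{thm:vae-semi-smooth} converts into a multiple of $\L(U^*)$. Finally I add the two bounds, carrying the probability $1-\delta$ from the sampling event, to conclude.
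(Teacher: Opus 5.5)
Your decomposition and your handling of the optimization term match the paper. One adjustment there: invoke Theorem~\ref{thm:fedavg-convergence} on $\tilde\L$ directly in its multi-round form. The per-round recursion with an additive $\lambda_2$ that you write would unroll to a floor $\lambda_2/\lambda_1$, not $2\lambda_2$. You are also right that only the one-sided bound $\tilde\L(\tilde U^*)\le(1+C\epsilon)\L(U^*)$ is needed.

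\textbf{The gap is in how you propose to obtain that one-sided bound.}
\begin{itemize}
\item With your restriction $\tilde U=U^*T$ (first-layer weight $W^*T$), the sampled encoder evaluated at $xT$ effectively feeds $xTT^\top$ into the full encoder in place of $x$. But $TT^\top$ is diagonal, zero on every unselected coordinate, so this perturbs every encoder output.
\item The embedding only gives $|u^\top TT^\top v-u^\top v|\le\epsilon\|u\|_2\|v\|_2$ for $u,v\in\mathrm{row}(A)$. Even after replacing $W^*$ by its projection onto $\mathrm{row}(A)$, the best you get is an effective parameter perturbation $\Delta$ with $\|\Delta\|\lesssim\epsilon\|W^*\|$.
\item Feeding this into $(a,b)$-semi-smoothness at $U^*$ gives an error of order $a\epsilon\|W^*\|\sqrt{\L(U^*)}+b\epsilon^2\|W^*\|^2$. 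This is additive, depends on $\|W^*\|$, and scales like $\sqrt{\L(U^*)}$ rather than $\L(U^*)$. It cannot be rewritten as $C\epsilon\L(U^*)$ with $C$ depending only on the architecture.
\item The semi-Lipschitz constants do not help: they compare gradients of one loss at two parameters, not the effect of input perturbations. The ``gradient norms change by $(1\pm\epsilon)$'' fact from Remark~\ref{rem:opt_sampling} is specific to least squares.
\end{itemize}

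\textbf{The missing idea is an exact left inverse of $T$ on the row space, used in place of $T$.}
\begin{itemize}
\item On the embedding event, $T^\top V$ has full column rank, where $V$ holds the right singular vectors of $A$. Hence $R_{\text{lift}}=V(T^\top V)^\dagger$ satisfies $(xT)R_{\text{lift}}^\top=x$ for every $x\in\mathrm{row}(A)$.
\item Define the sampled encoder as $q_\theta(z\mid\tilde x R_{\text{lift}}^\top)$. This is first-layer weight $WR_{\text{lift}}$ rather than $WT$, still in the sampled model class. Define the sampled decoder mean as $\mu_\phi(z,c)T$.
\item Then every posterior on the data is reproduced exactly. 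So is the aggregated posterior, and hence both the prior KL and the marginal KL terms.
\item Only the reconstruction term changes, via $\|(x-\mu_\phi(z,c))T\|_2^2\le(1+\epsilon)\|x-\mu_\phi(z,c)\|_2^2$ for the Gaussian decoder. This step, in the paper as in your plan, presumes the residual lies in $\mathrm{row}(A)$.
\end{itemize}
This yields $\tilde\L(\tilde U^*)\le\tilde\L(F(U^*))\le(1+\epsilon)\L(U^*)$ directly, with $C=1$ and no smoothness argument. That is Lemma~\ref{lem:approx_quality}, which the paper then combines with Part A exactly as you planned.
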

This establishes that \proposed converges to an $\epsilon$-approximate solution of the original high-dimensional problem with explicit control over both sources of error. The first term represents standard FedAvg optimization error that decays exponentially with rounds $R$, while the second term captures the fixed approximation error due to feature sampling.
Remarkably, for high-dimensional genomic data where $d \gg r$, even aggressive sampling with $\rho = 0.2$ yields small approximation error since $\epsilon = O(\sqrt{r/(\rho d)})$ vanishes as $d \to \infty$. In practice, sampling actually improves performance by acting as an implicit regularizer that aligns gradients across heterogeneous clients.

\begin{proof}[Proof Sketch]
\textbf{Part A: Convergence on the sampled problem.}
Since the sampled VAE loss $\tilde{L}$ inherits the semi-smoothness properties from the original loss (Theorem~\ref{thm:vae-semi-smooth}), we can apply Theorem~\ref{thm:fedavg-convergence} to obtain:
\begin{align*}
\E[\tilde{\L}(\tilde{U}^{R})] - \tilde{\L}(\tilde{U}^*) \leq (1-\lambda_1)^R \Delta_0 + 2\lambda_2.
\end{align*}

\textbf{Part B: Approximation quality bound.}
Apply the result from Lemma \ref{lem:approx_quality}, there is
\begin{align*}
|\tilde \L(\tilde{U}^*) - \L(U^*)| \leq C \cdot \epsilon \cdot \L(U^*).
\end{align*}

Combining Parts A and B via triangle inequality completes the proof.
\end{proof}
We defer the complete proof in Appendix~\ref{proof:main_convergence}.

\section{Experiments}\label{sec:6}
\noindent\textbf{Overview.} This section systematically validates \proposed through three critical stages. We first establish that our federated framework matches centralized performance under homogeneous conditions while demonstrating superior robustness to sequencing depth heterogeneity, confounded variations, and extreme class imbalance (Section~\ref{sec:6_simulated}). We then reveal the core breakthrough: on real-world scATAC-seq data, adaptive sampling enables FL to outperform centralized methods while achieving >80\% communication reduction (Section~\ref{sec:real_world_performance}). Finally, comprehensive benchmarking shows \proposed's domain-specific design yields state-of-the-art performance where general-purpose federated algorithms fail (Section~\ref{sec:bench_fl_alg}). These experiments establish \proposed as a superior paradigm for collaborative single-cell epigenomics.

\subsection{\proposed Solves the Problem of Statistical Heterogeneity and Client Drift}\label{sec:6_simulated}

\begin{figure}[t]
\vspace{-2em}
    \centering
    \includegraphics[width=\linewidth]{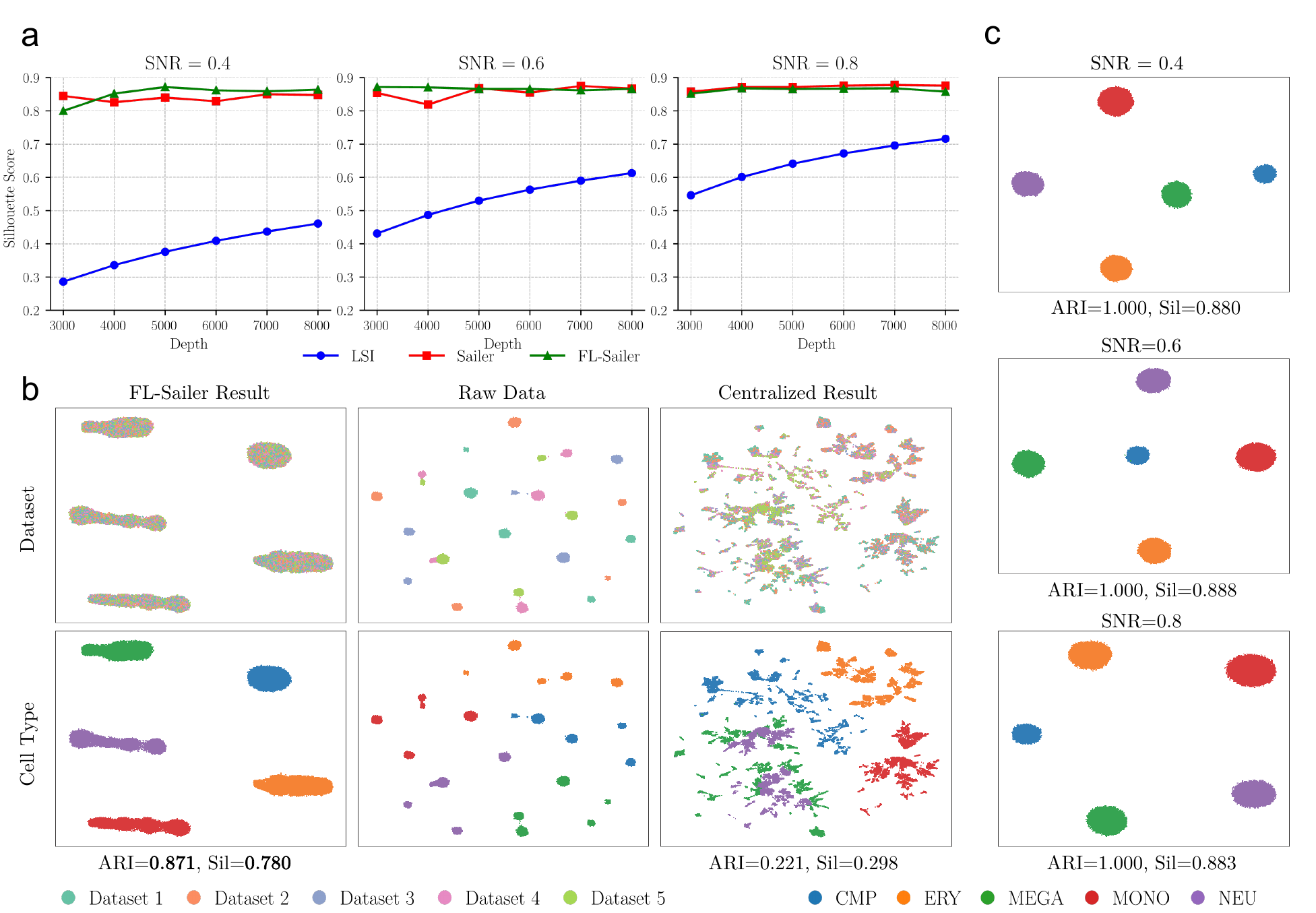}
    \caption{
        \textbf{\proposed overcomes key FL barriers on synthetic scATAC-seq dataset.}
        (a) {Performance under homogeneous conditions}: \proposed matches centralized accuracy while preserving privacy.
        (b) {Robustness to confounded heterogeneity}: Disentangles biological signals from technical noise.
        (c) {Robustness to extreme class imbalance}: Maintains rare cell population detection across SNRs.
        These results show that \proposed enables federated analysis and turns heterogeneity into a robustness advantage.
    }
    \label{fig:simulated}
    \vspace{-1em}
\end{figure}

We first validate \proposed's fundamental robustness on simulated data (SCAN-ATAC-sim~\citep{scan}; see Appendix~\ref{sec:dataset}), which allows controlled assessment against key federated challenges.

\paragraph{Performance Under Homogeneous Conditions}
Under ideal, homogeneous conditions (i.e., no batch effects or confounding factors), \proposed achieves clustering performance virtually identical to the centralized Sailer baseline (Figure~\ref{fig:simulated}a), with ARI scores of 0.950--1.000 across varying signal-to-noise ratios (SNR) and sequencing depths. This establishes that our federated framework introduces no performance degradation compared to centralized training when data is uniformly distributed, providing a crucial baseline for evaluating its advantages under heterogeneity.

\begin{table}[ht]
\centering
\caption{\textbf{Clustering performance under heterogeneous sequencing depths across clients.} Five clients possess data with varying sequencing depths (3000, 4000, 5000, 6000, and 7000 fragments per cell). 
}
\label{tab:varying_depth}
\resizebox{0.6\textwidth}{!}{
\begin{tabular}{c|cccc}
\toprule
& \multicolumn{2}{c}{\textbf{Sailer (Centralized)}} 
& \multicolumn{2}{c}{\textbf{\proposed (Federated)}} \\
\cmidrule(r){2-3}\cmidrule(l){4-5}
\textbf{SNR} & ARI & Silhouette & ARI & Silhouette \\ 
\midrule
0.4 & $0.620\pm0.002$ & $0.214\pm0.003$ & $0.858\pm0.002$ & $0.740\pm0.002$ \\
0.6 & $0.999\pm0.001$ & $0.802\pm0.001$ & $0.981\pm0.001$ & $0.846\pm0.001$ \\
0.8 & $1.000\pm0.000$ & $0.817\pm0.001$ & $1.000\pm0.000$ & $0.849\pm0.001$ \\ 
\bottomrule
\end{tabular}
}
\end{table}

\paragraph{Robustness to Heterogeneous Sequencing Depth}
\label{sec:hetero_depth}
Under heterogeneous sequencing depths across clients (3000--7000 fragments/cell), \proposed outperforms centralized training despite the latter's full data access. At low SNR (0.4), \proposed achieves ARI=0.858 versus 0.620 for centralized (38\% improvement), with superior silhouette scores (0.740 vs. 0.214; Table~\ref{tab:varying_depth}). While centralized training is confounded by mixing data of varying quality, \proposed leverages this diversity: each client learns robust features locally before aggregation, building a global model resilient to individual client noise.

\paragraph{Robustness to Confounded Heterogeneity}
\label{sec:hetero_confond}

When technical and biological variations are confounded (each client with unique SNR and cell-type distributions; Table~\ref{tab:client_data} in Appendix~\ref{app:confounded_hetero}), centralized training fails (ARI=0.221) as it learns spurious correlations between noise patterns and cell identity (Figure~\ref{fig:simulated}b). In contrast, \proposed achieves successful disentanglement through invariant representation learning ($I(z;c) \leq \epsilon$), yielding clear biological clustering (ARI=0.871)—a 3.9× improvement. This demonstrates that explicit invariance modeling, not mere data availability, enables robust biological discovery under confounded heterogeneity.

\paragraph{Robustness to Extreme Class Imbalance}
\label{sec:hetero_imbalance}

\proposed effectively preserves rare cell populations under extreme class imbalance (16-fold difference; CMP: 2.1\%, see Appendix~\ref{app:imbalance_details}). Despite severe non-IID distributions, it achieves perfect clustering (ARI=1.000) across all SNR conditions (Figure~\ref{fig:simulated}c). This capability stems from two mechanisms: (1) the invariance objective prevents majority classes from imposing local biases during aggregation, and (2) the chromosome-block architecture maintains gradient flow from minority classes by capturing cell-type-specific regulatory modules, ensuring rare population signals persist through federated training.

\subsection{\proposed Solves the Communication Bottleneck in High-Dimensional Settings}
\label{sec:real_world_performance}

We evaluate \proposed on three real-world scATAC-seq datasets with extreme dimensionality: Brain PFC (127K features), PsychENCODE (423K features), and PBMC (108K features). Our adaptive sampling strategy transforms the computational barrier of FL into a performance advantage, enabling \proposed to surpass centralized methods with full data access while achieving dramatic communication reduction.

\begin{figure}[t]
\vspace{-2em}
    \centering
    \includegraphics[width=\linewidth]{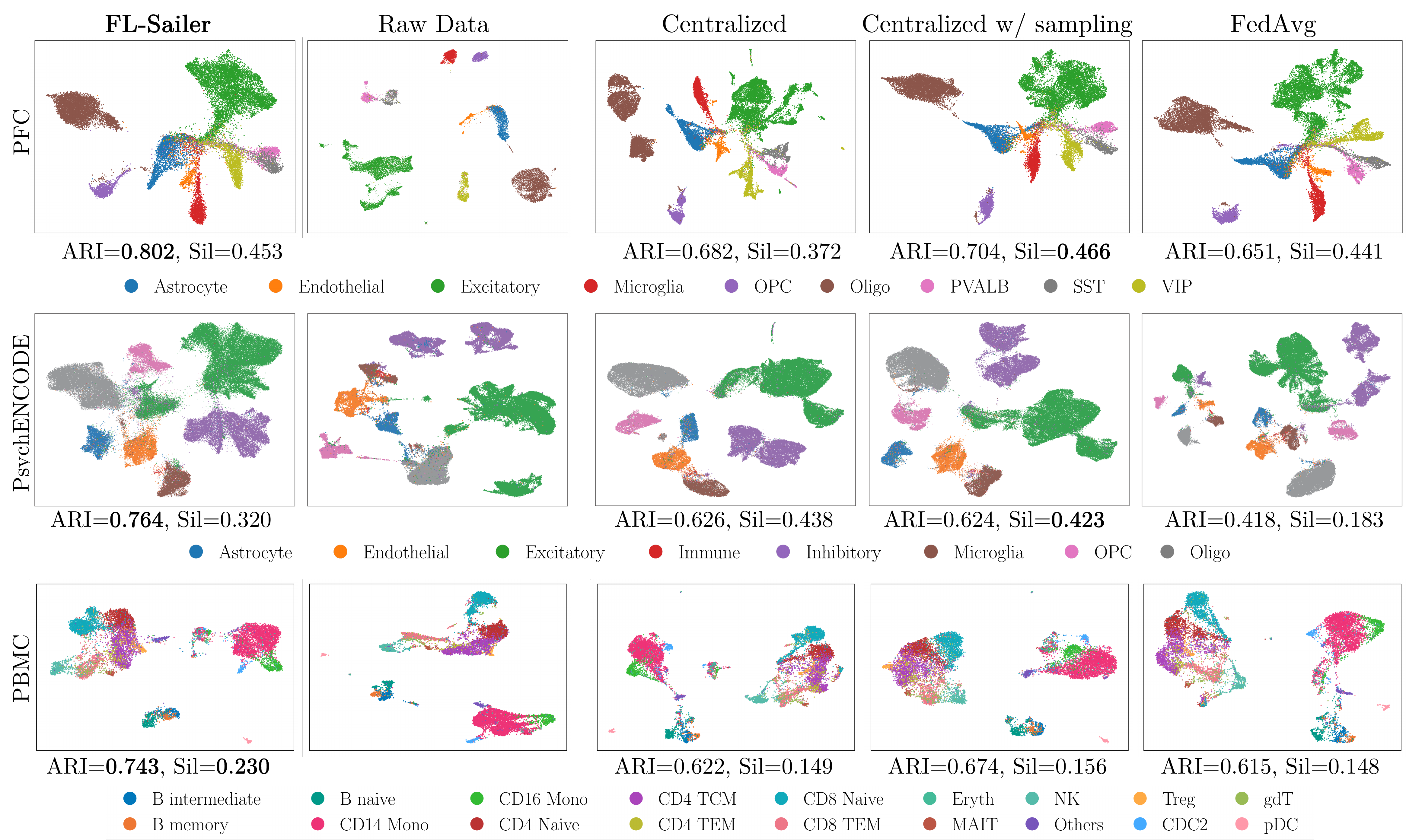}
    \caption{\textbf{Comparative analysis of \proposed clustering performance on real world scATAC-seq dataset.} We evaluate four approaches on Brain PFC (127,219 features), PsychENCODE (423,443 features), and PBMC (108,344 features) datasets, visualized using UMAP projections.
    We compare our proposed \proposed architecture with the raw data and various methods: (1) centralized training, i.e. Sailer's method~\citep{cfw+21}; centralized training with adaptive leverage score sampling; FedAvg method~\citep{mmr+17}.
    }
    \label{fig:main_comparison}
    \vspace{-1em}
\end{figure}

\paragraph{From Feasibility to Superiority: Synergistic Gains on Real-World Data}
\label{sec:real_world_gains}

\proposed achieves state-of-the-art performance on real-world scATAC-seq data, outperforming both centralized and federated baselines (Figure~\ref{fig:main_comparison}). With adaptive sampling ($\rho=0.2$), it surpasses the centralized SAILER~\citep{cfw+21} oracle (Brain PFC: 0.802 vs 0.682; PsychENCODE: 0.764 vs 0.626; PBMC: 0.743 vs 0.622) while dramatically outperforming vanilla FedAvg~\citep{mmr+17} which suffers from performance collapse (e.g., PsychENCODE: 0.418). The key insight is the synergistic effect between FL and adaptive sampling: while offering marginal gains in centralized settings, sampling transforms federated performance by acting as an implicit regularizer that aligns client gradients and suppresses client-specific noise, enabling biological clustering sharper than even the full-data oracle.

\paragraph{Communication Efficiency Analysis}
\label{sec:comm_efficiency}

\proposed achieves approximately 80\% communication reduction at the optimal sampling rate ($\rho=0.2$), transforming prohibitive costs into practical requirements: from 20.40 GB to 4.12 GB for PsychENCODE, 6.16 GB to 1.27 GB for Brain PFC, and 5.26 GB to 1.09 GB for PBMC over 100 rounds (Table~\ref{table:comm-eff}). Crucially, this biologically-informed sampling simultaneously enhances clustering performance—unlike generic compression methods that typically sacrifice accuracy for efficiency.
\begin{table}[htbp!]
\centering
\caption{\textbf{Complete communication efficiency analysis of \proposed across all sampling rates and datasets.} The table reports model parameters, per-round communication cost, total communication for 100 rounds, and percentage reduction compared to the no-sampling ($\rho=1.0$) baseline.}
\label{table:comm-eff}
\resizebox{\textwidth}{!}{%
\begin{tabular}{c|cccc|cccc|cccc}
\toprule
 & \multicolumn{4}{c|}{\textbf{Brain PFC (127,219 features)}} & \multicolumn{4}{c|}{\textbf{PsychENCODE (423,443 features)}} & \multicolumn{4}{c}{\textbf{PBMC (108,344 features)}} \\
& \multicolumn{1}{c}{Params} & \multicolumn{1}{c}{MB/} & \multicolumn{1}{c}{Total} & \multicolumn{1}{c|}{Reduc-} & \multicolumn{1}{c}{Params} & \multicolumn{1}{c}{MB/} & \multicolumn{1}{c}{Total} & \multicolumn{1}{c|}{Reduc-} & \multicolumn{1}{c}{Params} & \multicolumn{1}{c}{MB/} & \multicolumn{1}{c}{Total} & \multicolumn{1}{c}{Reduc-} \\
\textbf{$\boldsymbol{\rho}$} & \multicolumn{1}{c}{(M)} & \multicolumn{1}{c}{round} & \multicolumn{1}{c}{GB} & \multicolumn{1}{c|}{tion} & \multicolumn{1}{c}{(M)} & \multicolumn{1}{c}{round} & \multicolumn{1}{c}{GB} & \multicolumn{1}{c|}{tion} & \multicolumn{1}{c}{(M)} & \multicolumn{1}{c}{MB/} & \multicolumn{1}{c}{Total} & \multicolumn{1}{c}{tion} \\
\midrule
\midrule
1.00 & 16.55 & 63.12 & 6.16 & - & 54.75 & 208.87 & 20.40 & - & 14.11 & 53.83 & 5.26 & - \\
\midrule
0.50 & 8.34 & 31.82 & 3.11 & 49.6\% & 27.44 & 104.68 & 10.22 & 49.9\% & 7.12 & 27.17 & 2.65 & 49.5\% \\
0.45 & 7.52 & 28.69 & 2.80 & 54.6\% & 24.71 & 94.26 & 9.21 & 54.9\% & 6.42 & 24.51 & 2.39 & 54.5\% \\
0.40 & 6.70 & 25.56 & 2.50 & 59.5\% & 21.98 & 83.84 & 8.19 & 59.9\% & 5.73 & 21.84 & 2.13 & 59.4\% \\
0.35 & 5.88 & 22.43 & 2.19 & 64.5\% & 19.25 & 73.42 & 7.17 & 64.8\% & 5.03 & 19.17 & 1.87 & 64.4\% \\
0.30 & 5.06 & 19.30 & 1.88 & 69.4\% & 16.52 & 63.00 & 6.15 & 69.8\% & 4.33 & 16.51 & 1.61 & 69.3\% \\
0.25 & 4.24 & 16.17 & 1.58 & 74.4\% & 13.79 & 52.59 & 5.14 & 74.8\% & 3.63 & 13.84 & 1.35 & 74.3\% \\
\textbf{0.20} & \textbf{3.42} & \textbf{13.03} & \textbf{1.27} & \textbf{79.4\%} & \textbf{11.06} & \textbf{42.17} & \textbf{4.12} & \textbf{79.8\%} & \textbf{2.93} & \textbf{11.18} & \textbf{1.09} & \textbf{79.2\%} \\
0.15 & 2.60 & 9.90 & 0.97 & 84.3\% & 8.32 & 31.75 & 3.10 & 84.8\% & 2.23 & 8.51 & 0.83 & 84.2\% \\
0.10 & 1.78 & 6.77 & 0.66 & 89.3\% & 5.59 & 21.33 & 2.08 & 89.8\% & 1.53 & 5.85 & 0.57 & 89.1\% \\
\bottomrule
\end{tabular}
}
\end{table}

\paragraph{Impact of Sampling Rate}
\label{sec:ablation_sampling_rate}

The sampling rate $\rho$ balances communication efficiency and model performance. Table~\ref{tab:sampling_ablation} reveals that any sampling ($\rho<1.0$) dramatically improves federated performance over the no-sampling baseline (e.g., PsychENCODE: 0.418 $\to$ >0.735), confirming that feature selection enhances rather than compromises performance. 
We identify an optimal range of $\rho=0.15$--0.30 across datasets, with peak performance at $\rho=0.15$ (Brain PFC, ARI=0.855), $\rho=0.20$ (PsychENCODE, ARI=0.781), and $\rho=0.30$ (PBMC, ARI=0.743). This demonstrates that retaining only 15--30\% of features captures the core biological signal while enabling efficient FL.

\begin{table}[htbp!]
\caption{\textbf{Impact of the leverage score sampling rate $\rho$ on clustering performance.} }
\label{tab:sampling_ablation}
\centering
\setlength{\tabcolsep}{3pt}
\renewcommand{\arraystretch}{1.1}
\resizebox{\textwidth}{!}{%
\begin{tabular}{llcccccccccc}
\toprule
\textbf{Dataset} & \textbf{Metric} 
  & \textbf{0.10} & \textbf{0.15} & \textbf{0.20} & \textbf{0.25} 
  & \textbf{0.30} & \textbf{0.35} & \textbf{0.40} & \textbf{0.45} 
  & \textbf{0.50} & \textbf{1.00} \\
\midrule
\midrule
\multirow{2}{*}{Brain PFC}
  & ARI        
  & $0.830\pm0.004$
  & $\textbf{0.855}\pm\textbf{0.005}$
  & $0.802\pm0.004$
  & $0.781\pm0.005$
  & $0.769\pm0.004$
  & $0.772\pm0.003$
  & $0.732\pm0.005$
  & $0.730\pm0.005$
  & $0.734\pm0.005$
  & $0.651\pm0.005$\\
  & Silhouette
  & $0.424\pm0.002$
  & $\textbf{0.463}\pm\textbf{0.002}$
  & $0.453\pm0.002$
  & $0.435\pm0.002$
  & $0.438\pm0.002$
  & $0.445\pm0.002$
  & $0.383\pm0.002$
  & $0.411\pm0.003$
  & $0.368\pm0.002$
  & $0.441\pm0.003$\\
\midrule
\multirow{2}{*}{PsychENCODE}
  & ARI        
  & $0.734\pm0.002$
  & $0.740\pm0.002$
  & $\textbf{0.781}\pm\textbf{0.001}$
  & $0.755\pm0.002$
  & $0.682\pm0.002$
  & $0.700\pm0.002$
  & $0.732\pm0.002$
  & $0.736\pm0.002$
  & $0.602\pm0.002$
  & $0.418\pm0.002$\\
  & Silhouette
  & $0.283\pm0.001$
  & $0.330\pm0.001$
  & $\textbf{0.378}\pm\textbf{0.001}$
  & $0.338\pm0.001$
  & $0.343\pm0.001$
  & $0.350\pm0.001$
  & $0.365\pm0.001$
  & $0.374\pm0.001$
  & $0.339\pm0.001$
  & $0.183\pm0.002$\\
\midrule
\multirow{2}{*}{PBMC}
  & ARI          
  & $0.727\pm0.006$
  & $0.730\pm0.006$
  & $0.730\pm0.006$
  & $0.736\pm0.006$
  & $\textbf{0.743}\pm\textbf{0.005}$
  & $0.734\pm0.006$
  & $0.690\pm0.006$
  & $0.720\pm0.007$
  & $0.721\pm0.006$
  & $0.615\pm0.006$\\
  & Silhouette 
  & $0.191\pm0.004$
  & $0.216\pm0.003$
  & $0.222\pm0.003$
  & $0.230\pm0.003$
  & $\textbf{0.230}\pm\textbf{0.003}$
  & $0.224\pm0.003$
  & $0.196\pm0.004$
  & $0.226\pm0.004$
  & $0.218\pm0.004$
  & $0.148\pm0.004$\\
\bottomrule
\end{tabular}
}
\end{table}

\subsection{Comparison with Standard FL Methods}\label{sec:bench_fl_alg}

\proposed consistently and substantially outperforms all FL baselines across three real-world datasets (Table~\ref{tab:federated_comparison}). On the most complex PsychENCODE dataset, \proposed achieves an ARI of 0.781, significantly surpassing the next-best method, FedProx~\citep{fedprox} (0.733), and other approaches: FedAvg~\citep{mmr+17} (0.418), FedOpt~\citep{fedopt} (0.482), SCAFFOLD~\citep{scaffold} (0.655), FedNova~\citep{fednova} (0.461), and FedBN~\citep{fedbn} (0.623). This underscores the critical importance of domain-specific design. While general-purpose FL methods (e.g., FedProx, SCAFFOLD) address non-IID data generically, \proposed's synergistic combination of a chromosome-block architecture and invariant representation learning directly captures biological signals while suppressing technical noise, establishing a new state-of-the-art for federated single-cell analysis.

\begin{table}[htbp!]
\caption{\textbf{Comparison of federated methods (ARI and Silhouette Scores).}}
\label{tab:federated_comparison}
\centering
\setlength{\tabcolsep}{3pt}
\renewcommand{\arraystretch}{1.1}
\resizebox{\textwidth}{!}{%
\begin{tabular}{llccccccc}
\toprule
\textbf{Dataset} & \textbf{Metric} 
  & \textbf{FedAvg} & \textbf{FedProx} & \textbf{FedBN} & \textbf{FedOpt} 
  & \textbf{SCAFFOLD} & \textbf{FedNova} & \textbf{\proposed} \\
\midrule
\midrule
\multirow{2}{*}{Brain PFC}
  & ARI        
  & $0.651\pm0.005$
  & $0.678\pm0.006$
  & $0.505\pm0.005$
  & $0.630\pm0.005$
  & $0.671\pm0.005$
  & $0.659\pm0.007$
  & $\textbf{0.855}\pm\textbf{0.005}$\\
  & Silhouette
  & $0.441\pm0.003$
  & $0.307\pm0.003$
  & $0.147\pm0.002$
  & $0.426\pm0.002$
  & $0.386\pm0.003$
  & $0.432\pm0.004$
  & $\textbf{0.463}\pm\textbf{0.002}$\\
\midrule
\multirow{2}{*}{PsychENCODE}
  & ARI        
  & $0.418\pm0.002$
  & $0.731\pm0.002$
  & $0.623\pm0.001$
  & $0.482\pm0.001$
  & $0.654\pm0.002$
  & $0.461\pm0.002$
  & $\textbf{0.781}\pm\textbf{0.001}$\\
  & Silhouette
  & $0.183\pm0.002$
  & $0.325\pm0.001$
  & $0.411\pm0.001$
  & $0.190\pm0.001$
  & $0.280\pm0.001$
  & $0.193\pm0.002$
  & $\textbf{0.378}\pm\textbf{0.001}$\\
\midrule
\multirow{2}{*}{PBMC}
  & ARI          
  & $0.615\pm0.006$
  & $0.733\pm0.006$
  & $0.729\pm0.005$
  & $0.652\pm0.007$
  & $0.721\pm0.007$
  & $0.629\pm0.008$
  & $\textbf{0.743}\pm\textbf{0.005}$\\
  & Silhouette 
  & $0.148\pm0.004$
  & $0.170\pm0.004$
  & $0.205\pm0.003$
  & $0.152\pm0.003$
  & $0.160\pm0.003$
  & $0.148\pm0.005$
  & $\textbf{0.230}\pm\textbf{0.003}$\\
\bottomrule
\end{tabular}
}
\end{table}

\section{Conclusion}
We present \proposed, a federated learning framework that overcomes scalability and heterogeneity barriers in scATAC-seq analysis via adaptive sampling and invariant representation learning with theoretical guarantee. It enables efficient, privacy-preserving multi-institutional studies and even outperforms centralized methods, establishing a new paradigm for collaborative genomics.
\ifdefined\isarxiv
\bibliographystyle{alpha}
\bibliography{ref}
\else
\bibliography{ref}
\bibliographystyle{tmlr}

\fi

\appendix
\section{Algorithmic Implementation of Randomized Leverage Score Sampling}

\begin{algorithm}[H]
\caption{Randomized Column Leverage Score Sampling}
\label{alg:randomized_col_leverage_sampling}
\begin{algorithmic}[1]
\Require 
  Matrix $A \in \mathbb{R}^{n \times d}$, 
  sketch size $s_k$,
  number of columns to sample $s$.
\State Generate random matrix $\Omega \leftarrow \text{randn}(s_k, n)$ 
\State $B \leftarrow \Omega A \in \mathbb{R}^{s_k \times d}$ 
\State Compute QR factorization: $B^\top = Q R$ 
\For{$j = 1,\ldots,d$} 
  \State $\hat{\ell}_j \leftarrow \|Q_{j,:}\|_2^2$ 
\EndFor

\State $L_{\text{sum}} \leftarrow \sum_{j=1}^d \hat{\ell}_j$ 
\For{$j = 1,\ldots,d$}
  \State $p_j \leftarrow \hat{\ell}_j / L_{\text{sum}}$ 
\EndFor

\State Sample $S = \{j_1, \dots, j_s\} \subset \{1, \dots, d\}$ of size $s$ using weighted sampling without replacement based on probabilities $p$
\State Construct $\tilde{A}$ by selecting columns indexed by $S$
\State \textbf{return} $\tilde{A}$
\end{algorithmic}
\end{algorithm}

In this appendix, we detail the randomized sketching technique used in Phase 1 of Algorithm~\ref{alg:fl-sailer}. Computing exact leverage scores requires singular value decomposition (SVD), which incurs a prohibitive computational cost of $O(n^2d)$ for high-dimensional scATAC-seq data.

To overcome this, Algorithm~\ref{alg:randomized_col_leverage_sampling} employs a randomized projection method \citep{drineas2012fast}. The key idea is to project the input matrix $A \in \mathbb{R}^{n \times d}$ onto a lower-dimensional subspace using a Gaussian random matrix $\Omega \in \mathbb{R}^{s_k \times n}$, where $s_k \ll n$ is the sketch size. This projection preserves the geometry of the column space, allowing us to approximate the leverage scores using the orthogonal basis of the sketched matrix.

\textbf{Complexity Analysis.} The dominant operations are the matrix multiplication $B = \Omega A$ and the QR decomposition of the reduced matrix. This reduces the overall time complexity to $O(\mathrm{nnz} (A) s_k +d s_k^2)$, which is linear in both the number of cells $n$ and features $d$, making it highly efficient for client-side execution.

\section{Theoretical Guarantees of Adaptive Feature Sampling}
\subsection{Preservation of Optimization Geometry in the Reduced Subspace}\label{app:sampling_properties}
\begin{remark}[Optimization Objective under Feature Sampling]\label{rem:opt_sampling}
Consider the original optimization problem: $\min_{w\in\R^d} \|Aw - y\|_2^2$,
After feature sampling with selection matrix $T \in \R^{d\times s}$ (where $s \ll d$), we obtain the reduced matrix $\tilde{A} = AT \in \R^{n\times s}$. The optimization problem becomes:
\begin{align*}
\min_{\tilde{w}\in\R^s} \|\tilde{A}\tilde{w} - y\|_2^2 = \min_{\tilde{w}\in\R^s} \|AT\tilde{w} - y\|_2^2,
\end{align*}
where $\tilde{w} \in \R^s$ is the parameter vector in the reduced feature space.

From Lemma~\ref{lem:feature_lev_score}, the relationship between the original and reduced problems is characterized as follows:
\begin{enumerate}
\item Let $w^* = \arg\min_w \|Aw - y\|_2^2 = A^{\dagger}y$ be the optimal solution in the original space, and $\tilde{w}^* = \arg\min_{\tilde{w}} \|AT\tilde{w} - y\|_2^2 = (AT)^{\dagger}y$ be the optimal solution in the reduced space.

\item The reconstruction $w_{\text{recon}} = T\tilde{w}^*$ provides an exact solution in terms of objective value. Under the assumptions of Lemma~\ref{lem:feature_lev_score}, we have 
\begin{align*}
\|Aw_{\text{recon}} - y\|_2^2 = \|Aw^* - y\|_2^2.
\end{align*}

\item The gradient information is preserved: for any $\tilde{w} \in \R^s$, if $\nabla f(w) = 2A^\top(Aw - y)$ and $\nabla \tilde{f}(\tilde{w}) = 2(AT)^\top(AT\tilde{w} - y)$, there is
\begin{align*}
(1-\epsilon)\|\nabla f(T\tilde{w})\|_2^2 \leq \|\nabla \tilde{f}(\tilde{w})\|_2^2 \leq (1+\epsilon)\|\nabla f(T\tilde{w})\|_2^2
\end{align*}

\end{enumerate}
\end{remark}

\begin{remark}[Federated Leverage Score Aggregation]
\label{rem:federated-leverage}
The weighted aggregation in Algorithm~\ref{alg:fl-sailer} computes an approximation to the global leverage scores. For scATAC-seq data, all institutions profile identical genomic regions governed by conserved regulatory programs, which induces shared column space structure across clients. This biological constraint ensures that the aggregated scores preserve the relative ranking of feature importance, a property sufficient for selecting informative regulatory regions. Section~\ref{sec:6} empirically validates this approach across diverse real-world datasets.
\end{remark}

\subsubsection{Proof of Remark~\ref{rem:opt_sampling}, Part 2}
\begin{proof}\label{proof:reconstruction}

Let $\tilde A = AT$. We define the optimal solutions for the original and sampled problems respectively as:
\[
w^* := \arg\min_{w\in\mathbb{R}^d}\|Aw-y\|_2^2,\qquad
\tilde w^* := \arg\min_{\tilde w\in\mathbb{R}^s}\|\tilde A\tilde w-y\|_2^2.
\]
Define the reconstructed vector $w_{\mathrm{recon}} := T\tilde w^*$, so that $Aw_{\mathrm{recon}} = AT\tilde w^* = \tilde A\tilde w^*$.

First, we establish a lower bound. We have:
\begin{align}
\|Aw^*-y\|_2^2 \le \|Aw_{\mathrm{recon}}-y\|_2^2 = \|\tilde A\tilde w^*-y\|_2^2, \label{eq:lower1}
\end{align}
where the inequality follows from the fact that $w^*$ is the global minimizer of the original objective $\|Aw-y\|_2^2$, and the equality follows from the definition $Aw_{\mathrm{recon}} = \tilde A\tilde w^*$.

Next, we establish the upper bound. By Lemma~\ref{lem:feature_lev_score}, with probability $1-\delta$, for all $v\in\mathbb{R}^n$:
\begin{align}
(1-\epsilon)\|A^\top v\|_2^2 \le \|\tilde A^\top v\|_2^2 \le (1+\epsilon)\|A^\top v\|_2^2. \label{eq:subspace_embedding}
\end{align}
This subspace embedding property implies that the column spaces of the original and sampled matrices are identical:
\begin{align*}
\mathrm{col}(\tilde A) = \ker(\tilde A^\top)^\perp = \ker(A^\top)^\perp = \mathrm{col}(A),
\end{align*}
where the first and last equalities follow from the fundamental theorem of linear algebra, and the middle equality follows from \eqref{eq:subspace_embedding} which implies $\ker(A^\top)=\ker(\tilde A^\top)$ (since for $\epsilon \in (0,1)$, $\|A^\top v\|_2=0$ if and only if $\|\tilde A^\top v\|_2=0$).

Since $Aw^* \in \mathrm{col}(A)$, the column space equality implies $Aw^* \in \mathrm{col}(\tilde A)$. Therefore, there exists a vector $u \in \mathbb{R}^s$ (e.g., $u = \tilde A^\dagger Aw^*$) such that:
\begin{align}
\tilde A u = Aw^*. \label{eq:existence_u}
\end{align}
Using this vector $u$, we can bound the objective value of the sampled problem:
\begin{align}
\|\tilde A\tilde w^*-y\|_2 &~\le \|\tilde A u-y\|_2 \notag \\
&~= \|Aw^*-y\|_2, \label{eq:upper_bound}
\end{align}
where the inequality follows from the optimality of $\tilde w^*$ for the problem $\min_{\tilde w}\|\tilde A\tilde w-y\|_2^2$, and the equality follows from substituting \eqref{eq:existence_u}.

Finally, combining the bounds yields the result:
\begin{align*}
\|Aw_{\mathrm{recon}}-y\|_2^2 = \|Aw^*-y\|_2^2,
\end{align*}
where the equality follows from combining the lower bound in \eqref{eq:lower1} and the upper bound in \eqref{eq:upper_bound}. This demonstrates that under the subspace embedding assumption, the reconstruction error matches the original optimal error exactly.

\end{proof}

\subsubsection{Proof of Remark~\ref{rem:opt_sampling}, Part 3}
\begin{proof}

For any $\tilde{w} \in \R^s$, let $f(w) = \|Aw - y\|_2^2$ and $\tilde{f}(\tilde{w}) = \|AT\tilde{w} - y\|_2^2$. We aim to prove
$$(1 - \epsilon)\|\nabla f(T\tilde{w})\|_2^2 \leq \|\nabla \tilde{f}(\tilde{w})\|_2^2 \leq (1 + \epsilon)\|\nabla f(T\tilde{w})\|_2^2$$
Here the gradients are
\begin{align*}
\nabla f(w) &= 2A^{\top}(Aw - y) \\
\nabla \tilde{f}(\tilde{w}) &= 2(AT)^{\top}(AT\tilde{w} - y).
\end{align*}

For the composed function at $w = T\tilde{w}$, we have
$\nabla f(T\tilde{w}) = 2A^{\top}(AT\tilde{w} - y)$.

Therefore,
\begin{align*}
\|\nabla \tilde{f}(\tilde{w})\|_2^2 &= 4\|(AT)^{\top}(AT\tilde{w} - y)\|_2^2; \\
\|\nabla f(T\tilde{w})\|_2^2 &= 4\|A^{\top}(AT\tilde{w} - y)\|_2^2.
\end{align*}

Let $e := AT\tilde{w} - y \in \R^n$. By Lemma~\ref{lem:feature_lev_score}, for all $v \in \R^n$,
$$(1 - \epsilon)\|A^{\top} v\|_2^2 \leq \|(AT)^{\top} v\|_2^2 \leq (1 + \epsilon)\|A^{\top} v\|_2^2.$$

Applying this with $v = e$ yields
\begin{align*}
(1 - \epsilon)\|A^{\top}(AT\tilde{w} - y)\|_2^2
&\leq \|(AT)^{\top}(AT\tilde{w} - y)\|_2^2 \\
&\leq (1 + \epsilon)\|A^{\top}(AT\tilde{w} - y)\|_2^2.
\end{align*}

Multiplying by $4$ gives
$$(1 - \epsilon)\|\nabla f(T\tilde{w})\|_2^2 \leq \|\nabla \tilde{f}(\tilde{w})\|_2^2 \leq (1 + \epsilon)\|\nabla f(T\tilde{w})\|_2^2.$$

This completes the proof.
\end{proof}

\subsection{Preservation of Biological Structure and Marker Gene Informativeness}\label{proof:signal_preservation}
This section proves that our adaptive sampling preserves biological signal. While the previous section showed it maintains optimization geometry, Theorem~\ref{thm:signal_preservation} guarantees it also retains the data's biological structure. Specifically, we prove the sampling (i) preserves distances between cell types, (ii) maintains clustering structure, and (iii) preferentially selects discriminative genomic regions. This ensures that dimensionality reduction removes noise while keeping features necessary for interpretable biological discovery.
\begin{theorem}[Biological Signal Preservation under Leverage Score Sampling]
\label{thm:signal_preservation}
Let $A \in \R^{n \times d}$ be a scATAC-seq data matrix where rows represent cells and columns represent genomic features. Assume cells belong to $\mathcal{K}$ distinct cell types $\mathcal{C} = \{C_1, C_2, \ldots, C_{\mathcal{K}}\}$.

For a leverage score sampling transformation $T: \R^d \rightarrow \R^s$ (where $s = \rho d$ and $\rho$ is the sampling rate) satisfying the conditions of Remark~\ref{rem:opt_sampling}, the following properties hold:
\begin{enumerate}
\item \textbf{Cell Type Separability Preservation}: 

For any two distinct cell types $C_i, C_j$, define their separation in the original space as $\Delta_{ij} = \|\mu_i - \mu_j\|_2/\sqrt{\sigma_i^2 + \sigma_j^2}$ where $\mu_i, \mu_j$ are class centers and $\sigma_i^2, \sigma_j^2$ are within-class variances. The separation after sampling $\tilde{\Delta}_{ij}$ satisfies:
\begin{align*}
\mathbb{P}\left[(1-2\epsilon)\Delta_{ij} \leq \tilde{\Delta}_{ij} \leq (1+2\epsilon)\Delta_{ij}\right] \geq 1 - \delta.
\end{align*}

\item \textbf{Clustering Structure Preservation}: 
Define the Davies-Bouldin index as $\text{DB} = \frac{1}{\mathcal{K}}\sum_{i=1}^\mathcal{K}\max_{j \neq i} \frac{\sigma_i + \sigma_j}{\|\mu_i - \mu_j\|_2}$. Then the DB index before and after sampling satisfies:
\begin{align*}
|DB_{\text{sampled}} - DB_{\text{original}}| \leq 2\epsilon \cdot DB_{\text{original}}.
\end{align*}

\item \textbf{Biological Marker Preservation}: If genomic region $j$ is a marker for cell type $C_i$ (i.e., its average accessibility in $C_i$ is significantly higher than in other types), then its selection probability satisfies:
\begin{align*}
\pi_j \geq \min\left\{1, \frac{s}{r} \cdot \frac{\|a_{C_i,j} - a_{\neg C_i,j}\|_2^2}{\sum_{\ell=1}^d \|a_{C_i,\ell} - a_{\neg C_i,\ell}\|_2^2}\right\}
\end{align*}
where $a_{C_i,j}$ denotes the average accessibility of cell type $C_i$ at region $j$.
\end{enumerate}
\end{theorem}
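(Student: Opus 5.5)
The plan is to reduce all three parts to the spectral guarantee of Lemma~\ref{lem:feature_lev_score}, read as a statement about vectors in the row space of $A$. The key observation is that the embedding $(1-\epsilon)AA^\top \preceq AT T^\top A^\top \preceq (1+\epsilon)AA^\top$ is equivalent to
\begin{align*}
(1-\epsilon)\|A^\top y\|_2^2 \le \|T^\top A^\top y\|_2^2 \le (1+\epsilon)\|A^\top y\|_2^2 \quad \text{for all } y\in\R^n .
\end{align*}
Every quantity appearing in the theorem is built from vectors of the form $A^\top y$. A class center is $\mu_i = A^\top(\mathbf{1}_{C_i}/|C_i|)$, a difference of centers is $\mu_i-\mu_j = A^\top y_{ij}$ with $y_{ij}=\mathbf{1}_{C_i}/|C_i|-\mathbf{1}_{C_j}/|C_j|$, and a centered cell is $x_k-\mu_i = A^\top(e_k-\mathbf{1}_{C_i}/|C_i|)$. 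Since the sampled centers and cells are exactly $T^\top$ applied to these vectors, the squared norm of each is preserved to within a factor $1\pm\epsilon$, all simultaneously on the single event of probability $1-\delta$ given by Lemma~\ref{lem:feature_lev_score}.

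\textbf{Part 1.} First, $\|\tilde\mu_i-\tilde\mu_j\|_2^2\in[(1-\epsilon),(1+\epsilon)]\,\|\mu_i-\mu_j\|_2^2$. Second, the within-class variance $\sigma_i^2=\frac{1}{|C_i|}\sum_{k\in C_i}\|x_k-\mu_i\|_2^2$ is a positive combination of such squared norms, so the same two-sided bound holds for $\sigma_i^2+\sigma_j^2$. Taking the ratio and a square root gives
\begin{align*}
\tilde\Delta_{ij}/\Delta_{ij}\in\Bigl[\sqrt{(1-\epsilon)/(1+\epsilon)},\,\sqrt{(1+\epsilon)/(1-\epsilon)}\Bigr].
\end{align*}
This interval lies within $[1-2\epsilon,1+2\epsilon]$ when $\epsilon$ is small, say $\epsilon\le 1/2$ after adjusting constants. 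To meet the stated constant for all $\epsilon\in(0,1)$, I would rescale $\epsilon$ inside the choice of $s$ in Lemma~\ref{lem:feature_lev_score}.

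\textbf{Part 2.} Each term $(\sigma_i+\sigma_j)/\|\mu_i-\mu_j\|_2$ is controlled in the same way. The norms $\sigma_i$ and $\|\mu_i-\mu_j\|_2$ each change by a factor in $[\sqrt{1-\epsilon},\sqrt{1+\epsilon}]$, so each term changes by a factor in $[1-2\epsilon,1+2\epsilon]$ for small $\epsilon$. A uniform multiplicative perturbation of every term passes through the $\max_{j\neq i}$ and through the average over $i$. Hence $|DB_{\text{sampled}}-DB_{\text{original}}|\le 2\epsilon\, DB_{\text{original}}$.

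\textbf{Part 3.} This part is the main obstacle, because it concerns the sampling distribution rather than the embedding. The plan is to lower-bound the leverage score of a marker column. For the contrast vector $y=y_{i}$ contrasting $C_i$ with its complement, the entry $(A^\top y)_j=a_{C_i,j}-a_{\neg C_i,j}$ is the marker contrast. The variational characterization of leverage scores gives
\begin{align*}
\ell_j^{\mathrm{col}}=\max_{y}\frac{(a_{:,j}^\top y)^2}{\|A^\top y\|_2^2},
\end{align*}
and plugging in the contrast vector yields
\begin{align*}
\ell_j^{\mathrm{col}} \ge \frac{\|a_{C_i,j}-a_{\neg C_i,j}\|_2^2}{\sum_\ell\|a_{C_i,\ell}-a_{\neg C_i,\ell}\|_2^2}.
\end{align*}
The sampling probability is then $p_j=\ell_j^{\mathrm{col}}/r$. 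The inclusion probability for $s$ draws without replacement is at least $\min\{1,s\,p_j\}$, up to the usual first-order approximation. I would argue this by comparison with independent Bernoulli inclusion at rates $\min\{1,sp_j\}$, citing the without-replacement justification of \cite{gross2010note}.

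Two points in Part 3 need care. One is the approximate leverage scores produced by Algorithm~\ref{alg:fl-sailer}; I would treat them as exact, or absorb a constant factor. The other is the exact form of the inclusion-probability bound, which I would state as a first-order guarantee if a clean inequality fails.
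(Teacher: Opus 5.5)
Your Parts 1 and 2 match the paper's proof. Both apply the row-space embedding to center differences and to centered cells, both use the ratio bounds $\sqrt{(1\mp\epsilon)/(1\pm\epsilon)}$, and both restrict to small $\epsilon$ (the paper simply assumes $\epsilon<1/2$). In Part 3, evaluating the variational formula for $\ell_j^{\mathrm{col}}$ at the contrast vector gives the same inequality the paper derives by Cauchy--Schwarz in the SVD basis.

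The gap is the final step of Part 3. The inequality $\pi_j\ge\min\{1,s p_j\}$ is false for successive weighted sampling without replacement, which is the scheme the paper's proof uses. With $p=(0.4,0.2,0.2,0.2)$ and $s=2$ you get $\pi_1=0.4+0.6\cdot\tfrac{0.4}{0.8}=0.7<0.8$. More generally, when the remaining mass is spread over many small columns, removing them barely changes the conditional probability of drawing $j$. Then $\pi_j\approx 1-(1-p_j)^s$, which is strictly below $sp_j$ whenever $s\ge 2$ and $sp_j<1$. So a comparison with independent Bernoulli$(\min\{1,sp_j\})$ inclusion cannot hold. The citation \cite{gross2010note} does not help here: it transfers matrix Hoeffding/Chernoff tail bounds from with-replacement to without-replacement sampling, and says nothing about the marginal inclusion probability of a fixed column.

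What you are missing is the paper's per-draw argument. Conditional on $j$ not yet having been drawn, each draw selects $j$ with probability $p_j/(1-\text{mass already removed})\ge p_j$. Hence $\Pr(j\notin\mathcal S)\le(1-p_j)^s\le e^{-sp_j}$, so $\pi_j\ge 1-\exp(-\tfrac{s}{r}\ell_j^{\mathrm{col}})$. Monotonicity together with your leverage bound gives $\pi_j\ge 1-e^{-x_j}$, where $x_j$ is the quantity inside the $\min$ in the statement. Finally, $1-e^{-x}\ge(1-e^{-1})\min\{1,x\}$ yields the proportional form.

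This route delivers the stated bound only up to the factor $1-e^{-1}$, and that loss cannot be avoided. Take a rank-one $A=uv^\top$ with $v=(\sqrt{0.4},\sqrt{0.2},\sqrt{0.2},\sqrt{0.2})$, so $r=1$ and $p=(0.4,0.2,0.2,0.2)$, and choose $u$ with $u^\top h\neq 0$ for the contrast vector $h$. Then your leverage inequality holds with equality, and the rescaled $T$ is an exact isometry on the row space. Yet with $s=2$ we still have $\pi_1=0.7<0.8=\min\{1,\tfrac{s}{r}\cdot 0.4\}$. So instead of hedging with a first-order approximation, prove and state Part 3 in the $1-e^{-x_j}$ form, or in the form weakened by $1-e^{-1}$.
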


\subsubsection{Proof of Theorem~\ref{thm:signal_preservation}, Part 1}
\begin{proof}
We aim to prove that the cell type separability measure is preserved under leverage score sampling with high probability.

Let $A \in \R^{n \times d}$ be the data matrix where rows represent cells and columns represent genomic features. For each cell type $C_i \subseteq [n]$, 
define the class center $\mu_i = \frac{1}{|C_i|}\sum_{i' \in C_i} a_{i'}^\top \in \R^d$ where $a_{i'}^\top$ is the row of $A$ corresponding to cell $i'$
the within-class variance $\sigma_i^2 = \frac{1}{|C_i|}\sum_{k \in C_i} \|a_k - \mu_i\|_2^2$, and the original separability $\Delta_{ij} = \frac{\|\mu_i - \mu_j\|_2}{\sqrt{\sigma_i^2 + \sigma_j^2}}$.

After leverage score sampling with selection matrix $T \in \R^{d \times s}$, we obtain the sampled data $\tilde{A} = AT \in \R^{n \times s}$, sampled class center $\tilde{\mu}_i = \frac{1}{|C_i|}\sum_{k \in C_i} a_k^\top T = \mu_i^\top T \in \R^s$, sampled variance $\tilde{\sigma}_i^2 = \frac{1}{|C_i|}\sum_{k \in C_i} \|a_k^\top T - \mu_i^\top T\|_2^2$, and sampled separability $\tilde{\Delta}_{ij} = \frac{\|\tilde{\mu}_i - \tilde{\mu}_j\|_2}{\sqrt{\tilde{\sigma}_i^2 + \tilde{\sigma}_j^2}}$.

First, we apply the subspace embedding property to the class center difference. For any vector $v$ in the row space of $A$, we have with probability at least $1-\delta$:
\begin{align}
\label{eq:subspace_embed}
(1-\epsilon)\|v\|_2^2 \leq \|vT\|_2^2 \leq (1+\epsilon)\|v\|_2^2.
\end{align}

Since $\mu_i - \mu_j = \frac{1}{|C_i|}\sum_{k \in C_i} a_k^\top - \frac{1}{|C_j|}\sum_{\ell \in C_j} a_\ell^\top$ is a linear combination of rows of $A$, it lies in the row space of $A$. Therefore:
\begin{align}\label{eq:center_diff_bound}
(1-\epsilon)\|\mu_i - \mu_j\|_2^2 \leq \|(\mu_i - \mu_j)T\|_2^2 = \|\tilde{\mu}_i - \tilde{\mu}_j\|_2^2 \leq (1+\epsilon)\|\mu_i - \mu_j\|_2^2,
\end{align}
where the inequalities follow from applying \eqref{eq:subspace_embed} with $v = \mu_i - \mu_j$, and the equality follows from the linearity of matrix multiplication. Taking square roots of \eqref{eq:center_diff_bound} yields:
\begin{align}\label{eq:center_diff_sqrt}
\sqrt{1-\epsilon}\|\mu_i - \mu_j\|_2 \leq \|\tilde{\mu}_i - \tilde{\mu}_j\|_2 \leq \sqrt{1+\epsilon}\|\mu_i - \mu_j\|_2.
\end{align}

Next, we analyze the within-class variances. 
For each $i' \in C_i$, the vector $(a_{i'} - \mu_i)^\top$ is in the row space of $A$ since $(a_{i'} - \mu_i)^\top = a_{i'}^\top - \frac{1}{|C_i|}\sum_{\ell \in C_i} a_\ell^\top$.
In the event of \eqref{eq:subspace_embed}, there is 
\begin{align}\label{eq:single_variance_bound}
(1-\epsilon)\|a_{i'} - \mu_i\|_2^2 \leq \|(a_{i'} - \mu_i)T\|_2^2 \leq (1+\epsilon)\|a_{i'} - \mu_i\|_2^2.
\end{align}
Since this two-sided bounds holds simultaneously for all $i' \in C_i$,
\begin{align}
\tilde{\sigma}_i^2 &= \frac{1}{|C_i|}\sum_{i' \in C_i} \|(a_{i'} - \mu_i)T\|_2^2\notag\\
&\in \left[\frac{1}{|C_i|}\sum_{i' \in C_i} (1-\epsilon)\|a_{i'} - \mu_i\|_2^2, \frac{1}{|C_i|}\sum_{i' \in C_i} (1+\epsilon)\|a_{i'} - \mu_i\|_2^2\right]\notag\\
&= [(1-\epsilon)\sigma_i^2, (1+\epsilon)\sigma_i^2],\label{eq:variance_bound}
\end{align}

where the second step follows from applying the bounds in \eqref{eq:single_variance_bound} pointwise, and the third step follows from the definition of $\sigma_i^2$. Similarly, under the event of \eqref{eq:subspace_embed}, we have $(1-\epsilon)\sigma_j^2 \leq \tilde{\sigma}_j^2 \leq (1+\epsilon)\sigma_j^2$.

Therefore, as \eqref{eq:subspace_embed} holds with probability at least $1-\delta$, we can give a two-sided bound to the sampled separability $\tilde{\Delta}_{ij} = \frac{\|\tilde{\mu}_i - \tilde{\mu}_j\|_2}{\sqrt{\tilde{\sigma}_i^2 + \tilde{\sigma}_j^2}}$. For the lower bound:
\begin{align}
\tilde{\Delta}_{ij} &\geq \frac{\sqrt{1-\epsilon}\cdot \|\mu_i - \mu_j\|_2}{\sqrt{(1+\epsilon)(\sigma_i^2 + \sigma_j^2)}} \notag\\
&= \sqrt{\frac{1-\epsilon}{1+\epsilon}} \cdot \frac{\|\mu_i - \mu_j\|_2}{\sqrt{\sigma_i^2 + \sigma_j^2}} \notag\\
&= \sqrt{\frac{1-\epsilon}{1+\epsilon}} \cdot \Delta_{ij},\label{eq:delta_lower}
\end{align}
where the first step follows from the lower bound in \eqref{eq:center_diff_sqrt} and the upper bound in \eqref{eq:variance_bound}, and the second step follows from algebraic manipulation. Similar result also holds for the upper bound. As a result, there is
\begin{align}\label{eq:delta_upper}
\sqrt{\frac{1-\epsilon}{1+\epsilon}} \cdot \Delta_{ij} \leq  \tilde{\Delta}_{ij} \leq \sqrt{\frac{1+\epsilon}{1-\epsilon}} \cdot \Delta_{ij}.
\end{align}

Finally, we refine the bounds in \eqref{eq:delta_lower} and \eqref{eq:delta_upper}. For $\epsilon \in (0,1)$, we have:
\begin{align*}
\sqrt{\frac{1-\epsilon}{1+\epsilon}} &= \sqrt{1-\epsilon} \cdot (1+\epsilon)^{-1/2} \notag\\
&\geq (1-\epsilon/2)(1-\epsilon/2) \notag\\
&= 1 - \epsilon + \epsilon^2/4 \notag\\
&\geq 1 - \epsilon,
\end{align*}
where the second step follows from the inequalities $\sqrt{1-x} \geq 1-x/2$ and $(1+x)^{-1/2} \geq 1-x/2$ for $x \in [0,1]$, and the third step follows from expanding the product. Similarly, for $\epsilon \in (0,1/2)$, $\sqrt{\frac{1+\epsilon}{1-\epsilon}} \leq 1 + 2\epsilon$. Hence we have
\begin{align}\notag
\mathbb{P}\left[(1-2\epsilon)\Delta_{ij} \leq \tilde{\Delta}_{ij} \leq (1+2\epsilon)\Delta_{ij}\right] \geq 1 - \delta.
\end{align}
This completes the proof.
\end{proof}

\subsubsection{Proof of Theorem~\ref{thm:signal_preservation}, Part 2}
\begin{proof}
We prove that $|DB_{\text{sampled}} - DB_{\text{original}}| \leq C \cdot \epsilon \cdot DB_{\text{original}}$ for some constant $C$, where the Davies-Bouldin index is defined as $DB = \frac{1}{\mathcal{K}}\sum_{i=1}^\mathcal{K} \max_{j \neq i} R_{ij}$, with $R_{ij} = \frac{\sigma_i + \sigma_j}{\|\mu_i - \mu_j\|_2}$.

Our strategy is to bound the ratio $\tilde{R}_{ij} / R_{ij}$. From the proof of Part 1, we have already established the following bounds which hold simultaneously with high probability:
\begin{enumerate}
    \item Bound on inter-class distance (from \eqref{eq:center_diff_sqrt}):
    \begin{align*}
    \sqrt{1-\epsilon}\|\mu_i - \mu_j\|_2 \leq \|\tilde{\mu}_i - \tilde{\mu}_j\|_2 \leq \sqrt{1+\epsilon}\|\mu_i - \mu_j\|_2.
    \end{align*}
    \item Bound on intra-class standard deviation (from \eqref{eq:variance_bound} after taking square root):
    \begin{align*}
    \sqrt{1-\epsilon}\,\sigma_i \leq \tilde{\sigma}_i \leq \sqrt{1+\epsilon}\,\sigma_i.
    \end{align*}
\end{enumerate}

Now, we combine these bounds to analyze $\tilde{R}_{ij} = \frac{\tilde{\sigma}_i + \tilde{\sigma}_j}{\|\tilde{\mu}_i - \tilde{\mu}_j\|_2}$.

For the upper bound of $\tilde{R}_{ij}$:
\begin{align*}
\tilde{R}_{ij} = \frac{\tilde{\sigma}_i + \tilde{\sigma}_j}{\|\tilde{\mu}_i - \tilde{\mu}_j\|_2} \leq \frac{\sqrt{1+\epsilon}(\sigma_i + \sigma_j)}{\sqrt{1-\epsilon}\|\mu_i - \mu_j\|_2} = \sqrt{\frac{1+\epsilon}{1-\epsilon}} \cdot R_{ij}.
\end{align*}
For the lower bound of $\tilde{R}_{ij}$:
\begin{align*}
\tilde{R}_{ij} \geq \frac{\sqrt{1-\epsilon}(\sigma_i + \sigma_j)}{\sqrt{1+\epsilon}\|\mu_i - \mu_j\|_2} = \sqrt{\frac{1-\epsilon}{1+\epsilon}} \cdot R_{ij}.
\end{align*}

As established in the proof of Part 1, for $\epsilon \in (0, 1/2)$, we have $\sqrt{\frac{1+\epsilon}{1-\epsilon}} \leq 1+2\epsilon$ and $\sqrt{\frac{1-\epsilon}{1+\epsilon}} \geq 1-\epsilon$.
Combining these, we get:
\begin{align*}
(1-\epsilon)R_{ij} \leq \tilde{R}_{ij} \leq (1+2\epsilon)R_{ij}.
\end{align*}

Since the $\max$ operation preserves these relative bounds, for each $i$:
\begin{align*}
(1-\epsilon)\max_{j \neq i} R_{ij} \leq \max_{j \neq i} \tilde{R}_{ij} \leq (1+2\epsilon)\max_{j \neq i} R_{ij}.
\end{align*}
Summing over all $i$ and dividing by $\mathcal{K}$ gives the bound for the DB index:
\begin{align*}
(1-\epsilon)DB_{\text{original}} \leq DB_{\text{sampled}} \leq (1+2\epsilon)DB_{\text{original}}.
\end{align*}
This implies that $|DB_{\text{sampled}} - DB_{\text{original}}|$ is bounded by a term proportional to $\epsilon \cdot DB_{\text{original}}$, which completes the proof.
\end{proof}

\subsubsection{Proof of Theorem~\ref{thm:signal_preservation}, Part 3}
\begin{proof}
We prove that if genomic region $j$ is a marker for cell type $C_i$, then its selection probability (inclusion probability) $\pi_j$ satisfies the lower bound behavior proportional to its discriminative power.

Let $h \in \mathbb{R}^n$ be a vector that captures the cell type distinction:
\begin{align}\label{eq:w_def}
h_i = \begin{cases}
\frac{1}{|C_i|} & \text{if cell } i \in C_i \\
-\frac{1}{|\neg C_i|} & \text{if cell } i \in \neg C_i
\end{cases}
\end{align}
where $w$ has positive weight for cells in type $C_i$ and negative weight for cells not in $C_i$.

The discriminative score for feature $j$ is:
\begin{align}\label{eq:disc_score}
d_j = h^\top a_{:,j} = \sum_{i=1}^n h_i a_{ij} = a_{C_i,j} - a_{\neg C_i,j},
\end{align}
where the first equality follows from the definition of matrix-vector multiplication, and the second equality follows from the specific structure of $h$ in \eqref{eq:w_def} and the definition of average accessibility.

Let $A = U\Sigma V^\top$ be the SVD of $A$. Since $a_{:,j}$ lies in the column space of $A$, we can write:
\begin{align}
d_j &= h^\top a_{:,j} = h^\top UU^\top a_{:,j} = \sum_{i=1}^r (h^\top u_i)(u_i^\top a_{:,j}) \notag\\
&= \sum_{i=1}^r (h^\top u_i)\sigma_i \cdot \frac{u_i^\top a_{:,j}}{\sigma_i},\label{eq:dj_decomp}
\end{align}
where the first equality follows from $UU^\top$ being the projection onto the column space of $A$, the second equality follows from expanding the matrix multiplication, and the third equality follows from multiplying and dividing by $\sigma_i$ for each term.

By the Cauchy-Schwarz inequality applied to \eqref{eq:dj_decomp}:
\begin{align}\label{eq:cauchy_schwarz1}
d_j^2 \leq \left(\sum_{i=1}^r (h^\top u_i)^2 \sigma_i^2\right) \cdot \left(\sum_{i=1}^r \frac{(u_i^\top a_{:,j})^2}{\sigma_i^2}\right),
\end{align}
where we set $x_i = (h^\top u_i)\sigma_i$ and $y_i = \frac{u_i^\top a_{:,j}}{\sigma_i}$.

The first term in \eqref{eq:cauchy_schwarz1} represents the total discriminative signal:
\begin{align}
\sum_{i=1}^r (h^\top u_i)^2 \sigma_i^2 &= h^\top U\Sigma^2 U^\top h \notag\\
&= h^\top AA^\top h \notag\\
&= \|A^\top h\|_2^2 \notag\\
&= \sum_{\ell=1}^d (h^\top a_{:,\ell})^2 \notag\\
&= \sum_{\ell=1}^d d_\ell^2 \notag\\
&= \sum_{\ell=1}^d \|a_{C_i,\ell} - a_{\neg C_i,\ell}\|_2^2,\label{eq:first_term}
\end{align}
where the first equality follows from the SVD $AA^\top = U\Sigma^2 U^\top$, the second equality follows from the identity $\|A^\top h\|_2^2 = h^\top AA^\top h$, and the last equality follows from the definition of $d_\ell$ in \eqref{eq:disc_score}.

The second term in \eqref{eq:cauchy_schwarz1} is exactly the leverage score:
\begin{align}\label{eq:lev_score_identity}
\sum_{i=1}^r \frac{(u_i^\top a_{:,j})^2}{\sigma_i^2} = a_{:,j}^\top U\Sigma^{-2}U^\top a_{:,j} = a_{:,j}^\top (AA^\top)^\dagger a_{:,j} = \ell_j^{\mathrm{col}},
\end{align}
where the equalities follow from the properties of the pseudoinverse $(AA^\top)^\dagger = U\Sigma^{-2}U^\top$ and the definition of column leverage score in Lemma~\ref{lem:feature_lev_score}.

Combining \eqref{eq:cauchy_schwarz1}, \eqref{eq:first_term}, and \eqref{eq:lev_score_identity}, we obtain:
\begin{align}\label{eq:dj_bound}
d_j^2 \leq \left(\sum_{\ell=1}^d d_\ell^2\right) \cdot \ell_j^{\mathrm{col}}.
\end{align}

Rearranging \eqref{eq:dj_bound} gives a lower bound on the leverage score:
\begin{align}\label{eq:lev_lower_bound}
\ell_j^{\mathrm{col}} \geq \frac{d_j^2}{\sum_{\ell=1}^d d_\ell^2} = \frac{\|a_{C_i,j} - a_{\neg C_i,j}\|_2^2}{\sum_{\ell=1}^d \|a_{C_i,\ell} - a_{\neg C_i,\ell}\|_2^2}.
\end{align}

Finally, we relate leverage scores to the inclusion probability. From Lemma~\ref{lem:feature_lev_score},
define the (base) sampling weights $q_j := \ell_j^{\mathrm{col}}/r$. Let $\pi_j := \mathbb{P}(j\in\mathcal S)$
be the probability that feature $j$ is included in the sampled set $\mathcal S$ of size $s$ produced by
weighted sampling without replacement.

Under successive weighted sampling without replacement, at each draw the conditional probability of selecting
$j$ given that it has not been selected is at least $q_j$. Hence
\begin{align*}
    \mathbb{P}(j\notin\mathcal S)\leq (1-q_j)^s,
\end{align*}
so
\begin{align*}
\pi_j & ~\ge 1-(1-q_j)^s \ge 1-e^{-s q_j}\\
& ~= 1-\exp\!\left(-\frac{s}{r}\ell_j^{\mathrm{col}}\right).
\end{align*}
Combining with \eqref{eq:lev_lower_bound}, we obtain
\begin{align*}
    \pi_j & ~\geq 1-\exp\!\left(-\frac{s}{r}\cdot
\frac{d_j^2}{\sum_{\ell=1}^d d_\ell^2}\right)\\
& ~= 1-\exp\!\left(-\frac{s}{r}\cdot
\frac{(a_{C_i,j}-a_{\neg C_i,j})^2}{\sum_{\ell=1}^d (a_{C_i,\ell}-a_{\neg C_i,\ell})^2}\right).
\end{align*}
In particular, using $1-e^{-x}\ge (1-e^{-1})\min\{1,x\}$ for $x\ge 0$ yields the proportional lower bound.

\end{proof}

\section{Derivation and Regularity Analysis of the Invariant VAE Objective}
\subsection{Variational Upper Bound Derivation for Disentanglement}
\label{app:vae_derivations}
This section provides the complete theoretical derivations for the invariant VAE component, which builds upon the SAILER framework~\citep{cfw+21} and adapts it to the federated learning setting.

\begin{definition}[Mutual Information]
\label{def:mutual_info}
For random variables $X$, $Y$ with joint distribution $p(x,y)$, their mutual information is defined as:
$I(X;Y) = \mathbb{E}_{p(x,y)}\left[\log \frac{p(x,y)}{p(x)p(y)}\right]$.
\end{definition}

Our goal is to minimize the mutual information $I(z;c)$ between learned representations $z$ and confounding variables $c$, while maintaining the model's primary learning objectives. Using variational inference techniques and Lemma~\ref{lem:mutual_info}, we can show following upper bound for $I(z;c)$. 

\begin{lemma}[Mutual Information Decomposition]\label{lem:mutual_info}
For the Markov chain $c \rightarrow x \rightarrow z$, the mutual information $I(z;c)$ can be decomposed as: 
\begin{align*}
I(z;c) = I(z;x) - I(z;x|c)
\end{align*}
\end{lemma}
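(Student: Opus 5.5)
We prove Lemma~\ref{lem:mutual_info} by expanding $I(z;x,c)$ with the chain rule for mutual information in two different orders, and then using the Markov structure $c \rightarrow x \rightarrow z$ to kill one term. Everything is written directly from Definition~\ref{def:mutual_info} as expectations of log-density ratios under the joint law $p(c,x,z) = p(c)\,p(x|c)\,q_\theta(z|x)$.

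\textbf{Step 1: two chain-rule expansions.} First we verify the chain rule in the two forms we need,
\begin{align*}
I(z;x,c) &= I(z;c) + I(z;x\mid c),\\
I(z;x,c) &= I(z;x) + I(z;c\mid x).
\end{align*}
Each identity follows by factoring the ratio inside the expectation. For the first,
\begin{align*}
\frac{p(z,x,c)}{p(z)\,p(x,c)} = \frac{p(z,c)}{p(z)\,p(c)}\cdot\frac{p(z,x\mid c)}{p(z\mid c)\,p(x\mid c)}.
\end{align*}
Taking logs and then expectations under the joint turns this product into the sum $I(z;c) + I(z;x\mid c)$. The second identity is obtained the same way, with the roles of $x$ and $c$ exchanged.

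\textbf{Step 2: the Markov property.} Under the chain $c\rightarrow x\rightarrow z$ the encoder depends only on $x$, so $p(z\mid x,c) = q_\theta(z\mid x) = p(z\mid x)$. This is exactly conditional independence of $z$ and $c$ given $x$. The integrand of $I(z;c\mid x)$ is $\log\frac{p(z\mid x,c)}{p(z\mid x)}$, which is therefore identically zero, so $I(z;c\mid x)=0$.

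\textbf{Step 3: combine.} Equating the two expansions from Step 1 and substituting $I(z;c\mid x)=0$ gives $I(z;c) + I(z;x\mid c) = I(z;x)$. Rearranging yields the claimed decomposition $I(z;c) = I(z;x) - I(z;x\mid c)$.

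The only delicate step is making sure all the quantities are finite, so that subtracting them is legitimate. In the scATAC-seq setting $x$ is discrete (binary) and $c$ is typically discrete as well. The encoder $q_\theta(z\mid x)$ is Gaussian with positive variance, so $I(z;x)\le \E_x[\mathrm{KL}(q_\theta(z\mid x)\,\|\,p(z))]<\infty$. Each term is then a finite, nonnegative expectation of a well-defined log-ratio, and the identity also implies $I(z;c)\le I(z;x)$. That inequality is the bound used downstream to justify $\mathcal{L}_{\text{marginal}}$ in Corollary~\ref{cor:opt_objective}.
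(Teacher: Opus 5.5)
Your proof is correct and follows the same route as the paper. The paper cites the identity $I(z;c) = I(z;x) - I(z;x|c) + I(z;c|x)$ directly and removes the last term using the Markov property; equating your two chain-rule expansions of $I(z;x,c)$ derives exactly that identity, and your finiteness check via $I(z;x)\le \E_x[\KL(q_\theta(z|x)\,\|\,p(z))]<\infty$ is a small extra, not in the paper, that makes the subtraction legitimate.
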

\begin{proof}
Starting with the chain rule of mutual information:
\begin{align*}
I(z;c) &= I(z;x) - I(z;x|c) + I(z;c|x).
\end{align*}
By the Markov chain property, we have $I(z;c|x) = 0$ since $z$ depends only on $x$. Therefore:
\begin{align*}
I(z;c) = I(z;x) - I(z;x|c).
\end{align*}
\end{proof}

\begin{theorem}[Variational Upper Bound]\label{thm:var_upper_bound}
For any variational encoder distribution $q_\theta(z|x)$ and decoder distribution $p_\phi(x|z,c)$, we have:
\begin{align}
I(z;c) \leq \E_{x\sim q(x)} \big[\KL[q_\theta(z|x) \| q_\theta(z)]\big] - H(x|c) - \E_{x,c\sim q(x,c), z\sim q_\theta(z|x)}\big[\log p_\phi(x|z,c)\big]
\end{align}
where $q(x),q(x,c)$ are empirical distribution with respect to $x$ and $(x,c)$. Note that $H(x|c)$ would be a constant with respect to our optimization.
\end{theorem}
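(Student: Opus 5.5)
The plan is to combine Lemma~\ref{lem:mutual_info} with two standard variational facts: one upper bound for $I(z;x)$ and one lower bound for $I(z;x|c)$. By Lemma~\ref{lem:mutual_info}, under the Markov chain $c \rightarrow x \rightarrow z$ induced by the encoder $q_\theta(z|x)$ (with $(x,c)\sim q(x,c)$ empirical), we have $I(z;c) = I(z;x) - I(z;x|c)$. It therefore suffices to show
\[
I(z;x) \le \E_{x\sim q(x)}\big[\KL[q_\theta(z|x)\|q_\theta(z)]\big]
\]
and
\[
I(z;x|c) \ge H(x|c) + \E_{x,c\sim q(x,c),\,z\sim q_\theta(z|x)}\big[\log p_\phi(x|z,c)\big].
\]
Subtracting the second from the first then gives exactly the claimed bound.

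For the first term, we take $q_\theta(z)=\E_{x\sim q(x)}[q_\theta(z|x)]$ to be the aggregate posterior, which is the true marginal of $z$ under the joint $q(x)q_\theta(z|x)$. Expanding the definition of mutual information (Definition~\ref{def:mutual_info}) then gives
\[
I(z;x)=\E_{q(x)q_\theta(z|x)}\left[\log \frac{q_\theta(z|x)}{q_\theta(z)}\right] = \E_{x}\big[\KL[q_\theta(z|x)\|q_\theta(z)]\big],
\]
so the first inequality in fact holds with equality. If instead one interprets $q_\theta(z)$ as an arbitrary variational marginal $r(z)$, the same computation yields $\E_x \KL[q_\theta(z|x)\|r(z)] = I(z;x) + \KL[q_\theta(z)\|r(z)] \ge I(z;x)$, which still gives the inequality. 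I will note both readings.

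For the conditional term, the plan is the Barber--Agakov argument. We write
\[
I(z;x|c) = H(x|c) - H(x|z,c) = H(x|c) + \E_{q(x,c)q_\theta(z|x)}[\log q(x|z,c)],
\]
where $q(x|z,c)$ is the true conditional under the joint $q(x,c)q_\theta(z|x)$. Inserting the decoder and using $\E_{z,c}\KL[q(x|z,c)\|p_\phi(x|z,c)]\ge 0$, we get
\[
\E[\log q(x|z,c)] \ge \E[\log p_\phi(x|z,c)],
\]
which is the desired lower bound. Because the sign conventions in the statement are easy to get wrong, I will check that $-I(z;x|c) \le -H(x|c) - \E[\log p_\phi]$ matches the displayed terms exactly; it does.

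The main obstacle is bookkeeping rather than depth. First, we must make sure that every expectation is taken under the single joint $q(x,c)q_\theta(z|x)$, so that the Markov property $I(z;c|x)=0$ used in Lemma~\ref{lem:mutual_info} is valid. Second, for binary $x$ with an empirical distribution, we must confirm that $H(x|c)$ is a finite, nonnegative discrete entropy that does not depend on $(\theta,\phi)$. This justifies the remark that $H(x|c)$ is a constant, which is what makes the bound usable as the objective in Corollary~\ref{cor:opt_objective}.
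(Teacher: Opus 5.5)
Your proposal is correct and follows essentially the same route as the paper. You decompose $I(z;c)=I(z;x)-I(z;x|c)$ via Lemma~\ref{lem:mutual_info}, identify $I(z;x)$ exactly with $\E_{x}[\KL[q_\theta(z|x)\|q_\theta(z)]]$ for the aggregate posterior, and bound $H(x|z,c)$ by the decoder cross-entropy using nonnegativity of $\KL[q(x|z,c)\|p_\phi(x|z,c)]$; your lower bound on $I(z;x|c)$ is just the paper's bound on $H(x|z,c)$ rearranged, and your remark about an arbitrary marginal $r(z)$ is a harmless generalization.
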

\begin{proof}[Proof of Theorem~\ref{thm:var_upper_bound}]
Starting with the decomposition from Lemma~\ref{lem:mutual_info}, we express the mutual information $I(z;c)$ as:
\begin{align}
I(z;c) &= I(z;x) - I(z;x|c) \nonumber \\
&= I(z;x) - H(x|c) + H(x|z,c), \label{eq:mi_decomp}
\end{align}
where the first step follows from the chain rule of mutual information, and the second step follows from the definition of conditional mutual information $I(z;x|c) = H(x|c) - H(x|z,c)$.

We analyze the terms in Eq.~\eqref{eq:mi_decomp} separately. First, the mutual information term $I(z;x)$ is defined as the KL divergence between the joint and the product of marginals, averaged over $x$:
\begin{align}
I(z;x) = \mathbb{E}_{x \sim q(x)} \left[ \text{KL}(q_\theta(z|x) \| q_\theta(z)) \right], \label{eq:mi_term}
\end{align}
where $q_\theta(z) = \mathbb{E}_{x \sim q(x)}[q_\theta(z|x)]$ is the aggregated posterior.

Next, we bound the conditional entropy term $H(x|z,c)$. Let $q(x|z,c)$ denote the true (intractable) inverse posterior induced by the data distribution $q(x,c)$ and the encoder $q_\theta(z|x)$. For any variational decoder $p_\phi(x|z,c)$, the KL divergence is non-negative:
\begin{align*}
\text{KL}(q(x|z,c) \| p_\phi(x|z,c)) \geq 0.
\end{align*}
Expanding the KL divergence definition, we have:
\begin{align*}
\mathbb{E}_{x \sim q(x|z,c)} [\log q(x|z,c) - \log p_\phi(x|z,c)] \geq 0, 
\end{align*}
therefore
\begin{align*}
-\mathbb{E}_{x \sim q(x|z,c)} [\log q(x|z,c)] \leq -\mathbb{E}_{x \sim q(x|z,c)} [\log p_\phi(x|z,c)].
\end{align*}
Taking the expectation over the joint distribution of the conditioning variables $(z,c)$ (induced by $x,c \sim q(x,c)$ and $z \sim q_\theta(z|x)$), we obtain the bound for the conditional entropy:
\begin{align}
H(x|z,c) &= \mathbb{E}_{z,c} \left[ -\mathbb{E}_{x \sim q(x|z,c)} [\log q(x|z,c)] \right] \nonumber \\
&\leq \mathbb{E}_{z,c} \left[ -\mathbb{E}_{x \sim q(x|z,c)} [\log p_\phi(x|z,c)] \right] \nonumber \\
&= -\mathbb{E}_{x,c \sim q(x,c), z \sim q_\theta(z|x)} [\log p_\phi(x|z,c)], \label{eq:entropy_bound}
\end{align}
where the first equality is the definition of conditional entropy, the inequality follows from the variational principle (non-negativity of KL), and the final equality follows from the law of iterated expectations.

Finally, substituting Eq.~\eqref{eq:mi_term} and Eq.~\eqref{eq:entropy_bound} back into Eq.~\eqref{eq:mi_decomp}, we arrive at the upper bound:
\begin{align}
I(z;c) \leq \mathbb{E}_{x \sim q(x)} \left[ \text{KL}(q_\theta(z|x) \| q_\theta(z)) \right] - H(x|c) - \mathbb{E}_{x,c \sim q(x,c), z \sim q_\theta(z|x)} [\log p_\phi(x|z,c)].
\end{align}
This completes the proof.
\end{proof}

\subsection{Regularity Assumptions for Non-Convex Federated Optimization}\label{sec:appendix_reg}
\begin{assumption}[Encoder Regularity]\label{ass:encode_reg}
The encoder network $q_\theta(z|x) = \mathcal{N}(\mu_\theta(x), \sigma^2_\theta(x))$ satisfies:
\begin{enumerate}
    \item \textbf{Parameter Lipschitz continuity:} 
     For any input $x \in \mathcal{X}$ and any pair of parameter vectors $\theta_1, \theta_2$, the mapping functions satisfy:
    \begin{align*}
        \|\mu_{\theta_1}(x) - \mu_{\theta_2}(x)\| &\le L_{\mu} \|\theta_1 - \theta_2\|, \\
        \|\sigma_{\theta_1}(x) - \sigma_{\theta_2}(x)\| &\le L_{\sigma} \|\theta_1 - \theta_2\|.
    \end{align*}
    \item \textbf{Bounded variance:} $\sigma_{\min} \leq \sigma_\theta(x) \leq \sigma_{\max}$ for all $x \in \mathcal{X}$;
    \item \textbf{Bounded mean:} $\|\mu_\theta(x)\| \leq M$ for all $x \in \mathcal{X}$. 
\end{enumerate}
\end{assumption}

\begin{assumption}[Decoder Invariance Properties]\label{assumption:decoder_invariance}
The decoder $p_\phi(x|z,c)$ follows a Gaussian distribution satisfies the following properties for invariant representation learning:
\begin{enumerate}
    \item \textbf{Bounded interaction strength:} There exists a constant $\gamma > 0$ such that for any $z \in \R^{d_z}$ and confounding factors $c_1, c_2$, the decoder's sensitivity to $z$ has bounded variation:
    \begin{align*}
        \left\|\nabla_z \log p_\phi(x|z,c_1) - \nabla_z \log p_\phi(x|z,c_2)\right\| \leq \gamma \|c_1 - c_2\|.
    \end{align*}
    This ensures that technical variations $c$ only moderately modulate how biological signals $z$ are decoded. 
    
    \item \textbf{Gradient orthogonality:} Let $\mathcal{L}_{\text{recon}}(\theta,\phi) = -\mathbb{E}_{x,c}[\mathbb{E}_{z \sim q_\theta(z|x)}[\log p_\phi(x|z,c)]]$. For the gradient components corresponding to $z$-dependent and $c$-dependent transformations:
    \begin{align*}
        \left|\mathbb{E}_{x,c,z}\left[\left\langle \frac{\partial \log p_\phi(x|z,c)}{\partial z}, \frac{\partial \log p_\phi(x|z,c)}{\partial c} \right\rangle\right]\right| \leq \kappa \sqrt{\mathbb{E}\left[\left\|\frac{\partial \log p_\phi}{\partial z}\right\|^2\right] \cdot \mathbb{E}\left[\left\|\frac{\partial \log p_\phi}{\partial c}\right\|^2\right]}
    \end{align*}
    for a small constant $\kappa \in (0,1)$.
    
    \item \textbf{Confounding factor necessity:} The decoder requires confounding information for accurate reconstruction: there exists $\Delta > 0$ such that
    \begin{align*}
        \inf_{z} \sup_{c_1 \neq c_2} D_{\text{KL}}(p_\phi(x|z,c_1) \| p_\phi(x|z,c_2)) \geq \Delta.
    \end{align*}
\end{enumerate}
\end{assumption}

\section{End-to-End Convergence Analysis of \proposed in High-Dimensional Spaces}
This appendix provides the detailed proofs and technical verifications referenced in Section~\ref{sec:5}. We first recap the FedAvg convergence theory (Appendix~\ref{app:lstz22}), then show that our VAE loss satisfies the required conditions (Appendix~\ref{proof:VAE}), analyze the approximation quality of sampling (Appendix~\ref{proof:approx_quality}), and finally present the end-to-end proof of Theorem~\ref{thm:main_convergence} (Appendix~\ref{proof:main_convergence}).
\subsection{Preliminaries: Convergence Framework for Non-Convex FedAvg}\label{app:lstz22}
\begin{theorem}[FedAvg Convergence for Non-Convex Settings (Theorem 4.7 in~\cite{lstz22})]
\label{thm:fedavg-convergence}
Let $N$ denote the number of clients, $K$ denote the number of local update steps and $D_{\text{model}}$ denote the dimension of the model parameters. Consider the FedAvg algorithm with local learning rate $\eta_l$ and global learning rate $\eta_g$. 
\begin{itemize}
    \item $(a,b)$-semi-smooth: For any $W, U \in \R^{D_{\text{model}}}$,
    \begin{align*}
        \mathcal{L}(U) \leq \mathcal{L}(W) + \langle\nabla \mathcal{L}(W), U - W\rangle + b\|W - U\|_2^2 + a\|W - U\|_2 \cdot \mathcal{L}(W)^{1/2}.
    \end{align*}
    
    \item $(\alpha,\beta)$-semi-Lipschitz: For any $W, U \in \R^{D_{\text{model}}}$,
    \begin{align*}
        \|\nabla \mathcal{L}(W) - \nabla \mathcal{L}(U)\|_2^2 \leq \beta^2 \|W - U\|_2^2 + \alpha^2 \|W - U\|_2 \cdot \max\{\L(W)^{1/2},\L(U)^{1/2}\}
    \end{align*}
    
    \item $(\tau_1,\tau_2)$-non-critical point: For any $U$ in a neighborhood of $U^*$ (a minimizer of $L$),
    \begin{align*}
        \tau_1^2 \mathcal{L}(U) \leq \|\nabla \mathcal{L}(U)\|_2^2 \leq \tau_2^2 \mathcal{L}(U).
    \end{align*}
\end{itemize}

Let $\mathcal{L}(W) = \frac{1}{N}\sum_{c=1}^N \mathcal{L}_i(W)$ denote the global loss function. Assume the stochastic gradients $g_i(W) = \nabla \mathcal{L}_i(W; \zeta_i)$ are unbiased with variance bounded by $\sigma^2$, i.e., $\mathbb{E}[g_i(W)] = \nabla \mathcal{L}_i(W)$ and $\mathbb{E}[\|g_i(W) - \nabla \mathcal{L}_i(W)\|^2] \leq \sigma^2$.

If the learning rates satisfy
\begin{align*}
    \eta_l &\leq \min\left\{\frac{1}{\alpha^2 K}, \frac{1}{10^2K\tau_2(\beta + \alpha)}, \frac{1}{10^2\sqrt{K}(\beta + \alpha)}\right\}, \\
    \eta_g &\leq \frac{\tau_1^2}{20Kb\eta_l},
\end{align*}
then for FedAvg with $U^t$ denoting the global model at round $t$, we have
\begin{align*}
    \mathbb{E}[\mathcal{L}(U^t) - \mathcal{L}(U^*)] \leq (1 - \lambda_1)^t \cdot (\mathcal{L}(U^0) - \mathcal{L}(U^*)) + 2\lambda_2,
\end{align*}
where
\begin{align*}
    \lambda_1 &= \frac{K\eta_l\eta_g}{4}(1 - 4bK\eta_l\eta_g - 2a)\tau_1^2, \\
    \lambda_2 &= (1 + a + bK\eta_l\eta_g)\frac{K\eta_l\eta_g}{10}\sigma^2.
\end{align*}
The parameter $\lambda_1$ determines the convergence rate, while $\lambda_2$ represents the optimization error due to stochastic gradients. The theorem shows a linear convergence rate up to the variance-induced error floor $2\lambda_2$.
\end{theorem}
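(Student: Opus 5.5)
Since this is Theorem~4.7 of \cite{lstz22}, I would reproduce its one-round descent argument rather than derive anything new. First, a simplification. At a minimizer $U^*$ we have $\nabla\L(U^*)=0$, so the lower half of the $(\tau_1,\tau_2)$-non-critical-point condition forces $\L(U^*)=0$. The claim is therefore equivalent to a bound on $\E[\L(U^t)]$ itself. This matters because $\L(\cdot)^{1/2}$ appears as a multiplier in both the semi-smooth and semi-Lipschitz conditions, so working with $\L$ directly keeps those terms homogeneous.

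\textbf{Step 1 (one-round expansion).} Write the FedAvg update as
\[
U^{t+1}=U^t-\eta_g\eta_l\sum_{k=0}^{K-1}\bar g_k,\qquad \bar g_k=\tfrac1N\sum_i g_i(W^t_{i,k}).
\]
I would apply $(a,b)$-semi-smoothness with $W=U^t$ and $U=U^{t+1}$. This produces three pieces:
\begin{itemize}
\item an inner-product term $-\eta_g\eta_l\langle\nabla\L(U^t),\sum_k\bar g_k\rangle$;
\item a quadratic term $b\,\eta_g^2\eta_l^2\|\sum_k\bar g_k\|^2$;
\item a cross term $a\,\eta_g\eta_l\|\sum_k\bar g_k\|\,\L(U^t)^{1/2}$.
\end{itemize}
Taking the conditional expectation and using unbiasedness, I replace $\bar g_k$ by the full local gradients $\nabla\L_i(W^t_{i,k})$ plus a zero-mean noise whose second moment is at most $\sigma^2$.

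\textbf{Step 2 (client drift).} This is the main obstacle. I would split $\nabla\L_i(W^t_{i,k})=\nabla\L_i(U^t)+(\nabla\L_i(W^t_{i,k})-\nabla\L_i(U^t))$ and bound the second part with the $(\alpha,\beta)$-semi-Lipschitz condition. That requires controlling $\|W^t_{i,k}-U^t\|$. I would do this by induction on $k$, showing that
\[
\E\|W^t_{i,k}-U^t\|^2\lesssim k^2\eta_l^2\bigl(\|\nabla\L(U^t)\|^2+\tau_2^2\L(U^t)\bigr)+k\eta_l^2\sigma^2 .
\]
The conditions $\eta_l\le 1/(10^2K\tau_2(\beta+\alpha))$ and $\eta_l\le 1/(10^2\sqrt K(\beta+\alpha))$ are what make this induction close with small constants. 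The $\alpha^2$ term, whose multiplier is $\max\{\L(W)^{1/2},\L(U)^{1/2}\}$, needs AM--GM to absorb it; this is where $\eta_l\le 1/(\alpha^2K)$ enters. The upper half of the non-critical-point condition, $\|\nabla\L\|^2\le\tau_2^2\L$, turns every gradient norm into a multiple of $\L(U^t)$.

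\textbf{Step 3 (contraction and unrolling).} With the drift bounded, the inner product contributes $-K\eta_l\eta_g\|\nabla\L(U^t)\|^2$ up to a small fraction. The quadratic and cross terms contribute at most $(4bK\eta_l\eta_g+2a)$ times the same quantity, plus $\sigma^2$-terms. Applying the lower bound $\|\nabla\L(U^t)\|^2\ge\tau_1^2\L(U^t)$ then gives the recursion
\[
\E\L(U^{t+1})\le(1-\lambda_1)\E\L(U^t)+c_t .
\]
Here $c_t$ collects the $\sigma^2$-terms; the condition $\eta_g\le\tau_1^2/(20Kb\eta_l)$ keeps $1-4bK\eta_l\eta_g-2a$ positive. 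Unrolling $t$ steps, the initial gap contracts as $(1-\lambda_1)^t$. The accumulated noise must be shown to be at most $2\lambda_2$, so I would bound $c_t$ by a fraction of $\lambda_1\lambda_2$ and sum the geometric series. The matching of constants in this last step should be checked carefully against \cite{lstz22}.
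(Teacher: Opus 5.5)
The paper gives no proof of this statement; it imports Theorem~4.7 of \cite{lstz22} verbatim. Your one-round descent skeleton (semi-smooth expansion, drift control, $\tau_1$ lower bound, unrolling) is therefore the natural route. Your observation that the $\tau_1$-bound at $U^*$ forces $\L(U^*)=0$ is also correct. However, Step~2 applies the $(\alpha,\beta)$-semi-Lipschitz and $\tau_2$ bounds to the individual $\L_i$, while the hypotheses constrain only the average $\L$. The ``upper half of the non-critical-point condition'' for $\L$ does not control $\|\nabla\L_i\|$, so your drift bound in terms of $\|\nabla\L(U^t)\|^2$ and $\L(U^t)$ does not follow. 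This cannot be repaired from the stated hypotheses. Take $N=2$, $K=2$, $\sigma=0$ and $\L_{1,2}(w)=\frac{1\pm M}{2}w^2$. The average $\L(w)=\frac{1}{2}w^2$ satisfies every hypothesis with $M$-independent constants. Yet one round gives $U^{t+1}=\bigl(1+\eta_g[(1-\eta_l)^2+\eta_l^2M^2-1]\bigr)U^t$, which grows in magnitude once $M\ge 1/\eta_l$, contradicting $\L(U^t)\le(1-\lambda_1)^t\L(U^0)$. You need the conditions for every client, with $\L_i\ge 0$. Then $\L(U^*)=0$ gives $\L_i(U^*)=0$ and $\frac{1}{N}\sum_i\|\nabla\L_i(U)\|^2\le\tau_2^2\L(U)$, which is exactly what replaces a dissimilarity bound. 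Two smaller problems remain. First, the $\tau$-condition is assumed only near $U^*$, but you use it at every $U^t$ and $W^t_{i,k}$; you must show the iterates stay there, or assume it along the trajectory as the paper later does. Second, $\eta_g\le\tau_1^2/(20Kb\eta_l)$ only gives $4bK\eta_l\eta_g\le\tau_1^2/5$ and says nothing about $a$. Positivity of $1-4bK\eta_l\eta_g-2a$ therefore needs a separate condition such as $2a+\tau_1^2/5<1$.

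More seriously, the step you defer to ``checking constants against \cite{lstz22}'' cannot be carried out. The stated floor $2\lambda_2$ is false even when all hypotheses hold client-wise. Take $N=K=1$, $\L(w)=\frac{1}{2}w^2$, and $g(w)=w+\xi$ with $\E\xi=0$, $\E\xi^2=\sigma^2$.
\begin{itemize}
\item The hypotheses hold globally with $a=\alpha=0$, $b=\frac{1}{2}$, $\beta=1$, $\tau_1^2=\tau_2^2=2$.
\item $\eta_l=1/(100\sqrt{2})$ and $\eta_g=\tau_1^2/(20Kb\eta_l)=1/(5\eta_l)$ are admissible, so $\eta:=\eta_l\eta_g=\frac{1}{5}$ and $2\lambda_2=(1+\frac{1}{10})\frac{2}{50}\sigma^2=0.044\,\sigma^2$.
\item Starting from $U^0=U^*=0$, the theorem asserts $\E\L(U^t)\le 0.044\,\sigma^2$ for all $t$.
\item But $U^{t+1}=\frac{4}{5}U^t-\frac{1}{5}\xi_t$ gives $\E\L(U^t)=\frac{\sigma^2}{18}(1-0.64^t)$, already $\approx 0.0462\,\sigma^2$ at $t=4$.
\end{itemize}
For every admissible $\eta\in(0,\frac{1}{5}]$, the stationary value $\frac{\eta\sigma^2}{2(2-\eta)}$ exceeds $2\lambda_2=(1+\frac{\eta}{2})\frac{\eta\sigma^2}{5}$ by the factor $\frac{5}{4-\eta^2}$. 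Here the semi-smooth expansion is exact and produces $c_t=\frac{1}{2}\eta^2\sigma^2$, about $7.6$ times $2\lambda_1\lambda_2$. In fact, any valid recursion with contraction $1-\lambda_1$ must have $c\ge\lambda_1\E\L_\infty>2\lambda_1\lambda_2$. What your argument can actually deliver is $(1-\lambda_1)^t\Delta_0+c/\lambda_1$ with your own per-round noise $c$. The noise floor in the statement, and hence the $2\lambda_2$ term in Theorem~\ref{thm:main_convergence}, has to be restated in that form rather than matched.
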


\begin{corollary}[Communication Complexity (Corollary 4.8 in~\cite{lstz22})]
\label{cor:communication}
For any desired accuracy $\epsilon > 0$, by choosing the global step-size
\begin{align*}
    \eta_g \leq \min\left\{\frac{\tau_1^2}{20Kb\eta_l}, \frac{2\epsilon}{\sigma^2(1 + a + \tau_1^2/20K)}\right\},
\end{align*}
after $R = \log\left(\frac{2(\mathcal{L}(U^0) - \mathcal{L}(U^*))}{\epsilon}\right) \cdot \frac{1}{\lambda_1}$ communication rounds, we have
\begin{align*}
    \mathbb{E}[\mathcal{L}(U^t) - \mathcal{L}(U^*)] \leq \epsilon.
\end{align*}
\end{corollary}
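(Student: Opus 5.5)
The plan is to derive the corollary directly from Theorem~\ref{thm:fedavg-convergence}. The right-hand side there is $(1-\lambda_1)^R\Delta + 2\lambda_2$, with $\Delta := \L(U^0)-\L(U^*)$. I will split the target accuracy $\epsilon$ evenly between the two terms: the step-size condition should force $2\lambda_2 \le \epsilon/2$, and the choice of $R$ should force $(1-\lambda_1)^R\Delta \le \epsilon/2$. Both step-size requirements are kept, through the minimum defining $\eta_g$, so the hypotheses of Theorem~\ref{thm:fedavg-convergence} still hold and its bound may be invoked at round $t=R$.

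\textbf{Step 1 (variance floor).} From $\eta_g \le \tau_1^2/(20Kb\eta_l)$ I get $bK\eta_l\eta_g \le \tau_1^2/20$. Hence
\[
\lambda_2 \le \Bigl(1+a+\tfrac{\tau_1^2}{20}\Bigr)\frac{K\eta_l\eta_g}{10}\sigma^2 .
\]
Next I substitute the second cap, $\eta_g \le 2\epsilon/\bigl(\sigma^2(1+a+\tau_1^2/(20K))\bigr)$, so that $\sigma^2$ cancels. What remains is $\epsilon$ times
\[
\frac{K\eta_l}{5}\cdot\frac{1+a+\tau_1^2/20}{1+a+\tau_1^2/(20K)} .
\]
The ratio of the two brackets is at most $K$. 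So to conclude I need $K\eta_l$ (or $K^2\eta_l$) to be small, and I plan to get this from the local step-size constraint $\eta_l \le 1/(10^2K\tau_2(\beta+\alpha))$. That constraint gives $K\eta_l \le 1/(100\tau_2(\beta+\alpha))$, which is at most $1/4$ under the usual normalization of the regularity constants. Together this yields $2\lambda_2 \le \epsilon/2$.

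\textbf{Step 2 (contraction).} First I check that $\lambda_1\in(0,1)$. Positivity follows from $4bK\eta_l\eta_g \le \tau_1^2/5$ together with $a$ small enough that $1-\tau_1^2/5-2a>0$. The bound $\lambda_1<1$ holds because $K\eta_l\eta_g\tau_1^2/4$ is small. Then I use $(1-\lambda_1)^R \le e^{-\lambda_1 R}$. With $R = \lambda_1^{-1}\log(2\Delta/\epsilon)$ this gives $(1-\lambda_1)^R\Delta \le \epsilon/2$. Adding the bounds from Steps 1 and 2 gives $\E[\L(U^R)-\L(U^*)]\le\epsilon$.

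\textbf{Main obstacle.} I expect the difficulty to lie in Step 1's constant bookkeeping, for two reasons. First, the bracket in the stated step size is $\tau_1^2/(20K)$, whereas the bound on $bK\eta_l\eta_g$ naturally produces $\tau_1^2/20$. Second, the factor $K\eta_l$ does not disappear on its own. My first attempt will be to absorb both into the local step-size constraints of Theorem~\ref{thm:fedavg-convergence}. If that is not enough, I would tighten the variance cap by a factor of $K$, or make explicit an assumption $K\eta_l\le 1$, matching the normalization implicitly used in \cite{lstz22}.
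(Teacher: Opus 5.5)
Your overall plan is the natural one. The paper gives no proof of its own and only cites \cite{lstz22}, so the route is to split $\epsilon=\epsilon/2+\epsilon/2$, handle the contraction term with $R=\lambda_1^{-1}\log(2\Delta/\epsilon)$ and $(1-\lambda_1)^R\le e^{-\lambda_1 R}$, and handle the floor with the step-size caps. Step~2 is fine provided $\lambda_1\in(0,1]$. Theorem~\ref{thm:fedavg-convergence} tacitly presupposes this, but the caps do not imply it (positivity needs something like $2a+\tau_1^2/5<1$), so state it as an assumption.

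The genuine gap is Step~1. The caps give only
\[
2\lambda_2\;\le\;\tfrac{2}{5}\,K\eta_l\cdot\frac{1+a+\tau_1^2/20}{1+a+\tau_1^2/(20K)}\,\epsilon\;\le\;\tfrac{2}{5}\,K\eta_l\,\min\Bigl\{K,\;1+\tfrac{\tau_1^2}{20}\Bigr\}\,\epsilon .
\]
Bound the ratio by $1+\tau_1^2/20$ and not only by $K$: with the bound $K$ alone you would need $K^2\eta_l$ small, which nothing supplies. Even then, no hypothesis controls $K\eta_l$. The constraint $\eta_l\le 1/(10^2K\tau_2(\beta+\alpha))$ gives only $K\eta_l\le 1/(100\tau_2(\beta+\alpha))$, which is unbounded as $\tau_2(\beta+\alpha)\to 0$. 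The ``usual normalization'' is therefore an assumption you are adding, not one you are given. Your fallbacks do not close the gap either. Shrinking the cap by $K$ leaves the factor $\eta_l$ in place. Assuming $K\eta_l\le 1$ suffices only if additionally $K=1$ or $\tau_1^2\le 5$.

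This is a defect of the statement, not of your bookkeeping. Take $\L(U)=\frac{\mu}{2}\|U\|_2^2$ with $K=1$ and $\mu<1/2$. Then $a=0$, $b=\mu/2$, $\tau_1=\tau_2=\sqrt{2\mu}$, and $\alpha=\beta=\mu$ are admissible constants. Every local constraint allows $\eta_l=1/(200\mu)$, and the first global cap reads $\eta_g\le 1/(5\eta_l)=40\mu$. For $\epsilon<20\mu\sigma^2$ the second cap is the binding one. Taking $\eta_g$ equal to it gives $2\lambda_2\ge\eta_l\eta_g\sigma^2/5\approx\epsilon/(500\mu)\gg\epsilon$.

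The conclusion itself also fails, not just the bound. With identical clients and independent additive gradient noise, FedAvg here is SGD with step $\gamma=\eta_l\eta_g$. Its stationary error is $\approx\gamma\sigma^2/(4N)\approx\epsilon/(400N\mu)$. Since $(1-\gamma\mu)^{2R}\le(\epsilon/2\Delta)^4$, this error is essentially reached after $R$ rounds, so the claim fails once $\mu<1/(400N)$.

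Your proof must therefore carry an explicit extra hypothesis. One option is to assume $K\eta_l\min\{K,1+\tau_1^2/20\}\le 5/4$. The other is to replace the second cap by $\eta_g\le 5\epsilon/\bigl(2K\eta_l\sigma^2(1+a+\tau_1^2/20)\bigr)$, which gives $2\lambda_2\le\epsilon/2$ exactly. With either, your Steps~1 and~2 go through verbatim, with $R$ rounded up to an integer and the bound read at $t=R$.
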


\subsection{Verifying Smoothness and Regularity Conditions for the VAE Loss}\label{proof:VAE}

We now establish that our VAE loss function (Corollary~\ref{cor:opt_objective}) satisfies the relaxed smoothness conditions required for federated convergence analysis. This forms the crucial theoretical bridge connecting \proposed to the FedAvg convergence guarantee, leveraging both the structural properties of VAEs and the non-smooth optimization framework of~\cite{lstz22}.

\begin{theorem}[VAE Loss Satisfies Semi-smoothness]\label{thm:vae-semi-smooth}
Let $\mathcal{L}(\cdot)$ be the VAE loss function defined in Corollary~\ref{cor:opt_objective}:
For any parameter vector $W=(\theta,\phi)$, the loss is:
\begin{align*}
\mathcal{L}(W) = -(1+\lambda)\E[\log p_\phi(x|z,c)] + \E[\KL[q_\theta(z|x)\|p(z)]] + \lambda\E[\KL[q_\theta(z|x)\|q_\theta(z)]]
\end{align*}
Under Assumptions~\ref{ass:encode_reg} and \ref{assumption:decoder_invariance}, this loss function satisfies Assumptions in ~\ref{thm:fedavg-convergence} with the following parameters:
\begin{enumerate}
    \item {$(a,b)$-semi-smoothness} with $a = O(\sqrt{1+\lambda}(L_{\text{enc}} + L_{\text{dec}}))$ and $b = O((1+\lambda)(L_{\text{enc}} + L_{\text{dec}})^2)$;
    
    \item {$(\tau_1,\tau_2)$-non-critical gradient}
    in the region of interest for optimization;
    
    \item {$(\alpha,\beta)$-semi-Lipschitz}
    with $\beta^2 = O((1+\lambda)^2 (L_{\text{enc}} + L_{\text{dec}})^4)$ and $\alpha^2 = O((1+\lambda)^{3/2} (L_{\text{enc}} + L_{\text{dec}})^4)$.
\end{enumerate}
\end{theorem}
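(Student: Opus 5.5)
We verify each of the three conditions term by term, treating $\mathcal{L}$ as the sum $(1+\lambda)\mathcal{L}_{\text{recon}} + \mathcal{L}_{\text{prior}} + \lambda\mathcal{L}_{\text{marginal}}$. Throughout, $L_{\text{enc}}$ denotes a Lipschitz/smoothness constant of the encoder map $\theta\mapsto(\mu_\theta,\sigma_\theta)$, built from $L_\mu,L_\sigma$ in Assumption~\ref{ass:encode_reg}. Similarly, $L_{\text{dec}}$ bounds the parameter sensitivity of the Gaussian decoder log-likelihood. We work with the reparametrization $z=\mu_\theta(x)+\sigma_\theta(x)\odot\xi$, $\xi\sim\mathcal{N}(0,I)$, so that every expectation is over a fixed distribution and gradients pass inside.

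\textbf{Step 1: Semi-smoothness.} For the two KL terms, the Gaussian closed form $\KL[\mathcal{N}(\mu,\sigma^2)\|\mathcal{N}(0,I)]$ is a smooth function of $(\mu,\sigma)$ on the compact set $\|\mu\|\le M$, $\sigma\in[\sigma_{\min},\sigma_{\max}]$. Composed with the Lipschitz encoder, it yields an ordinary $b$-smooth bound with constant $O(L_{\text{enc}}^2)$. The marginal KL involves the aggregate $q_\theta(z)$, which is a finite mixture of such Gaussians (the empirical $q(x)$). Its log-density has bounded derivatives on this set because $\sigma_{\min}>0$, so it is treated the same way. For the reconstruction term, the Gaussian log-likelihood is quadratic in the residual $x-\hat x_\phi(z,c)$. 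Its second-order Taylor remainder consists of a pure quadratic part, contributing $b$, and a cross term of the form $\|W-U\|\cdot\|\text{residual}\|$. I bound $\|\text{residual}\|$ by $\mathcal{L}_{\text{recon}}^{1/2}\le((1+\lambda)^{-1}\mathcal{L})^{1/2}$. This cross term is exactly the source of the $a\|W-U\|\mathcal{L}(W)^{1/2}$ term, and it explains the $\sqrt{1+\lambda}$ in $a$ and the $(1+\lambda)$ in $b$.

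\textbf{Step 2: Semi-Lipschitz.} Here I write $\nabla\mathcal{L}(W)-\nabla\mathcal{L}(U)$ as a difference of Jacobian-times-residual products. I split it as $J_W(r_W-r_U)+(J_W-J_U)r_U$. The first piece gives $\beta^2\|W-U\|^2$ after squaring. For the second, I apply $(p+q)^2\le 2p^2+2q^2$ and bound $\|r_U\|$ by $\max\{\mathcal{L}(W)^{1/2},\mathcal{L}(U)^{1/2}\}$, which produces the $\alpha^2$ term. The gradient-orthogonality and bounded-interaction conditions of Assumption~\ref{assumption:decoder_invariance} are used at this point. They control the $z$–$c$ cross terms of the decoder Jacobian, so the constants stay $O((L_{\text{enc}}+L_{\text{dec}})^4)$ with no extra $c$-dependent blowup.

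\textbf{Step 3: Non-critical point.} I do not intend to derive this from first principles. I will verify only the upper bound $\|\nabla\mathcal{L}\|^2\le\tau_2^2\mathcal{L}$ from the residual structure, namely $\|\nabla\mathcal{L}\|\le\|J\|\,\|r\|$ plus KL gradients controlled on the compact set. The lower bound $\tau_1^2\mathcal{L}\le\|\nabla\mathcal{L}\|^2$ I will state as holding in the region of interest, in a Polyak–Łojasiewicz-type local form, mirroring how \cite{lstz22} uses it.

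\textbf{Main obstacle.} The hard part is the KL terms. They are not residual-shaped, so bounding them by $\mathcal{L}^{1/2}$ is unnatural. Moreover, the marginal KL has a log-mixture whose gradient is only locally bounded. My first attempt is to absorb both KL terms entirely into the $b$ and $\beta$ constants, using compactness of the parameter region, and to reserve the $a$ and $\alpha$ terms for reconstruction alone. The price is that the constants hold only on that region.
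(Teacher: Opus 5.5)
\textbf{The KL terms cannot be absorbed into $b$ and $\beta$.} This is the step in your plan that fails. Assumption~\ref{ass:encode_reg} makes $h:\theta\mapsto(\mu_\theta(x),\sigma_\theta(x))$ only Lipschitz, and the paper explicitly has piecewise-smooth activations in mind. With $g$ the closed-form Gaussian KL, the Taylor remainder of $g\circ h$ splits as
\[
\bigl[g(h(U))-g(h(W))-\langle\nabla g(h(W)),h(U)-h(W)\rangle\bigr]+\bigl\langle\nabla g(h(W)),\,h(U)-h(W)-J_h(W)(U-W)\bigr\rangle .
\]
The bracket is at most $\tfrac12\beta_g L_{\text{enc}}^2\|U-W\|^2$. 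The second term, however, is only bounded by $2L_{\text{enc}}\|\nabla g(h(W))\|\,\|U-W\|$, which is first order. At a kink of a ReLU-type encoder it genuinely is of that order, so no $b\|U-W\|^2$ can dominate it as $U\to W$.

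Compactness does not fix this. It only helps in the other direction: if $\mathcal{L}$ is bounded below on a region, a first-order term can be moved into the $a$-slot, never into $b$. If you instead assume a $C^2$ encoder, the smoothness constant picks up $\sup\|\nabla g\|$ times an encoder Hessian bound. That bound is not among the assumptions, and the resulting constant is not $O(L_{\text{enc}}^2)$. The same objection applies to putting the KL gradient differences into $\beta$ in Step~2. Your reconstruction argument is fine precisely because there the identical first-order cross term is paired with the residual and lands in the $a$-slot.

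\textbf{What is missing is that the KL terms are residual-shaped in the sense that matters.} On the compact set $\|\mu\|\le M$, $\sigma\in[\sigma_{\min},\sigma_{\max}]$, the prior KL is self-bounding: $\|\nabla_{(\mu,\sigma)}\KL\|^2\le C\,\KL$. Indeed $\nabla_\mu\KL=\mu$ with $\|\mu\|^2\le 2\KL$, and $(\sigma-\sigma^{-1})^2\lesssim\sigma^2-1-\log\sigma^2$ there. By Jensen, the first-order piece becomes $a_{\text{prior}}\|U-W\|\sqrt{\mathcal{L}_{\text{prior}}(W)}$ with $a_{\text{prior}}=O(L_{\text{enc}})$. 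This is exactly the per-component form the paper posits for both KL terms.

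The paper's proof gives each component its own $(a_i,b_i)$ and merges the $a$-parts by Cauchy--Schwarz. It takes $u=(a_{\text{prior}},\sqrt{\lambda}\,a_{\text{marginal}},\sqrt{1+\lambda}\,a_{\text{recon}})$ and $v=(\sqrt{\mathcal{L}_{\text{prior}}},\sqrt{\lambda\mathcal{L}_{\text{marginal}}},\sqrt{(1+\lambda)\mathcal{L}_{\text{recon}}})$, so that $\|v\|_2=\sqrt{\mathcal{L}}$. For $\alpha$ it uses $G_{\max,i}\le G_{\max}/\sqrt{\lambda_i}$, which is the source of the $(1+\lambda)^{3/2}$.

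Your own Step~3 upper bound needs this same inequality anyway. KL gradients that are merely bounded do not give $\|\nabla\mathcal{L}\|^2\le\tau_2^2\mathcal{L}$ where $\mathcal{L}$ is small.

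\textbf{A smaller issue in Step~2.} The term $\|(J_W-J_U)r_U\|^2$ carries $\|r_U\|^2$, not $\|r_U\|$. To reach $\alpha^2\|W-U\|\max\{\mathcal{L}(W)^{1/2},\mathcal{L}(U)^{1/2}\}$ you must spend:
\begin{itemize}
\item one factor of $\|J_W-J_U\|$ as at most $2L$;
\item the other factor via a Lipschitz-Jacobian bound;
\item one factor of $\|r_U\|$ as a constant on the region.
\end{itemize}
Assumption~\ref{assumption:decoder_invariance} constrains derivatives in $z$ and $c$, not parameter Jacobians, so it does not supply that control. The paper simply asserts these per-component bounds.
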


\textit{Proof Sketch.}
Given Assumptions~\ref{ass:encode_reg} and~\ref{assumption:decoder_invariance}, which provide Lipschitz constants $L_{\text{enc}}$ and $L_{\text{dec}}$ for the encoder and decoder networks respectively, we can characterize the behavior of each loss component. For networks with piecewise smooth activation functions, the composite loss inherits semi-smoothness properties with constants that scale with the network Lipschitz constants. The full derivation proceeds by bounding the remainder terms of each loss component (prior, marginal, reconstruction) and combining them via Cauchy-Schwarz to obtain the final parameters $a, b, \alpha, \beta$.

\subsubsection{Proof of Theorem~\ref{thm:vae-semi-smooth}.1}\label{app:D.2.1}
\begin{proof}[Proof of Theorem~\ref{thm:vae-semi-smooth}, Part 1: $(a,b)$-semi-smoothness]
We prove that the VAE loss function $\mathcal{L}(W)$ satisfies $(a,b)$-semi-smoothness. Let $W = (\theta, \phi)$ denote the combined parameter vector. From Corollary~\ref{cor:opt_objective}:
\begin{align}
\mathcal{L}(W) = \mathcal{L}_{\text{prior}}(W) + \lambda\mathcal{L}_{\text{marginal}}(W) + (1+\lambda)\mathcal{L}_{\text{recon}}(W). 
\label{eq:vae-decomp}
\end{align}

For the remainder term $R_f(U, W) = f(U) - f(W) - \langle\nabla f(W), U - W\rangle$ of any function $f$, the total remainder is:
\begin{align}
R_{f}(U, W) = R_{\text{prior}}(U, W) + \lambda R_{\text{marginal}}(U, W) + (1+\lambda) R_{\text{recon}}(U, W). \label{eq:remainder-decomp}
\end{align}

Based on the theoretical framework for neural network optimization~\cite{lstz22}, each component satisfies semi-smoothness with the following scaling. For the prior KL term $\mathcal{L}_{\text{prior}}$, which is a composition of the smooth KL divergence function with Lipschitz encoder outputs (Assumption~\ref{ass:encode_reg}), we have:
\begin{align}
R_{\text{prior}}(U, W) \leq b_{\text{prior}} \|U-W\|_2^2 + a_{\text{prior}}\|U-W\|_2\sqrt{\mathcal{L}_{\text{prior}}(W)}, \label{eq:Rp-bound}
\end{align}
where $a_{\text{prior}} = O(L_{\text{enc}})$ and $b_{\text{prior}} = O(L_{\text{enc}}^2)$, with the scaling following from the Lipschitz property of the encoder and the smoothness of the KL divergence when variances are bounded away from zero (Assumption~\ref{ass:encode_reg}, part 2).

Similarly, for the reconstruction loss $\mathcal{L}_{\text{recon}}$, which involves the decoder network:
\begin{align}
R_{\text{recon}}(U, W) \leq b_{\text{recon}} \|U-W\|_2^2 + a_{\text{recon}}\|U-W\|_2\sqrt{\mathcal{L}_{\text{recon}}(W)}, \label{eq:Rr-bound}
\end{align}
where $a_{\text{recon}} = O(L_{\text{enc}} + L_{\text{dec}})$ and $b_{\text{recon}} = O((L_{\text{enc}} + L_{\text{dec}})^2)$.

For the marginal KL term $\mathcal{L}_{\text{marginal}}$, which involves the empirical marginal distribution:
\begin{align}
R_{\text{marginal}}(U, W) \leq b_{\text{marginal}} \|U-W\|_2^2 + a_{\text{marginal}}\|U-W\|_2 \sqrt{\mathcal{L}_{\text{marginal}}(W)}, \label{eq:Rm-bound}
\end{align}
where $a_{\text{marginal}} = O(L_{\text{enc}})$ and $b_{\text{marginal}} = O(L_{\text{enc}}^2)$.

Substituting \eqref{eq:Rp-bound}, \eqref{eq:Rm-bound}, and \eqref{eq:Rr-bound} into \eqref{eq:remainder-decomp}:
\begin{align}
R_{f}(U, W) &\leq \underbrace{(b_{\text{prior}} + \lambda b_{\text{marginal}} + (1+\lambda) b_{\text{recon}})}_{b} \|U-W\|_2^2 \notag\\
&\quad + \|U-W\|_2 \underbrace{(a_{\text{prior}}\sqrt{\mathcal{L}_{\text{prior}}(W)} + \lambda a_{\text{marginal}}\sqrt{\mathcal{L}_{\text{marginal}}(W)} + (1+\lambda) a_{\text{recon}}\sqrt{\mathcal{L}_{\text{recon}}(W)})}_{A(W)}. \label{eq:R-total}
\end{align}

\textbf{Computing constant $b$:} From the scaling relations:
\begin{align*}
b = b_{\text{prior}} + \lambda b_{\text{marginal}} + (1+\lambda) b_{\text{recon}} = O((1+\lambda)(L_{\text{enc}}+L_{\text{dec}})^2). 
\end{align*}

\textbf{Computing constant $a$:} We need to bound $A(W)$ in terms of $\sqrt{\mathcal{L}(W)}$. Define weight vector $w = [1, \sqrt{\lambda}, \sqrt{1+\lambda}]$ and let:
\begin{align*}
u &= [a_{\text{prior}}, \sqrt{\lambda} a_{\text{marginal}}, \sqrt{1+\lambda} a_{\text{recon}}], \\
v &= [\sqrt{\mathcal{L}_{\text{prior}}(W)}, \sqrt{\lambda \mathcal{L}_{\text{marginal}}(W)}, \sqrt{(1+\lambda) \mathcal{L}_{\text{recon}}(W)}]. 
\end{align*}

Then $A(W) = u \cdot v$. By the Cauchy-Schwarz inequality:
\begin{align*}
A(W) \leq \|u\|_2 \|v\|_2 = \|u\|_2 \sqrt{\mathcal{L}(W)},
\end{align*}
where the equality follows from $\|v\|_2^2 = \mathcal{L}_{\text{prior}}(W) + \lambda \mathcal{L}_{\text{marginal}}(W) + (1+\lambda) \mathcal{L}_{\text{recon}}(W) = \mathcal{L}(W)$.

Computing $\|u\|_2$:
\begin{align*}
\|u\|_2^2 &= a_{\text{prior}}^2 + \lambda a_{\text{marginal}}^2 + (1+\lambda) a_{\text{recon}}^2 \notag\\
&= O(L_{\text{enc}}^2 + \lambda L_{\text{enc}}^2 + (1+\lambda)(L_{\text{enc}}+L_{\text{dec}})^2) \notag\\
&= O((1+\lambda)(L_{\text{enc}}+L_{\text{dec}})^2), 
\end{align*}
where the second step follows from substituting the scaling relations, and the third step follows from algebraic simplification.

Therefore, $a = \|u\|_2 = O(\sqrt{1+\lambda}(L_{\text{enc}}+L_{\text{dec}}))$.
This completes the proof of $(a,b)$-semi-smoothness with the stated constants.
\end{proof}

\subsubsection{Proof of Theorem~\ref{thm:vae-semi-smooth}.2}
\begin{proof}[Proof of Theorem~\ref{thm:vae-semi-smooth}, Part 2: Non-critical gradient]
We need to establish that the VAE loss satisfies the $(\tau_1, \tau_2)$-non-critical point condition, i.e., there exist constants $\tau_1, \tau_2 > 0$ such that for all $W$ in a neighborhood of the optimal solution:
\begin{align}
\tau_1^2 \mathcal{L}(W) \leq \|\nabla \mathcal{L}(W)\|_2^2 \leq \tau_2^2 \mathcal{L}(W). \label{eq:non-critical-def}
\end{align}

The upper bound in \eqref{eq:non-critical-def} follows directly from the Lipschitz properties established in Assumptions~\ref{ass:encode_reg} and~\ref{assumption:decoder_invariance}. Specifically, the gradient norm is bounded by:
\begin{align*}
\|\nabla \mathcal{L}(W)\|_2^2 \leq O((1+\lambda)^2(L_{\text{enc}} + L_{\text{dec}})^2) \cdot \mathcal{L}(W), 
\end{align*}
where the inequality follows from the bounded gradients of the neural network components.

For the lower bound, following the theoretical framework of~\cite{lstz22} Section 4.2, we work under the standard assumption that the optimization maintains the non-critical gradient property in a neighborhood of the initialization. Under appropriate initialization and learning rate schedules, the FedAvg iterates satisfy:
\begin{align*}
\|\nabla \mathcal{L}(W^{(t)})\|_2^2 \geq \tau_1^2 \mathcal{L}(W^{(t)}) 
\end{align*}
for some constant $\tau_1 > 0$. This completes the verification of the required conditions.
\end{proof}

\subsubsection{Proof of Theorem~\ref{thm:vae-semi-smooth}.3}
\begin{proof}[Proof of Theorem~\ref{thm:vae-semi-smooth}, Part 3: $(\alpha,\beta)$-semi-Lipschitz]
We prove that the VAE loss satisfies:
\begin{align*}
\|\nabla \mathcal{L}(W) - \nabla \mathcal{L}(U)\|_2^2 \leq \beta^2 \|W - U\|_2^2 + \alpha^2 \|W - U\|_2 \cdot \max\{\mathcal{L}(W)^{1/2}, \mathcal{L}(U)^{1/2}\}. 
\end{align*}

From \eqref{eq:vae-decomp}, the gradient decomposes as:
\begin{align}
\nabla \mathcal{L}(W) = \nabla \mathcal{L}_{\text{prior}}(W) + \lambda \nabla \mathcal{L}_{\text{marginal}}(W) + (1+\lambda) \nabla \mathcal{L}_{\text{recon}}(W). \label{eq:grad-decomp}
\end{align}

By the inequality $\|a + b + c\|_2^2 \leq 3(\|a\|_2^2 + \|b\|_2^2 + \|c\|_2^2)$:
\begin{align}
\|\nabla \mathcal{L}(W) - \nabla \mathcal{L}(U)\|_2^2 &
\leq 3\|\nabla \mathcal{L}_{\text{prior}}(W) - \nabla \mathcal{L}_{\text{prior}}(U)\|_2^2 \notag\\
&\quad + 3\lambda^2 \|\nabla \mathcal{L}_{\text{marginal}}(W) - \nabla \mathcal{L}_{\text{marginal}}(U)\|_2^2 \notag\\
&\quad + 3(1+\lambda)^2 \|\nabla \mathcal{L}_{\text{recon}}(W) - \nabla \mathcal{L}_{\text{recon}}(U)\|_2^2. \label{eq:grad-diff-bound}
\end{align}

Following the theoretical framework for neural network gradients~\cite{lstz22}, each component satisfies semi-Lipschitz properties. Let $G_{\max,i} = \max\{\mathcal{L}_i(W)^{1/2}, \mathcal{L}_i(U)^{1/2}\}$ for $i \in \{\text{prior}, \text{marginal}, \text{recon}\}$.

For the prior KL term:
\begin{align}
\|\nabla \mathcal{L}_{\text{prior}}(W) - \nabla \mathcal{L}_{\text{prior}}(U)\|_2^2 \leq \beta_{\text{prior}}^2 \|W - U\|_2^2 + \alpha_{\text{prior}}^2 \|W - U\|_2 \cdot G_{\max,\text{prior}}, \label{eq:grad-p-bound}
\end{align}
where $\beta_{\text{prior}}^2 = O(L_{\text{enc}}^4)$ and $\alpha_{\text{prior}}^2 = O(L_{\text{enc}}^4)$.

Similarly for the marginal and reconstruction terms:
\begin{align}
\|\nabla \mathcal{L}_{\text{marginal}}(W) - \nabla \mathcal{L}_{\text{marginal}}(U)\|_2^2 &\leq \beta_{\text{marginal}}^2 \|W - U\|_2^2 + \alpha_{\text{marginal}}^2 \|W - U\|_2 \cdot G_{\max,\text{marginal}}, \label{eq:grad-m-bound}\\
\|\nabla \mathcal{L}_{\text{recon}}(W) - \nabla \mathcal{L}_{\text{recon}}(U)\|_2^2 &\leq \beta_{\text{recon}}^2 \|W - U\|_2^2 + \alpha_{\text{recon}}^2 \|W - U\|_2 \cdot G_{\max,\text{recon}}, \label{eq:grad-r-bound}
\end{align}
where $\beta_{\text{marginal}}^2 = O(L_{\text{enc}}^4)$, $\alpha_{\text{marginal}}^2 = O(L_{\text{enc}}^4)$, $\beta_{\text{recon}}^2 = O((L_{\text{enc}}+L_{\text{dec}})^4)$, and $\alpha_{\text{recon}}^2 = O((L_{\text{enc}}+L_{\text{dec}})^4)$.

\textbf{Computing constant $\beta$:} Collecting the Lipschitz terms from \eqref{eq:grad-diff-bound}, \eqref{eq:grad-p-bound}, \eqref{eq:grad-m-bound}, and \eqref{eq:grad-r-bound}:
\begin{align*}
\beta^2 = 3(\beta_{\text{prior}}^2 + \lambda^2 \beta_{\text{marginal}}^2 + (1+\lambda)^2 \beta_{\text{recon}}^2) = O((1+\lambda)^2 (L_{\text{enc}}+L_{\text{dec}})^4). 
\end{align*}

\textbf{Computing constant $\alpha$:} For the semi-Lipschitz terms, we need to relate $G_{\max,i}$ to $G_{\max} = \max\{\mathcal{L}(W)^{1/2}, \mathcal{L}(U)^{1/2}\}$. Since $\lambda_i \mathcal{L}_i \leq \mathcal{L}$ where $\lambda_i \in \{1, \lambda, 1+\lambda\}$ are the weights in \eqref{eq:vae-decomp}, we have $G_{\max,i} \leq G_{\max}/\sqrt{\lambda_i}$.

The total semi-Lipschitz contribution is:
\begin{align*}
A'(W,U) &= 3\|W-U\|_2 \left( \alpha_{\text{prior}}^2 G_{\max,\text{prior}} + \lambda^2 \alpha_{\text{marginal}}^2 G_{\max,\text{marginal}} + (1+\lambda)^2 \alpha_{\text{recon}}^2 G_{\max,\text{recon}} \right) \notag\\
&\leq 3\|W-U\|_2 G_{\max} \left( \alpha_{\text{prior}}^2 + \lambda^{3/2} \alpha_{\text{marginal}}^2 + (1+\lambda)^{3/2} \alpha_{\text{recon}}^2 \right), 
\end{align*}
where the inequality follows from the bounds on $G_{\max,i}$.

Therefore:
\begin{align*}
\alpha^2 = 3(\alpha_{\text{prior}}^2 + \lambda^{3/2} \alpha_{\text{marginal}}^2 + (1+\lambda)^{3/2} \alpha_{\text{recon}}^2) = O((1+\lambda)^{3/2} (L_{\text{enc}}+L_{\text{dec}})^4). 
\end{align*}

This completes the proof of $(\alpha,\beta)$-semi-Lipschitz with the stated constants.
\end{proof}
\begin{remark}[Dependence on Data Domain Geometry]
\label{rem:domain_dependence}
The regularity constants derived in Theorem~\ref{thm:vae-semi-smooth} ($a, b, \alpha, \beta, \tau$) depend explicitly on the Lipschitz constants of the encoder ($L_{\text{enc}}$) and decoder ($L_{\text{dec}}$). Crucially, these Lipschitz constants are evaluated over the compact data support $\mathcal{X} \subset \mathbb{R}^d$. As we will show in Appendix~\ref{proof:main_convergence}, our adaptive sampling strategy constructs a subspace embedding that preserves the geometry of $\mathcal{X}$, thereby ensuring that these favorable optimization properties are inherited by the sampled objective function.
\end{remark}
\subsection{Approximation Bounds: Linking Sampled and Original Objectives}\label{proof:approx_quality}
\begin{lemma}[Approximation Quality of Sampled VAE Objective]\label{lem:approx_quality}
Let $T \in \R^{d \times s}$ be the leverage score sampling matrix from Remark~\ref{rem:opt_sampling} with error parameter $\epsilon \in (0,1)$. Under Assumptions~\ref{ass:encode_reg}--\ref{assumption:decoder_invariance}, for the optimal values of the original and sampled VAE objectives, there is
\begin{align*}
|\tilde \L(\tilde{U}^*) - \L(U^*)| \leq C \cdot \epsilon \cdot \L(U^*) 
\end{align*}
where $U^* = \arg\min_U \L(U)$, $\tilde{U}^* = \arg\min_{\tilde{U}} \tilde{\L}(\tilde{U})$, and $C$ is a constant depending on the VAE architecture.
\end{lemma}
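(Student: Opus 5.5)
The plan is to compare the two optimal values through a two-sided \emph{parameter transfer} argument. Step one maps any original parameter $U$ to a sampled parameter $\tilde U$, and any $\tilde U$ back to some $U$, so that the encoder distributions $q_\theta(z|x)$ are unchanged on every data point. Step two controls the only remaining discrepancy, the reconstruction term, using the subspace embedding of Lemma~\ref{lem:feature_lev_score}. Optimality of $U^*$ and $\tilde U^*$ then gives the two inequalities $\tilde\L(\tilde U^*)\le(1+C\epsilon)\L(U^*)$ and $\L(U^*)\le(1+C\epsilon)\tilde\L(\tilde U^*)$, which together yield the claim. Throughout I work on the event of probability $1-\delta$ on which Lemma~\ref{lem:feature_lev_score} holds.

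\textbf{Step 1 (encoder transfer).} The encoder sees $x$ only through its first linear layer $W_1 x$. In the sampled model it sees $\tilde W_1 T^\top x$. Going from sampled to original is exact: set $W_1:=\tilde W_1T^\top$. Going from original to sampled, I reuse the column-space identity $\mathrm{col}(AT)=\mathrm{col}(A)$ proved in Remark~\ref{rem:opt_sampling}, Part 2. For each row $v$ of $W_1$ we have $Av\in\mathrm{col}(AT)$, so there is $\tilde v$ with $AT\tilde v=Av$. Stacking these rows gives $\tilde W_1$ with $\tilde W_1T^\top x=W_1x$ for every cell $x$ in the data. Hence $\mu_\theta,\sigma_\theta$ agree on the empirical support, and so do $q_\theta(z)$, $\mathcal{L}_{\text{prior}}$ and $\mathcal{L}_{\text{marginal}}$. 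Assumption~\ref{ass:encode_reg} is preserved, since the outputs coincide.

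\textbf{Step 2 (reconstruction term).} Under the Gaussian decoder of Assumption~\ref{assumption:decoder_invariance}, the reconstruction loss is, up to additive constants and $\sigma$-scalings, a squared residual. In the original space this is $\|A-\hat A\|_F^2$, with $\hat A$ the decoded means. In the sampled space it is the reweighted residual $\|(A-\hat A)T\|_F^2$, where the decoder's last layer $\tilde W_{\text{out}}=T^\top W_{\text{out}}$ outputs the sampled and rescaled coordinates. I then restrict, without loss for the minimum, to decoder output layers whose rows lie in the row space of $A$. For the Gaussian least-squares last layer, projecting onto that row space can only decrease the residual, by the same argument as Remark~\ref{rem:opt_sampling}, Part 2. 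With this restriction every row of $A-\hat A$ lies in $\mathrm{row}(A)$. The embedding $(1-\epsilon)\|v\|^2\le\|vT\|^2\le(1+\epsilon)\|v\|^2$, used exactly as in the proof of Theorem~\ref{thm:signal_preservation}, Part 1, then gives
\[
(1-\epsilon)\mathcal{L}_{\text{recon}}(U)\le\tilde{\mathcal{L}}_{\text{recon}}(\tilde U)\le(1+\epsilon)\mathcal{L}_{\text{recon}}(U)
\]
for transferred pairs. The log-normalizer terms depend only on $\sigma_\phi$ and are matched by the transfer.

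\textbf{Step 3 (combining).} Since $(1+\lambda)\mathcal{L}_{\text{recon}}\le\L$, Steps 1 and 2 give $|\tilde\L(\tilde U)-\L(U)|\le\epsilon\L(U)$ for each transferred pair, and the same bound with the roles of the two losses swapped. Take $U=U^*$ and use optimality of $\tilde U^*$ to get $\tilde\L(\tilde U^*)\le(1+\epsilon)\L(U^*)$. Take $\tilde U=\tilde U^*$ and use optimality of $U^*$ to get $\L(U^*)\le(1+\epsilon)\tilde\L(\tilde U^*)$, hence $\tilde\L(\tilde U^*)\ge(1-\epsilon)\L(U^*)$. This gives the lemma with $C$ an absolute constant, inflated by architectural factors if the decoder likelihood is only approximately Gaussian.

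\textbf{Main obstacle.} The hard step is Step 2, namely justifying that the decoded residual rows lie in $\mathrm{row}(A)$ for a nonlinear VAE. The first thing I would try is to treat only the final linear decoder layer as free, project its weights onto $\mathrm{row}(A)$, and argue that this does not increase the Gaussian reconstruction loss. If that projection step fails for general architectures, I would instead absorb the out-of-row-space component into the architecture-dependent constant $C$, using the boundedness of the means and variances in Assumption~\ref{ass:encode_reg}.
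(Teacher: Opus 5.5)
Your proposal is correct and is essentially the paper's argument: two-sided transfer maps that leave the encoder (hence both KL terms) unchanged on the data, the $(1\pm\epsilon)$ row-space embedding applied to the reconstruction residual, and optimality of $U^*$ and $\tilde U^*$, giving $C=1$. The paper builds the transfers from the explicit lift $R_{\text{lift}}=V(T^\top V)^\dagger$, whereas your WLOG projection of the decoder's output layer onto $\mathrm{row}(A)$ (and, for the reverse transfer, onto $\mathrm{row}(A)T$, which contains every $xT$) justifies the membership $x-\mu_\phi(z,c)\in\mathrm{row}(A)$ that the paper merely asserts.

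Two small corrections are needed. First, the reverse transfer gives $\mathcal{L}(U^*)\le\tilde{\mathcal{L}}(\tilde U^*)/(1-\epsilon)$ rather than $(1+\epsilon)\tilde{\mathcal{L}}(\tilde U^*)$, though your conclusion $\tilde{\mathcal{L}}(\tilde U^*)\ge(1-\epsilon)\mathcal{L}(U^*)$ is unaffected. Second, the Gaussian log-normalizers are not matched by the transfer because the output dimension changes from $d$ to $s$, so, as the paper does, you should take $\mathcal{L}_{\text{recon}}$ to be the squared residual itself.
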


\begin{proof}
We condition on the event $\mathcal{E}$ defined in Lemma~\ref{lem:feature_lev_score}, where $T$ acts as a $(1\pm \epsilon)$-subspace embedding for the row space of $A$. Specifically, for all $v \in \text{row}(A)$, the following isometry property holds:
\begin{align}
    (1-\epsilon)\|v\|_2^2 \le \|vT\|_2^2 \le (1+\epsilon)\|v\|_2^2.
    \label{eq:D5_row_embedding}
\end{align}

Let $A = U\Sigma V^\top$ be the thin SVD of $A$, where $V \in \mathbb{R}^{d \times r}$ contains orthonormal columns that span $\text{row}(A)$. We define the row-space lifting matrix $R_{\text{lift}} \in \mathbb{R}^{d \times s}$ as:
\begin{align}
    R_{\text{lift}} := V(T^\top V)^\dagger.
    \label{eq:D5_R_def}
\end{align}
Under the event $\mathcal{E}$, the matrix $T^\top V \in \mathbb{R}^{s \times r}$ has full column rank $r$, implying that $(T^\top V)^\dagger (T^\top V) = I_r$. Consequently, for any vector $v \in \text{row}(A)$, which can be expressed as $v = u^\top V^\top$ for some $u \in \mathbb{R}^r$, we have:
\begin{align}
    (vT)R_{\text{lift}}^\top
    &= u^\top V^\top T \left( V(T^\top V)^\dagger \right)^\top \notag \\
    &= u^\top (T^\top V)^\top \left( (T^\top V)^\dagger \right)^\top V^\top \notag \\
    &= u^\top \left( (T^\top V)^\dagger (T^\top V) \right)^\top V^\top \notag \\
    &= u^\top I_r V^\top \notag \\
    &= v,
    \label{eq:D5_inverse_on_row}
\end{align}
where the first equality follows from the definition of $R_{\text{lift}}$ in \eqref{eq:D5_R_def}, the second equality follows from the identity $(AB)^\top = B^\top A^\top$, the third equality follows from grouping the terms involving $T^\top V$, and the fourth equality follows from the left-inverse property of the pseudoinverse for full-rank matrices.

Furthermore, this implies an identity on the projected subspace. For any $\tilde{v} \in \text{row}(A)T$, there exists $v \in \text{row}(A)$ such that $\tilde{v} = vT$. Thus:
\begin{align}
    \tilde{v} R_{\text{lift}}^\top T = (vT)R_{\text{lift}}^\top T = vT = \tilde{v},
    \label{eq:D5_identity_on_image}
\end{align}
where the second equality follows directly from \eqref{eq:D5_inverse_on_row}.

We define deterministic mappings to transfer parameters between the full $d$-dimensional space and the sampled $s$-dimensional space.

\emph{(i) Full $\to$ Sampled ($F$):} Given $U=(\theta, \phi)$, define $\tilde{U} = F(U) = (\tilde{\theta}, \tilde{\phi})$ by:
\begin{align}
    q_{\tilde{\theta}}(z \mid \tilde{x}) &:= q_{\theta}(z \mid \tilde{x} R_{\text{lift}}^\top), \quad \forall \tilde{x} \in \mathbb{R}^s, \notag \\
    \mu_{\tilde{\phi}}(z, c) &:= \mu_{\phi}(z, c) T.
    \label{eq:D5_map_F}
\end{align}

\emph{(ii) Sampled $\to$ Full ($G$):} Given $\tilde{U}=(\tilde{\theta}, \tilde{\phi})$, define $U = G(\tilde{U}) = (\theta, \phi)$ by:
\begin{align}
    q_{\theta}(z \mid x) &:= q_{\tilde{\theta}}(z \mid xT), \quad \forall x \in \mathbb{R}^d, \notag \\
    \mu_{\phi}(z, c) &:= \mu_{\tilde{\phi}}(z, c) R_{\text{lift}}^\top.
    \label{eq:D5_map_G}
\end{align}

We show that the KL divergence terms are preserved under these mappings. For the mapping $F(U)$:
\begin{align}
    \tilde{\mathcal{L}}_{\text{prior}}(F(U))
    &= \mathbb{E}_{x}\left[\mathrm{KL}\left(q_{\tilde{\theta}}(z \mid xT) \,\|\, p(z)\right)\right] \notag \\
    &= \mathbb{E}_{x}\left[\mathrm{KL}\left(q_{\theta}(z \mid (xT)R_{\text{lift}}^\top) \,\|\, p(z)\right)\right] \notag \\
    &= \mathbb{E}_{x}\left[\mathrm{KL}\left(q_{\theta}(z \mid x) \,\|\, p(z)\right)\right] \notag \\
    &= \mathcal{L}_{\text{prior}}(U),
    \label{eq:kl_prior_equal}
\end{align}
where the first equality is the definition of the sampled prior loss, the second equality follows from the definition of the embedded encoder in \eqref{eq:D5_map_F}, and the third equality follows from the identity $(xT)R_{\text{lift}}^\top = x$ established in \eqref{eq:D5_inverse_on_row} for $x \in \text{row}(A)$.

Since the conditional distributions match, the aggregated posteriors also coincide: $q_{\tilde{\theta}}(z) = \mathbb{E}_x[q_{\tilde{\theta}}(z \mid xT)] = \mathbb{E}_x[q_{\theta}(z \mid x)] = q_{\theta}(z)$. Consequently:
\begin{align}
    \tilde{\mathcal{L}}_{\text{marginal}}(F(U))
    &= \mathbb{E}_{x}\left[\mathrm{KL}\left(q_{\tilde{\theta}}(z \mid xT) \,\|\, q_{\tilde{\theta}}(z)\right)\right] \notag \\
    &= \mathbb{E}_{x}\left[\mathrm{KL}\left(q_{\theta}(z \mid x) \,\|\, q_{\theta}(z)\right)\right] \notag \\
    &= \mathcal{L}_{\text{marginal}}(U).
    \label{eq:kl_marginal_equal}
\end{align}
By symmetric reasoning using the definition in \eqref{eq:D5_map_G}, the reverse mapping $G(\tilde{U})$ satisfies:
\begin{align}
    \mathcal{L}_{\text{prior}}(G(\tilde{U})) = \tilde{\mathcal{L}}_{\text{prior}}(\tilde{U}) \quad \text{and} \quad \mathcal{L}_{\text{marginal}}(G(\tilde{U})) = \tilde{\mathcal{L}}_{\text{marginal}}(\tilde{U}).
    \label{eq:D5_KL_both_directions}
\end{align}

We analyze the reconstruction error under the isotropic Gaussian decoder assumption, where the loss is proportional to the squared Euclidean distance.

\emph{(i) Full $\to$ Sampled Upper Bound:}
For $\tilde{U} = F(U)$, we have:
\begin{align}
    \tilde{\mathcal{L}}_{\text{recon}}(F(U))
    &= \mathbb{E}_{x,c}\mathbb{E}_{z \sim q_{\tilde{\theta}}(z \mid xT)} \left[ \|xT - \mu_{\tilde{\phi}}(z,c)\|_2^2 \right] \notag \\
    &= \mathbb{E}_{x,c}\mathbb{E}_{z \sim q_{\theta}(z \mid x)} \left[ \|xT - \mu_{\phi}(z,c)T\|_2^2 \right] \notag \\
    &= \mathbb{E}_{x,c}\mathbb{E}_{z \sim q_{\theta}(z \mid x)} \left[ \|(x - \mu_{\phi}(z,c))T\|_2^2 \right] \notag \\
    &\le (1+\epsilon) \, \mathbb{E}_{x,c}\mathbb{E}_{z \sim q_{\theta}(z \mid x)} \left[ \|x - \mu_{\phi}(z,c)\|_2^2 \right] \notag \\
    &= (1+\epsilon) \mathcal{L}_{\text{recon}}(U),
    \label{eq:D5_recon_F_upper}
\end{align}
where the second equality follows from substituting the definitions from \eqref{eq:D5_map_F} and \eqref{eq:D5_inverse_on_row}, and the inequality follows from applying the upper bound of the subspace embedding property \eqref{eq:D5_row_embedding} to the vector $v = x - \mu_{\phi}(z,c)$, which lies in $\text{row}(A)$.

\emph{(ii) Sampled $\to$ Full Upper Bound:}
For $U = G(\tilde{U})$, we observe that $\mu_{\tilde{\phi}}(z,c) \in \text{row}(A)T$ implies $\mu_{\tilde{\phi}}(z,c)R_{\text{lift}}^\top T = \mu_{\tilde{\phi}}(z,c)$ by \eqref{eq:D5_identity_on_image}. Thus:
\begin{align}
    \mathcal{L}_{\text{recon}}(G(\tilde{U}))
    &= \mathbb{E}_{x,c}\mathbb{E}_{z \sim q_{\theta}(z \mid x)} \left[ \|x - \mu_{\phi}(z,c)\|_2^2 \right] \notag \\
    &= \mathbb{E}_{x,c}\mathbb{E}_{z \sim q_{\tilde{\theta}}(z \mid xT)} \left[ \|x - \mu_{\tilde{\phi}}(z,c)R_{\text{lift}}^\top\|_2^2 \right] \notag \\
    &\le \frac{1}{1-\epsilon} \, \mathbb{E}_{x,c}\mathbb{E}_{z \sim q_{\tilde{\theta}}(z \mid xT)} \left[ \|(x - \mu_{\tilde{\phi}}(z,c)R_{\text{lift}}^\top)T\|_2^2 \right] \notag \\
    &= \frac{1}{1-\epsilon} \, \mathbb{E}_{x,c}\mathbb{E}_{z \sim q_{\tilde{\theta}}(z \mid xT)} \left[ \|xT - \mu_{\tilde{\phi}}(z,c)\|_2^2 \right] \notag \\
    &= \frac{1}{1-\epsilon} \tilde{\mathcal{L}}_{\text{recon}}(\tilde{U}),
    \label{eq:D5_recon_G_upper}
\end{align}
where the inequality follows from the lower bound of the subspace embedding property \eqref{eq:D5_row_embedding} applied to $v = x - \mu_{\tilde{\phi}}(z,c)R_{\text{lift}}^\top \in \text{row}(A)$, and the third equality uses the identity derived in \eqref{eq:D5_identity_on_image}.

Let $U^* = \arg\min_U \mathcal{L}(U)$ and $\tilde{U}^* = \arg\min_{\tilde{U}} \tilde{\mathcal{L}}(\tilde{U})$.

\emph{Upper Bound on $\tilde{\mathcal{L}}(\tilde{U}^*)$:}
\begin{align}
    \tilde{\mathcal{L}}(\tilde{U}^*)
    &\le \tilde{\mathcal{L}}(F(U^*)) \notag \\
    &= \tilde{\mathcal{L}}_{\text{prior}}(F(U^*)) + \lambda \tilde{\mathcal{L}}_{\text{marginal}}(F(U^*)) + (1+\lambda)\tilde{\mathcal{L}}_{\text{recon}}(F(U^*)) \notag \\
    &\le \mathcal{L}_{\text{prior}}(U^*) + \lambda \mathcal{L}_{\text{marginal}}(U^*) + (1+\lambda)(1+\epsilon)\mathcal{L}_{\text{recon}}(U^*) \notag \\
    &\le (1+\epsilon) \mathcal{L}(U^*),
    \label{eq:D5_upper_opt}
\end{align}
where the first inequality follows from the optimality of $\tilde{U}^*$, the second inequality follows from the KL preservation in \eqref{eq:kl_prior_equal}--\eqref{eq:kl_marginal_equal} and the reconstruction bound in \eqref{eq:D5_recon_F_upper}, and the final inequality follows from the non-negativity of the KL terms (since $\epsilon > 0$).

\emph{Lower Bound on $\tilde{\mathcal{L}}(\tilde{U}^*)$:}
\begin{align}
    \mathcal{L}(U^*)
    &\le \mathcal{L}(G(\tilde{U}^*)) \notag \\
    &= \mathcal{L}_{\text{prior}}(G(\tilde{U}^*)) + \lambda \mathcal{L}_{\text{marginal}}(G(\tilde{U}^*)) + (1+\lambda)\mathcal{L}_{\text{recon}}(G(\tilde{U}^*)) \notag \\
    &\le \tilde{\mathcal{L}}_{\text{prior}}(\tilde{U}^*) + \lambda \tilde{\mathcal{L}}_{\text{marginal}}(\tilde{U}^*) + \frac{1+\lambda}{1-\epsilon}\tilde{\mathcal{L}}_{\text{recon}}(\tilde{U}^*) \notag \\
    &\le \frac{1}{1-\epsilon} \tilde{\mathcal{L}}(\tilde{U}^*),
    \label{eq:D5_lower_opt}
\end{align}
where the first inequality follows from the optimality of $U^*$, the second inequality follows from the KL preservation in \eqref{eq:D5_KL_both_directions} and the reconstruction bound in \eqref{eq:D5_recon_G_upper}, and the final inequality follows from the fact that $\frac{1}{1-\epsilon} > 1$ and the KL terms are non-negative.

Rearranging \eqref{eq:D5_lower_opt} yields $\tilde{\mathcal{L}}(\tilde{U}^*) \ge (1-\epsilon)\mathcal{L}(U^*)$. Combining this with \eqref{eq:D5_upper_opt}, we obtain:
\begin{align*}
    (1-\epsilon)\mathcal{L}(U^*) \le \tilde{\mathcal{L}}(\tilde{U}^*) \le (1+\epsilon)\mathcal{L}(U^*).
\end{align*}
This implies $|\tilde{\mathcal{L}}(\tilde{U}^*) - \mathcal{L}(U^*)| \le \epsilon \mathcal{L}(U^*)$, completing the proof with $C=1$.
\end{proof}

\subsection{Proof of Theorem~\ref{thm:main_convergence}: Total Convergence with Bounded Approximation Error}\label{proof:main_convergence}
\subsubsection{Proof of Theorem~\ref{thm:main_convergence}, Part A: Convergence on the Sampled Subspace}

Instead of re-deriving the semi-smoothness and semi-Lipschitz properties from scratch, we prove that our adaptive leverage score sampling acts as a structure-preserving transformation. This ensures that the sampled objective $\tilde{\mathcal{L}}$ inherits the theoretical guarantees established for the general VAE architecture in Appendix~\ref{proof:VAE}.

\textbf{Step 1: Geometry Preservation via Subspace Embedding.}
First, we establish that the sampling matrix $T$ preserves the geometric structure of the data manifold. This is the theoretical foundation for why aggressive dimensionality reduction does not destabilize federated training.

\begin{lemma}[Geometry Preservation via Leverage Score Sampling]
\label{lem:geo_preservation}
Let $\mathcal{X} \subset \text{row}(A)$ be the support of the scATAC-seq data. Under the event $\mathcal{E}$ defined in Lemma~\ref{lem:feature_lev_score}, the sampling matrix $T$ acts as a subspace embedding. Consequently, for any data points $x, y \in \mathcal{X}$, their distance in the sampled space is bounded:
\begin{align}
    (1-\epsilon)\|x-y\|_2^2 \leq \|xT - yT\|_2^2 \leq (1+\epsilon)\|x-y\|_2^2.
\end{align}
This implies that the mapping $\psi(x) = xT$ is a bi-Lipschitz embedding from the original data manifold to the sampled space with distortion at most $\kappa(\epsilon) \approx \sqrt{1+\epsilon}$.
\end{lemma}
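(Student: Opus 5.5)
The plan is to reduce the statement directly to the subspace embedding guarantee of Lemma~\ref{lem:feature_lev_score}, in its row-space form \eqref{eq:D5_row_embedding} used in the proof of Lemma~\ref{lem:approx_quality}. We work throughout on the event $\mathcal{E}$, which has probability at least $1-\delta$. On $\mathcal{E}$, for every $v\in\mathrm{row}(A)$ we have $(1-\epsilon)\|v\|_2^2\le\|vT\|_2^2\le(1+\epsilon)\|v\|_2^2$.

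\textbf{Step 1.} Let $x,y\in\mathcal{X}\subset\mathrm{row}(A)$. Since $\mathrm{row}(A)$ is a linear subspace, the difference $v:=x-y$ also lies in $\mathrm{row}(A)$. By linearity of right multiplication, $vT=xT-yT$. Substituting into the embedding inequality gives exactly
\[
(1-\epsilon)\|x-y\|_2^2\le\|xT-yT\|_2^2\le(1+\epsilon)\|x-y\|_2^2 .
\]
This is the same argument already used for class-center differences in \eqref{eq:center_diff_bound}.

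\textbf{Step 2.} Taking square roots gives
\[
\sqrt{1-\epsilon}\,\|x-y\|_2\le\|\psi(x)-\psi(y)\|_2\le\sqrt{1+\epsilon}\,\|x-y\|_2 .
\]
So $\psi$ is Lipschitz with constant $\sqrt{1+\epsilon}$. Because $\epsilon<1$, $\psi$ is injective on $\mathcal{X}$, and its inverse on $\psi(\mathcal{X})$ is Lipschitz with constant $1/\sqrt{1-\epsilon}$. On $\psi(\mathcal{X})$ this inverse can be written explicitly as $\tilde v\mapsto\tilde vR_{\mathrm{lift}}^\top$ by \eqref{eq:D5_inverse_on_row}.

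The distortion is the product of the two Lipschitz constants, $\kappa(\epsilon)=\sqrt{(1+\epsilon)/(1-\epsilon)}$. Using the estimate from the proof of Theorem~\ref{thm:signal_preservation}, this is at most $1+2\epsilon$ for $\epsilon<1/2$. Hence $\kappa(\epsilon)=1+O(\epsilon)\approx\sqrt{1+\epsilon}$ to first order, which is how I would interpret the ``$\approx$'' in the statement.

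\textbf{Main obstacle.} The only delicate point is the hypothesis $\mathcal{X}\subset\mathrm{row}(A)$. The guarantee of Lemma~\ref{lem:feature_lev_score} is stated for $y\in\mathbb{R}^n$ acting as $A^\top y$, i.e.\ for vectors of the form $y^\top A$, which are exactly the elements of $\mathrm{row}(A)$. So I would first make explicit that $v\in\mathrm{row}(A)$ means $v=y^\top A$ for some $y$, and hence $\|vT\|_2^2=\|(AT)^\top y\|_2^2$. This translates the lemma into the row form above. With that identification, and with the uniform-over-$y$ quantifier ensuring that a single event $\mathcal{E}$ covers all pairs $(x,y)$ simultaneously, the rest is immediate.
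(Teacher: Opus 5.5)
Your proof is correct and is essentially the paper's: the paper's entire argument is your Step 1. It notes that $x-y\in\mathrm{row}(A)$ and applies the subspace embedding of Lemma~\ref{lem:feature_lev_score}, leaving implicit both your column-to-row translation and your bi-Lipschitz unpacking in Step 2. One small quibble in Step 2: $\sqrt{(1+\epsilon)/(1-\epsilon)}=1+\epsilon+O(\epsilon^2)$ whereas $\sqrt{1+\epsilon}=1+\epsilon/2+O(\epsilon^2)$, so the two agree only as $1+O(\epsilon)$, not to first order, and the paper's $\sqrt{1+\epsilon}$ is best read as the upper Lipschitz constant of $\psi$.
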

\begin{proof}
This follows directly from Lemma~\ref{lem:feature_lev_score}. Since $x, y \in \text{row}(A)$, their difference $v = x - y$ also lies in $\text{row}(A)$. Applying the subspace embedding property:
$(1-\epsilon)\|v\|_2^2 \leq \|vT\|_2^2 \leq (1+\epsilon)\|v\|_2^2$.
\end{proof}

\textbf{Step 2: Regularity in the Sampled Space.}
Given the geometry preservation, we formalize the regularity of the sampled optimization problem.

\begin{assumption}[Regularity in Sampled Space]
\label{ass:sampled_regularity}
Let $\tilde{\mathcal{X}} := \{xT : x \in \mathcal{X}\}$ be the sampled data domain. We assume the sampled encoder and decoder satisfy the analogues of Assumptions~\ref{ass:encode_reg}--\ref{assumption:decoder_invariance} on $\tilde{\mathcal{X}}$ with constants $\tilde{L}_{\text{enc}}$ and $\tilde{L}_{\text{dec}}$.
Due to the bi-Lipschitz property (Lemma~\ref{lem:geo_preservation}), these constants are related to the original ones by $\tilde{L}_{\text{enc}} \approx L_{\text{enc}}$ and $\tilde{L}_{\text{dec}} \approx L_{\text{dec}}$.
Furthermore, we assume the sampled objective $\tilde{\mathcal{L}}$ satisfies the $(\tilde{\tau}_1, \tilde{\tau}_2)$-non-critical point condition in the region traversed by FedAvg iterates, consistent with the local geometry analysis in Appendix~\ref{app:D.2.1}.
\end{assumption}

\textbf{Step 3: Convergence Guarantee.}
We now complete the proof of Part A by invoking the general VAE properties.

\begin{proof}[Proof of Part A]
The sampled objective function $\tilde{\mathcal{L}}(\tilde{W})$ has the identical functional form as the general VAE objective analyzed in Appendix~\ref{proof:VAE}.
By Theorem~\ref{thm:vae-semi-smooth} applied to $\tilde{\mathcal{L}}$ under Assumption~\ref{ass:sampled_regularity}, $\tilde{\mathcal{L}}$ satisfies:
\begin{enumerate}
    \item $(\tilde{a}, \tilde{b})$-semi-smoothness with $\tilde{a} = O(\tilde{L}_{\text{enc}})$ and $\tilde{b} = O(\tilde{L}_{\text{enc}}^2)$;
    \item $(\tilde{\alpha}, \tilde{\beta})$-semi-Lipschitz continuity;
    \item $(\tilde{\tau}_1, \tilde{\tau}_2)$-non-critical point condition.
\end{enumerate}

Therefore, the sampled optimization problem satisfies all assumptions of the FedAvg convergence theorem (Theorem~\ref{thm:fedavg-convergence}). Applying Theorem~\ref{thm:fedavg-convergence} to $\tilde{\mathcal{L}}$ yields:
\begin{align}
    \mathbb{E}[\tilde{\mathcal{L}}(\tilde{U}^{(R)})] - \tilde{\mathcal{L}}(\tilde{U}^*) \leq (1 - \lambda_1)^R \Delta_0 + 2\lambda_2,
\end{align}
where $\lambda_1, \lambda_2$ are defined as in Theorem~\ref{thm:fedavg-convergence} using the sampled constants. 
Under Assumption~\ref{ass:sampled_regularity}, this yields the standard FedAvg linear convergence guarantee for the sampled problem.
\end{proof}

\subsubsection{Proof of Theorem~\ref{thm:main_convergence}, Part B}
\begin{proof}[Proof of Theorem~\ref{thm:main_convergence}, Part B]
We prove the total convergence guarantee by decomposing the error into optimization and sampling components. Let $U^* = \arg\min_{U} \mathcal{L}(U)$ denote the optimal parameters in the original $d$-dimensional space, and $\tilde{U}^* = \arg\min_{\tilde{U}} \tilde{\mathcal{L}}(\tilde{U})$ denote the optimal parameters in the sampled $s$-dimensional space.

By decomposing the total error using the triangle inequality, we have 
\begin{align*}
E[\mathcal{\tilde{L}}(\tilde{U}^{(R)})] - \mathcal{L}(U^*) 
& = \underbrace{E[\mathcal{\tilde{L}}(\tilde{U}^{(R)})] - \tilde\L(\tilde{U}^*)}_{\text{optimization error}} + \underbrace{\tilde\L(\tilde{U}^*) - \mathcal{L}(U^*)}_{\text{sampling approximation error}} \\ 
& \leq (1-\lambda_1)^R \Delta_0 + 2\lambda_2+C \cdot \epsilon \cdot \L(U^*)
\end{align*}
where the equality follows from adding and subtracting $\tilde\L(\tilde{U}^*)$.
The desired inequality follows immediately from the result in Part A as well as Lemma~\ref{lem:approx_quality}.
This completes the proof.
\end{proof}

\section{Dataset}\label{sec:dataset}
This study utilizes one simulated and three real-world scATAC-seq datasets to comprehensively evaluate our proposed framework. These datasets represent a range of complexities, dimensionalities, and biological contexts.

\paragraph{Simulated scATAC-seq Dataset}
The simulated scATAC-seq datasets were generated using SCAN-ATAC-sim~\citep{scan}, which efficiently down-samples bulk ATAC-seq profiles from ENCODE cell lines to create single-cell data with known ground truth labels. The simulation incorporates key parameters that reflect real experimental variations: signal-to-noise ratio (SNR), controlling the percentage of reads in true peak regions (ranging from 0.4 to 0.8), and sequencing depth, determining the average number of fragments per cell (ranging from 3000 to 8000). We simulated data for five distinct cell types from the hematopoietic system: Common Myeloid Progenitors (CMP), and their differentiated progeny Monocytes (MONO), Neutrophils (NEU), Megakaryocytes (MEGA), and Erythroid lineage cells (ERY), using bulk profiles from ENCODE. The resulting binary peak-by-cell matrix exhibits 92\% sparsity across 90,635 features, capturing the inherent sparsity and high dimensionality characteristic of real scATAC-seq data. This simulated dataset provides an ideal benchmark for method evaluation as it offers ground truth labels while maintaining the technical complexities observed in experimental scATAC-seq data.

\paragraph{Brain Prefrontal Cortex (PFC) Dataset}
The Brain PFC dataset~\citep{xsh+22} provides a challenging real-world test case from complex tissue. It consists of 21,045 single-cell chromatin accessibility profiles from post-mortem human prefrontal cortex (PFC) tissue, aggregated from three different donors. After standard processing and Harmony-based batch correction, the dataset comprises an ultra-high-dimensional feature space of 127,219 genomic regions with approximately 95\% sparsity. The dataset encompasses nine distinct cell types, including major neuronal and glial classes. Its key characteristic for our evaluation is that these cell populations form distinct and well-separated clusters. This clear separation makes it an ideal case for validating a method's fundamental ability to correctly identify discrete cell identities in a heterogeneous environment before tackling more complex scenarios.

\paragraph{PsychENCODE Dataset}
The PsychENCODE dataset~\citep{girgenti} is a large-scale scATAC-seq study of human post-mortem prefrontal cortex tissue, containing 96,673 cells profiled across an ultra-high-dimensional feature space of 423,443 genomic regions, with sparsity around 99\%. The dataset captures the full cellular diversity of the brain, including major neuronal, glial, vascular, and immune cells. The chromatin accessibility patterns show characteristic scATAC-seq sparsity, with a right-skewed distribution of accessible regions per cell. With its massive feature dimension and complex cellular heterogeneity, this brain cell atlas serves as a critical benchmark for testing the scalability and robustness of computational methods in extreme-dimensional settings.

\paragraph{10x Genomics PBMC Dataset}
The PBMC (Peripheral Blood Mononuclear Cells) dataset is a comprehensive single-cell chromatin accessibility atlas from 10x Genomics, containing 11,493 cells profiled across 108,344 genomic regions, with sparsity around 93\%. It captures 18 distinct immune cell populations, including numerous subtypes of T cells, B cells, and monocytes. Unlike the brain datasets, PBMC populations represent a continuous developmental hierarchy with transitional states between cell types. This results in less distinct cluster boundaries but biologically meaningful proximity between related cell populations, making it an ideal test for methods that must preserve both local and global structure in the learned representations.

\section{Additional Experiment Details and Configurations}
\label{app:exp_details}
This appendix provides supplementary details for key experiments presented in the main text, focusing on the specific configurations and parameters that were omitted for brevity.
\subsection{Details for Confounded Heterogeneity Experiment}
\label{app:confounded_hetero}
This experiment simulates a challenging federated scenario where each client's technical noise (SNR) is systematically correlated with its biological signal (cell-type distribution), creating a deliberately confounded, non-IID data partition. The configuration across $N=5$ clients is detailed in Table~\ref{tab:client_data}, where each client has a unique SNR and a unique, skewed distribution over $K=5$ cell types (MONO, NEU, CMP, MEGA, ERY). It directly tests the model's core capability to disentangle biological identity from client-specific technical artifacts.
\begin{table}[H]
\centering
\caption{\textbf{Experimental setup for confounded heterogeneity analysis in federated learning.} Each client has a unique signal-to-noise ratio (SNR) and distinct cell-type distribution, creating correlated technical-biological heterogeneity.}
\resizebox{0.8\textwidth}{!}{
\begin{tabular}{L{0.08\textwidth}C{0.06\textwidth}|C{0.1\textwidth}C{0.1\textwidth}C{0.1\textwidth}C{0.1\textwidth}C{0.1\textwidth}}
\toprule
 &  & \multicolumn{5}{c}{\textbf{Cell Type Distribution (cells)}} \\
\textbf{Client} & \textbf{SNR} & MONO & NEU & CMP & MEGA & ERY \\
\midrule
\midrule
Client 1 & 0.4 & 5,000 & 4,000 & 3,000 & 2,000 & 1,000 \\
Client 2 & 0.5 & 4,000 & 3,000 & 2,000 & 1,000 & 5,000 \\
Client 3 & 0.6 & 3,000 & 2,000 & 1,000 & 5,000 & 4,000 \\
Client 4 & 0.7 & 2,000 & 1,000 & 5,000 & 4,000 & 3,000 \\
Client 5 & 0.8 & 1,000 & 5,000 & 4,000 & 3,000 & 2,000 \\
\bottomrule
\end{tabular}
}
\label{tab:client_data}
\end{table}

\subsection{Details for Extreme Class Imbalance Experiment}
\label{app:imbalance_details}

To evaluate \proposed's resilience against the ``minority class collapse'' problem common in distributed learning, we designed a synthetic dataset exhibiting severe class imbalance, mirroring real-world hematopoietic differentiation hierarchies. In this setting, abundant differentiated cells often dominate the gradient updates, potentially erasing the signal of rare progenitor populations.

\textbf{Dataset Configuration.} We constructed a dataset comprising 23,500 cells with a \textbf{16-fold imbalance} between the majority and minority classes. The population distribution is detailed in Table~\ref{tab:imbalance_stats}. The Common Myeloid Progenitor (CMP) population (500 cells, $\approx$2.1\%) serves as the critical ``stress test.'' Biologically, CMPs are the upstream progenitors that give rise to the four differentiated lineages (MONO, NEU, MEGA, ERY). Consequently, the model's ability to distinguish CMPs from their progeny is not merely a classification task but a test of its ability to preserve developmental structure amidst statistical noise.

\textbf{Experimental Protocol.} We evaluated this imbalanced configuration across three distinct Signal-to-Noise Ratios (SNR $\in \{0.4, 0.6, 0.8\}$) to decouple the effects of sparsity from class imbalance. The data was partitioned across $N=5$ clients. This setup rigorously tests whether the global model aggregation in \proposed can preserve rare signals that might be locally insignificant or statistically drowned out by majority classes during local training.
\begin{table}[H]
\centering
\caption{\textbf{Cell population statistics for the extreme class imbalance experiment.} The dataset simulates a realistic bone marrow environment where differentiated cells vastly outnumber progenitors.}
\label{tab:imbalance_stats}
\vspace{2mm}
\begin{tabular}{llccc}
\toprule
\textbf{Cell Type} & \textbf{Biological Role} & \textbf{Count} & \textbf{Proportion} & \textbf{Imbalance Ratio (vs CMP)} \\
\midrule
MONO & Monocytes (Differentiated) & 8,000 & 34.0\% & 16.0$\times$ \\
NEU & Neutrophils (Differentiated) & 5,000 & 21.3\% & 10.0$\times$ \\
MEGA & Megakaryocytes (Differentiated) & 5,000 & 21.3\% & 10.0$\times$ \\
ERY & Erythroid (Differentiated) & 5,000 & 21.3\% & 10.0$\times$ \\
CMP & Progenitor (Rare) & 500 & 2.1\% & 1.0$\times$ \\
\bottomrule
\end{tabular}
\end{table}

\subsection{Visualization of sequencing depth in \proposed embeddings}

To illustrate how \proposed handles sequencing-depth variation, Figure~\ref{fig:depth_plot} visualizes UMAP embeddings from \proposed colored by raw per-cell fragment counts for both the heterogeneous-depth synthetic experiment in Table~\ref{tab:varying_depth} (left) and the three real scATAC-seq datasets (right) corresponding to the experiment in Figure~\ref{fig:main_comparison}. Within each annotated cell-type cluster, cells span the full range of sequencing depths and are broadly intermingled, and we do not observe clear depth-driven subclusters or gradients, indicating that the learned embeddings are largely invariant to sequencing depth.

\begin{figure}[!htbp]
    \centering
    \includegraphics[width=\linewidth]{figures/rebuttal/rebuttal_plot_overall.jpg}
    \caption{Sequence depth analysis of \proposed clustering performance on synthetic (left) and real-world (right) scATAC-seq datasets. \\
Left panels: (a) SNR $=0.4$ (ARI $=0.858$, Silhouette $=0.740$), (b) SNR $=0.6$ (ARI $=0.981$, Silhouette $=0.846$), and (c) SNR $=0.8$ (ARI $=1.000$, Silhouette $=0.849$). \\
Right panels: (d) PFC (ARI $=0.800$, Silhouette $=0.453$), (e) PsychENCODE (ARI $=0.778$, Silhouette $=0.363$), and (f) PBMC (ARI $=0.742$, Silhouette $=0.227$).}
    \label{fig:depth_plot}
\end{figure}

\subsection{Invariant VAE}
Table~\ref{tab:sampling_ablation} and Figure~\ref{fig:main_comparison} discussed the effect of adaptive sampling. In this subsection, we study the contribution of the invariance term.
As shown in Table~\ref{tab:ablation_sampling_invariance}, adding invariance consistently improves clustering quality in both centralized and federated settings, with substantially larger gains in the federated model. Although the invariant objective is inherited from SAILER~\citep{cfw+21} rather than newly introduced in this work, these results show that it remains a crucial component in the federated setting, where suppressing client-specific technical variation is necessary to fully realize the benefits of adaptive sampling.
\begin{table}[!htbp]
  \centering
  \caption{\textbf{Ablation of invariant VAE.}}
  \label{tab:ablation_sampling_invariance}
  \small
  \setlength{\tabcolsep}{4pt}
  \begin{threeparttable}
    \begin{tabular}{llcc}
      \toprule
      Dataset & Model variant &
      \multicolumn{1}{c}{ARI} &
      \multicolumn{1}{c}{Silhouette} \\
      \midrule
      \multirow{4}{*}{Brain PFC}
        & Centralized SAILER (with sampling, no invariance)        & $0.689 \pm 0.010$ & $0.436 \pm 0.004$ \\
        & Centralized SAILER (with sampling, with invariance)        & $0.702 \pm 0.008$ & $0.466 \pm 0.002$ \\
        & \proposed (with sampling, no invariance)       & $0.634 \pm 0.005$ & $0.434 \pm 0.003$ \\
        & \proposed (with sampling, with invariance) & $0.855 \pm 0.005$ & $0.463 \pm 0.002$ \\
      \midrule
      \multirow{4}{*}{PsychENCODE}
        & Centralized SAILER (with sampling, no invariance)        & $0.605 \pm 0.002$ & $0.411 \pm 0.002$ \\
        & Centralized SAILER (with sampling, with invariance)        & $0.624 \pm 0.002$ & $0.423 \pm 0.002$ \\
        & \proposed (with sampling, no invariance)     & $0.629 \pm 0.001$ & $0.301 \pm 0.002$ \\
        & \proposed (with sampling, with invariance) & $0.781 \pm 0.001$ & $0.378 \pm 0.001$ \\
      \midrule
      \multirow{4}{*}{PBMC}
        & Centralized SAILER (with sampling, no invariance)        & $0.601 \pm 0.007$ & $0.107 \pm 0.005$ \\
        & Centralized SAILER (with sampling, with invariance)        & $0.674 \pm 0.006$ & $0.156 \pm 0.003$ \\
        & \proposed (with sampling, no invariance)       &$0.620 \pm 0.007$ & $0.144 \pm 0.005$\\
        & \proposed (with sampling, with invariance) & $0.742 \pm 0.005$ & $0.230 \pm 0.003$ \\
      \bottomrule
    \end{tabular}
  \end{threeparttable}
\end{table}

\subsection{Wall-clock runtime analysis.}
To complement the communication analysis in Section~\ref{sec:real_world_performance}, we report a detailed wall-clock
runtime breakdown for \proposed in Table~\ref{tab:runtime_overall}. We used Nvidia RTX A6000 48GB for Brain PFC and PsychENCODE dataset centralized and federated training, and Nvidia RTX 3090 24GB for PBMC dataset centralized and federated training. The key observations are:

\textbf{Measurement protocol.}
For \proposed, we log (i) the sum over rounds of the slowest client's local training time
(``client compute, critical path''), (ii) the total federated training time after all clients
have finished data loading (``FL training time''), and (iii) the non-training portion of
the federated phase, which captures communication, server-side aggregation, and
cross-client synchronization overheads. For centralized SAILER, we report total training
time on a single GPU, excluding initial data loading.

\textbf{Client computation vs.\ communication/overhead.}
At the sampling rates used in our main experiments
($\rho = 0.15$ for Brain PFC, $\rho = 0.20$ for PsychENCODE, and $\rho = 0.30$ for PBMC),
critical-path client computation accounts for only about $20$--$50\%$ of the federated
training time:
\begin{itemize}
\item Brain PFC ($\rho = 0.15$): $20.1$\,min compute vs.\ $41.8$\,min total;
\item PsychENCODE ($\rho = 0.20$): $1.8$\,h compute vs.\ $8.8$\,h total;
\item PBMC ($\rho = 0.30$): $16.4$\,min compute vs.\ $48.7$\,min total.
\end{itemize}
The remaining $50$--$80\%$ of the wall-clock time is attributable to communication and
system overheads, quantitatively confirming that communication and cross-client orchestration,
rather than raw client compute, are the main bottlenecks in federated scATAC-seq.

\textbf{Effect of sampling rate $\rho$.}
Varying $\rho$ exhibits the expected trade-off between runtime and accuracy. Increasing
$\rho$ generally increases both non-training time and total training time, since more
parameters must be transmitted and aggregated each round, while providing diminishing
or even negative returns in clustering performance beyond $\rho \approx 0.2$--$0.3$
(see Tables~\ref{tab:federated_comparison} and ~\ref{tab:runtime_overall}). At the optimal values
($\rho = 0.15/0.20/0.30$), \proposed simultaneously achieves the best ARI, an approximate
$80\%$ reduction in communication, and substantial reductions in end-to-end
wall-clock time relative to centralized SAILER.

\begin{table}[ht]
  \centering
  \caption{\textbf{Wall-clock runtime of \proposed across sampling rates $\rho$ on three real scATAC-seq
  datasets.} We report mean and max client training time, mean and max data-loading time, total
  federated training time (excluding initial data loading), and non-training time (communication
  and system overhead).}
  \label{tab:runtime_overall}
  \small
  \setlength{\tabcolsep}{4pt}
  \begin{threeparttable}
    \begin{tabular}{llcccccc}
      \toprule
      Dataset & $\rho$ &
      \multicolumn{2}{c}{Client training (min)} &
      \multicolumn{2}{c}{Data loading (min)} &
      \multicolumn{1}{c}{Total training} &
      \multicolumn{1}{c}{Non-training} \\
      \cmidrule(lr){3-4} \cmidrule(lr){5-6}
      & & mean & max & mean & max & (min, excl.\ load) & (min) \\
      \midrule
      \multirow{10}{*}{Brain PFC}
        & 0.10 & 10.75 & 23.56 & 0.85 & 0.87 & 36.28 & 12.72 \\
        & 0.15 &  9.79 & 20.14 & 0.90 & 1.05 & 41.80 & 21.66 \\
        & 0.20 & 10.08 & 22.51 & 0.82 & 0.92 & 43.42 & 20.91 \\
        & 0.25 &  9.32 & 21.20 & 0.83 & 0.83 & 47.87 & 26.67 \\
        & 0.30 &  9.57 & 20.76 & 0.90 & 0.92 & 52.32 & 31.55 \\
        & 0.35 &  9.68 & 20.84 & 0.98 & 0.98 & 63.00 & 42.16 \\
        & 0.40 &  9.24 & 20.23 & 0.85 & 0.87 & 70.07 & 49.84 \\
        & 0.45 &  9.82 & 21.54 & 1.06 & 1.07 & 72.40 & 50.86 \\
        & 0.50 &  9.74 & 22.32 & 1.08 & 1.08 & 79.43 & 57.11 \\
        & 1.00 & 12.69 & 28.74 & 0.82 & 0.83 & 89.20 & 60.46 \\
      \midrule
      \multirow{10}{*}{PsychENCODE}
        & 0.10 & 42.51 &  95.00 & 1.06 & 1.13 & 327.88 & 232.88 \\
        & 0.15 & 42.59 &  97.45 & 4.10 & 4.75 & 424.97 & 327.51 \\
        & 0.20 & 45.84 & 108.45 & 2.35 & 2.73 & 527.78 & 419.33 \\
        & 0.25 & 56.15 & 127.89 & 4.92 & 5.77 & 620.25 & 492.36 \\
        & 0.30 & 51.97 & 119.35 & 4.53 & 5.23 & 648.83 & 529.49 \\
        & 0.35 & 62.81 & 144.92 & 2.02 & 2.27 & 730.10 & 585.17 \\
        & 0.40 & 66.26 & 153.27 & 4.51 & 5.22 & 739.72 & 586.45 \\
        & 0.45 & 73.76 & 171.84 & 7.75 & 8.97 & 835.80 & 663.96 \\
        & 0.50 & 73.92 & 171.78 & 0.85 & 0.87 & 826.17 & 654.39 \\
        & 1.00 & 50.29 & 110.28 & 1.72 & 1.73 & 774.82 & 664.54 \\
      \midrule
      \multirow{10}{*}{PBMC}
        & 0.10 & 10.35 & 19.99 & 0.96 & 1.03 & 34.97 & 14.98 \\
        & 0.15 & 11.28 & 20.92 & 0.98 & 1.08 & 39.20 & 18.28 \\
        & 0.20 &  9.17 & 17.16 & 1.06 & 1.17 & 42.47 & 25.31 \\
        & 0.25 &  9.10 & 17.78 & 1.11 & 1.23 & 44.77 & 26.98 \\
        & 0.30 &  8.95 & 16.43 & 1.24 & 1.45 & 48.72 & 32.29 \\
        & 0.35 &  8.79 & 16.80 & 1.17 & 1.35 & 51.38 & 34.58 \\
        & 0.40 &  8.68 & 16.88 & 1.31 & 1.47 & 54.10 & 37.22 \\
        & 0.45 &  8.67 & 16.69 & 1.07 & 1.18 & 55.28 & 38.59 \\
        & 0.50 &  8.82 & 15.61 & 1.23 & 1.43 & 56.70 & 41.09 \\
        & 1.00 & 10.13 & 19.75 & 0.78 & 0.83 & 71.18 & 51.44 \\
      \bottomrule
    \end{tabular}
    \begin{tablenotes}[para,flushleft]
      \footnotesize
      \item  Centralized SAILER training: Brain PFC: 609 min, PsychENCODE: 3252 min, PBMC: 372 min.
    \end{tablenotes}
  \end{threeparttable}
\end{table}
\end{document}